\DeclareMathOperator*{\Exp}{\mathbb{E}}
\theoremstyle{plain}
\newtheorem{theorem}{Theorem}[section]
\newtheorem{prop}[theorem]{Proposition}
\newtheorem{lemma}[theorem]{Lemma}
\newtheorem{coro}[theorem]{Corollary}
\theoremstyle{definition}
\newtheorem{assumption}[theorem]{Assumption}
\theoremstyle{remark}
\begin{document}

\twocolumn[

\runningauthor{Mazzetto, Esfandiarpoor, Singirikonda, Upfal, Bach }

\runningtitle{An Adaptive Method for Weak Supervision with Drifting Data}

\aistatstitle{An Adaptive Method for \\ Weak Supervision with Drifting Data}

\aistatsauthor{ Alessio Mazzetto \And Reza Esfandiarpoor \And Akash Singirikonda  \And Eli Upfal
}

\aistatsauthor{Stephen Bach}

\aistatsaddress{ \\Brown University }]

\begin{abstract}
We introduce an adaptive method with formal quality guarantees for weak supervision in a non-stationary setting. Our goal is to infer the unknown labels of a sequence of data by using weak supervision sources that provide independent noisy signals of the correct classification for each data point. This setting includes crowdsourcing and programmatic weak supervision. We focus on the non-stationary case, where the accuracy of the weak supervision sources can drift over time, e.g., because of changes in the underlying data distribution. Due to the drift, older data could provide misleading information to infer the label of the current data point. Previous work relied on a priori assumptions on the magnitude of the drift to decide how much data to use from the past. In contrast, 
our algorithm does not require any assumptions on the drift, and it adapts based on the input by dynamically varying its window size. In particular, at each step, our algorithm estimates the current accuracies of the weak supervision sources by identifying a window of past observations that guarantees a near-optimal minimization of the trade-off between the error due to the variance of the estimation and the error due to the drift. Experiments on synthetic and real-world labelers show that our approach adapts to the drift.
\end{abstract}

\section{Introduction}
In order to efficiently create training data for machine learning, programmatic weak supervision~\citep{ratner:neurips16, ratner2017snorkel, zhang:arxiv22} estimates the accuracy of multiple noisy sources of labels without access to ground truth.
Given a set of \emph{labeling functions} that vote on the true label for each unlabeled example, the goal is to infer the latent ground truth.
Once inferred, these labels can be used as training data.
In this paper, we study the \emph{non-stationary} setting, in which the accuracy of each labeling function can drift over time because of changes in the underlying data.
For example, in an image classification task, latent subclasses that make up each class might shift over time.
If the task is to classify animals into categories like ``mammal'' and ``bird,'' the accuracy of a weak labeler that looks for attributes like wings might change in accuracy if animals like bats become more or less prevalent.
We ask the question, ``Under what conditions can we detect changes in the accuracies of weak labelers over time and bound their error without access to ground truth?''

Programmatic weak supervision is important for creating training data sets when resources are limited.
It can be used for natural language processing~\citep{safranchik:aaai20,yu:naacl21,zhang:neurips21}, computer vision~\citep{varma:neurips17,chen:iccv19,fu2020fast}, tabular data~\citep{chatterjee:aaai20,arachie:jmlr21}, and other modalities~\citep{sala:neurips19,shin:iclr22}.
It has also enabled machine learning applications in industry~\citep{bach:sigmod19-industrial, bringer:deem19, suri:vldb20} and academia~\citep{callahan:npjdigmed19,fries:nc2021}.
Even when prompting or fine-tuning large pre-trained models, weak supervision can unlock improved quality and enable adaptation to new tasks~\citep{smith:arxiv22, arora:arxiv22,yu:acldemo23}.

The central modeling challenge in programmatic weak supervision is estimating the probabilistic relationships among the votes of the weak labelers and the latent ground truth.
It is hard because, without access to ground truth labels, the observed votes can be explained in many different ways.
Perhaps the votes tend to agree because they are all accurate labelers.
Or perhaps they are all inaccurate.
Perhaps there are correlations among the votes caused by relying on similar decision processes.
If one assumes that the votes are conditionally independent given the true label and that the examples are \emph{independent}, and \emph{identically distributed} (i.i.d.), this is equivalent to
 the Dawid-Skene model~\citep{dawid1979maximum} that is the basis for many related works in crowdsourcing~\citep{raykar:jmlr10,liu:neurips12,parisi:pnas14,joglekar:icde15,zhang:jmlr16}.
Many works on crowdsourcing and weak supervision have relaxed the conditional independence assumption in various ways to account for a wide range of weak labelers~\citep{balasubramani:colt15,balasubramani:neurips15,balasubramani:neurips16,bach:icml17, varma:icml19, arachie:aaai19, mazzetto:aistats21, mazzetto:icml21, arachie:jmlr21}.

With two exceptions discussed below, all these aforementioned works assume that the examples are drawn independently from a fixed distribution. This is a restrictive assumption when data is collected over time, and it is natural to observe a change, or \emph{drift}, in the distribution of the examples. In our work, we relax the identically distributed assumption, and assume only that the examples are independent. This introduces a trade-off: to obtain a good estimate at the current time, using more past examples provides more data, which might result in a better estimate if that data is similarly distributed, but might harm the estimate if the window includes a significant distribution drift.

Much prior work has addressed the problem of drifting data in the supervised learning setting~\citep{gama:acmsurvey14,lu:tkde18}.
These methods generally rely on labeled data that is unavailable in the weakly supervised setting.
Another broad line of work has viewed drift detection as an unsupervised problem, looking for non-stationarity in arbitrary distributions~\citep{barry:jasa93,killick:jasa12,truong:sigproc20}.
These methods generally assume a prior distribution on the locations in time of drift.
That prior can be either defined explicitly in a Bayesian framework or implicitly via a heuristic cost function that penalizes the trade-off between better fitting the data and finding more drift points.
In a similar vein, previous works on relaxing the i.i.d.\ assumption with multiple noisy labelers have placed assumptions on how much their accuracies can drift~\citep{bonald2016streaming,fu2020fast}.
In contrast, our goal is to estimate the labelers' accuracies, without prior assumption on the drift.
The lack of any assumptions means that each individual sample can come from its own distribution.
 This is a very challenging problem, as the drift is unknown and we cannot estimate the drift from the data, since we have access to only a single sample from each distribution.

\textbf{Our Contributions.}
We introduce the first \emph{adaptive} algorithm for programmatic weak supervision in the presence of drift with formal guarantees on the quality of its parameter estimates.
The advantage of an adaptive algorithm is that it can react in a rigorously principled way to changes in the accuracies of the weak labelers as they occur (as opposed to having to make an assumption on how much drift will occur).
When the underlying process is stationary, it can accumulate as much data as possible by using a large window of time in order to best estimate the accuracies of the labelers.
When drift does occur, it can minimize the drift error by using a smaller window with the most recent (and most relevant) data to estimate the accuracies.

Our method selects the amount of data to use based on differences in the rates of agreement among the labelers.
We derive a principled decision rule for this selection and provide a rigorous analysis that bounds the resulting error of the estimated accuracies of the labelers. 
Our novel bound separates the statistical error of estimating the parameters from the error caused by possible drift.
This analysis enables the algorithm to select a close-to-optimal trade-off to minimize the worst-case error. 

The conceptual difference between our approach and all previous work on weak supervision with drifting data is that we do not rely on prior information about the drift, or try to learn the drift from the data (both unrealistic in many applications). Instead, at each time step, our algorithm compares its estimation obtained using different window sizes and uses this information to detect drift and adjust the window size for the decision at that step. We analytically prove that this information is sufficient to allow the algorithm to efficiently adapt to drift in distribution, without explicitly estimating the magnitude of the drift.


We demonstrate the functionality and the advantage
of our algorithm over fixed-window-size strategies in
several experimental settings, including synthetic data,
image recognition, and video classification. The results show
 that our algorithm adapts to the drift as it occurs, dynamically selecting the amount of data (window size) to use in an effective way. Unlike fixed-window-size strategies, our approach consistently maintains high accuracy as it adapts to the changing drift.



\section{Problem Statement}
Given a vector $\bm{v} \in \mathbb{R}^q$, let $\lVert \bm{v} \rVert_{\infty} = \max_{1 \leq i \leq q}|v_i|$. Similarly, given a matrix $\bm{C} \in \mathbb{R}^{q\times q}$, we define $\lVert \bm{C} \rVert_{\infty} = \max_{i,j}|C_{ij}|$. A binary classification task is specified by a function $y:\mathcal{X} \mapsto \mathcal{Y}$, where $\mathcal{X} = \mathbb{R}^d$ is the classification domain and $\mathcal{Y} = \{ -1,1\}$ is the label space.  Given $x$, we would like to infer its label $y(x)$.
We assume access to $n$ weak labeling functions $\ell_1,\ldots,\ell_n$, where each $\ell_i : \mathcal{X} \rightarrow \mathcal{Y}$ provides a tentative labeling of the item $x$. For example, each weak labeling function can be a classifier that was trained for a related task, or a decision rule based on a simple programmatic criterion. The weak labeling functions $\ell_1(x), \ldots, \ell_n(x)$ are the only information sources for the labels in our classification task.

We receive a sequence of examples $X_1, X_2, \ldots $ over time.
For any given time $t$, our goal is to obtain an accurate estimate of the correct label $y(X_t)$ of $X_t$ given the weak labelling functions $\ell_1,\ldots,\ell_n$ and the input sequence up to time $t$, $X_1,\ldots,X_t$. 

We adapt the standard assumptions used in analyzing weak supervision, in particular crowdsourcing, with no drift~\citep{dawid1979maximum,ratner2017snorkel}. We first assume that the input sequence $( X_t)_{t \in \mathbb{N}}$ is an independent, but not identically distributed stochastic process. Any finite subset of its random variables are mutually independent, and each $X_t$ is sampled from a distribution $D_{t}$ over $\mathcal{X}$ that can drift over time. Formally,  this is stated with the following assumption.
\begin{assumption}
\label{assu:sample-independents}
For any finite $t \geq 1$, the input vector 
$(X_1,\dots,X_t)$ is distributed as $\prod_{i=1}^t D_{i}$. 
\end{assumption}
The second assumption is that the weak labelers have independent errors conditioned on the true label. 
\begin{assumption}
\label{assu:errors-independents}
For any $t \geq 1$ and $i\neq j$, 
for $X_t \sim \mathcal{D}_t$, we have that the events 
$\{ \ell_i(X_t) \neq y(X_t) \}$ and $ \{ \ell_j(X_t) \neq y(X_t) \}$ are independent given $y(X_t)$.
\end{assumption} 
This conditional independence is widely adopted across domains such as programmatic weak supervision and crowdsourcing \citep{dawid1979maximum, liu:neurips12, parisi:pnas14, joglekar:icde15, zhang:jmlr16}, and methods leveraging this assumption consistently demonstrate robust performance in practical applications \citep{ratner2017snorkel}. Recent research has explored alternative, weaker assumptions, such as access to a limited amount of labeled data \citep{arachie:aaai19, arachie:jmlr21, mazzetto:aistats21, mazzetto:icml21}, or parametric assumptions on the joint distributions of the true labels and outputs from weak labelers \citep{fu2020fast, ratner2019training}.
In Appendix~\ref{app:relaxation}, we detail how our method can be extended to incorporate one of these alternative assumptions.


We let the accuracy of the weak labeler $i$ at time $t$ be
\begin{align}
\label{eq:p-def}
    p_i(t) \doteq  \Pr_{X \sim D_{t}}\left( \ell_i(X) = y(X) \right) \enspace .
\end{align}
The value $p_i(t) \in [0,1]$ is the probability that the weak labeler $\ell_i$ is correct with a sample $X_t \sim D_t$. The accuracy probability $p_i(t)$ is a function of the input distribution $D_{t}$ and therefore may drift in time. We let $\bm{p}(t) = ( p_1(t), \ldots, p_n(t))$.


\textbf{Example.} Assume that the classification task is to distinguish whether an input image contains a cat or a dog.  Let $\ell_{\mathrm{tail}}$ be a weak labeler that detects whether an animal has a tail or not. This weak labeler provides no signal if we only observe images of cats and dogs that both have tails, however, the relevance of this classifier can change over time: if the probability of observing a dog without a tail (e.g., a bulldog)  grows over time, this weak labeler can provide a stronger signal towards the right classification.
{Our goal is to adapt dynamically to the change in accuracy of the weak labelers.} 

\textbf{Sources of Drift.}
For concreteness, we analyze our algorithm with respect to a drift in the input distribution over $\mathcal{X}$ (also referred to as \emph{covariate shift}). However, our analysis applies to a more general case, since it only relies on the variation of the accuracy of the weak labelers $p_i(t)$, and is agnostic to the underlying cause of this variation.

As an example, in addition to drift in the input distribution, we can also allow a change in the functionality of the labeling functions. 
For example, a human labeler can get tired and make more mistakes, or a sensor's accuracy can be affected by a change of light or temperature. 
Formally, instead of a labelling function $\ell_i(X)$ we have a family of labelling functions 
$\{ \ell_{i,t}(X)~|~t\geq 1\}$. Equation (\ref{eq:p-def}) is replaced with 
$
    p_i(t) \doteq  \Pr_{X \sim D_{t}}\left( \ell_{i,t}(X) = y(X) \right),$
and the algorithm and analysis are the same.

In a similar manner, our analysis can be extended to the setting where the binary classification task $y(x)$ also changes over time (referred to as \emph{concept drift}), by replacing $y(x)$ with a time-dependent function $y_t(x)$. This extension requires modifying \Cref{assu:errors-independents} and Equation~\eqref{eq:p-def} accordingly, but it does not change our algorithm or analysis.

\section{Related Work}
\label{sec:related}

To our knowledge, only two prior works have considered relaxations of the identical distribution (no drift) assumption in learning from multiple noisy sources of labels.
They both require assumptions on how much the accuracies of the labelers can change over time.
The first ~\citep{bonald2016streaming} assumes that the accuracy of the weak labelers can change at most by a constant at each step, i.e., there exists $\Delta > 0$, known a priori, such that $\lVert \bm{p}(t) - \bm{p}(t+1) \rVert_{\infty} \leq \Delta$ for all $t \geq 1$. 
The second~\citep{fu2020fast} assumes that the KL divergence between two consecutive distributions is upper bounded by a constant $\Delta$.
These are similar assumptions: an upper bound on the magnitude of the drift allows these methods to determine before execution how much information to use from the past.

These algorithms are impractical as the value  $\Delta$ is unknown in practice, and they cannot adapt to changes in the rate of drift over time.
If the algorithm overestimates the drift, then it will use a smaller amount of data than it should, resulting in a greater statistical error in its estimates of the labelers' accuracies.
If it underestimates the drift, then the algorithm will use too much data and incur a large error due to the drift.
In contrast, in this work, our goal is to dynamically choose the window size as a function of the observed votes without requiring any prior assumptions on the magnitude of the drift.
In other words, our approach is to \emph{adapt} to the drift as it occurs. The adaptivity to the drift is important to capture the changes in the data distributions. As we will see in \Cref{subsection:synthetic}, algorithms that rely on a fixed choice of $\Delta$ are severely limited when the rate of drift itself is changing. 

There is a vast literature that addresses the challenges of coping with non-stationary data in numerous different settings. 
In our work, we focus on a drift setting where we observe data from a non-stationary distribution, and we have access to a single sample at each time step. 
Within this drift setting, a relevant sequence of works \citep{bartlett1992learning,long1998complexity,mohri2012new,hanneke2019statistical}) has studied the supervised learning problem,
assuming some known upper bound on the drift.
The minimax error for density estimation with distribution drift was studied by \cite{mazzetto2023nonparametric}, again with some a priori assumption on the drift rate.
Recent work provides an adaptive algorithm for agnostic learning of a family of functions with distribution drift \citep{mazzetto2023adaptive}, and analogous results were proven for discrete distribution estimation \citep{mazzetto2023nonparametric,mazzetto2024improved}, model selection \cite{han2024model}, and vector quantization \citep{mazzetto2025center}.
While these works study similar drift settings, their results do not directly apply to our weak supervision setting.  Our work is the first to provide an adaptive algorithm for weak supervision and crowdsourcing in a non-stationary setting and without any prior assumptions on the drift.

A process that reveals just one sample per step is more natural but harder to handle than processes that reveal a set of inputs in each step \citep[e.g.,][]{bai2022adapting,zhang:arxiv22}. In the latter scenario, an algorithm has multiple samples from each distribution to learn the drift error \citep{mohri2012new,awasthi2023theory}.
We also note that the non-stationary setting was also extensively studied in reinforcement learning \citep[e.g.,][]{auer2019adaptively,chen2019new,wei2021non}.
That setting significantly differs from ours, as the goal is to minimize the regret, and the distribution of the samples is also affected by the decisions taken by a policy on the environment.

Finally, we would like to remark that there is a vast literature on the drift detection problem in both learning and data mining \citep[e.g.,][]{gama:acmsurvey14,lu:tkde18, agrahari2022concept,yu2022meta, li2022ddg, yu2023type}. Unlike these works, our goal is to provably decide over a window of data depending on the drift within that window.




\section{Preliminary Results}
\label{sec:preliminary}
Our work builds on the following results that study the problem in settings where the accuracy probabilities are known, or there is no drift in the input distribution.

Assume first that  the accuracy $\bm{p}(t) = (p_1(t),\ldots,p_n(t))$ of the weak labelers at any time $t\geq 1$ are known. With \Cref{assu:errors-independents}, it is known that the optimal aggregation rule for classifying $X_t$ is a weighted majority vote of $\ell_1(X_t),\ldots,\ell_n(X_t)$, where the weights are a function
of $\bm{p}(t)$ \citep{nitzan1982optimal}. 
In particular, consider the family of weighted majority classifiers  $f_{\bm{w}} : \mathcal{X} \mapsto \mathcal{Y}$ with weights $\bm{w} = (w_1,\ldots,w_n)$, i.e.,  $f_{\bm{w}}(x) = \mathrm{sign}\left( \sum_{i=1}^n w_i \ell_i(x) \right)$. The optimal aggregation of $\ell_1(X_t), \ldots, \ell_n(X_t)$ is given by $f_{\bm{w}^*(t)}$ where 
\begin{align}
\label{eq:w^*}
\bm{w}^*(t) = \left( \ln\left(\frac{p_1(t)}{1-p_1(t)}\right), \ldots,  \ln\left(\frac{p_n(t)}{1-p_n(t)}\right) \right) .
\end{align}
The above result implies that under \Cref{assu:errors-independents}, the knowledge of the weak labelers' accuracies is sufficient to obtain the optimal aggregation rule.
In weak supervision and crowdsourcing applications, the accuracy probabilities of the weak labelers are unknown.  
Several methods for estimating $\bm{p}(t)$ using previous samples, have been proposed
in the literature in a setting without distribution drift~\citep{dawid1979maximum,ghosh2011moderates,zhang:jmlr16}. It is known that under mild assumptions, if we have access to enough identically distributed samples, it is possible to accurately estimate the accuracies of the weak labelers, and different minimax optimal methods have been proposed in this setting \citep{zhang:jmlr16,bonald2017minimax}. Our contribution is an adaptive method that allows for this estimation in a non-stationary setting without any prior assumption on the drift.

Our estimation method is based on the technique developed by \cite{bonald2017minimax} that uses the weak labelers' \emph{correlation matrix} to estimate the expertise of each weak labeler in a no-drift setting. In particular, for each $t \geq 1$, we let the correlation matrix $\bm{C}(t) \in [-1,1]^{n \times n}$ be defined as $
    C_{ij}(t) = \Exp_{X \sim D_{t}} \left[  \ell_i(X) \ell_j(X) \right]$ for all $(i,j) \in \{1,\ldots,n\}^2$ 
When there is no distribution drift and under mild assumptions on the bias of the estimates of the weak supervision sources, it is possible to show that a good estimation of the correlation matrix $\bm{C}(t)$ implies a good estimation of the accuracies $\bm{p}(t)$. The assumption on the bias is formalized as follows.
\begin{assumption}
\label{assu:bias}
There exists $\tau > 0$ such that $p_i(t) \geq \frac{1}{2} + \tau$ for all $t \geq 1$ and $i \in \{1,\ldots,n\}$.
\end{assumption}
With this assumption, the following result holds.
\begin{prop}[Lemma 9 of \cite{bonald2017minimax}]
\label{prop:covariance-to-expertise}
Let $\bm{C} \in [-1,1]^{n \times n}$ be a matrix such that $\lVert \bm{C}  - \bm{C}(t)\rVert_{\infty} \leq \epsilon$, and assume $n \geq 3$. Let Assumptions~\ref{assu:sample-independents},~\ref{assu:errors-independents} and~\ref{assu:bias} hold. Then, there exists an estimation procedure that given in input $\bm{C}$, it outputs $\hat{\bm{p}} = (\hat{p}_1,\ldots,\hat{p}_n)$ such that $\lVert \bm{p}(t) - \hat{\bm{p}}\rVert_{\infty} \leq (5/2)\epsilon/\tau^2$.
\end{prop}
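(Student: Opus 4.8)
The plan is to make explicit how the correlation matrix encodes the accuracies and then to invert that relationship in a numerically stable way. Write $\mu_i \doteq 2p_i(t)-1$, so that \Cref{assu:bias} gives $\mu_i \geq 2\tau > 0$ for every $i$ (and $\tau \leq 1/2$ since $p_i(t)\leq 1$). Conditioning on $y(X)$ and invoking \Cref{assu:errors-independents}, for $i\neq j$ we get $\Exp[\ell_i(X)\ell_j(X)\mid y(X)] = \Exp[\ell_i(X)\mid y(X)]\,\Exp[\ell_j(X)\mid y(X)]$. Recalling that $p_i(t)$ is the class-symmetric accuracy underlying the optimal weighted vote in \eqref{eq:w^*}, we have $\Exp[\ell_i(X)\mid y(X)] = (2p_i(t)-1)\,y(X) = \mu_i\,y(X)$, and since $y(X)^2 = 1$ the conditional product equals $\mu_i\mu_j$ irrespective of the label. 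Hence $C_{ij}(t)=\mu_i\mu_j$ for all $i\neq j$, a rank-one structure, while the diagonal is identically $1$ and carries no information. Recovering $\bm{p}(t)$ thus amounts to factoring this rank-one structure from the off-diagonal entries.

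Next I would recover each $\mu_i$ by a method-of-moments identity that needs three labelers, which is exactly why $n\geq 3$ is assumed. Fix $i$ and pick any two distinct indices $j,k\neq i$ (for robustness, those maximizing $\lvert C_{jk}\rvert$). From $\mu_i^2 = (\mu_i\mu_j)(\mu_i\mu_k)/(\mu_j\mu_k) = C_{ij}(t)C_{ik}(t)/C_{jk}(t)$, the procedure sets $\hat\mu_i \doteq \sqrt{\max\{0,\ C_{ij}C_{ik}/C_{jk}\}}$ from the observed matrix $\bm{C}$ and returns $\hat p_i = (1+\hat\mu_i)/2$, clipped to $[0,1]$. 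Taking the nonnegative root is justified because \Cref{assu:bias} guarantees the true $\mu_i$ is positive.

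The technical core — and the main obstacle — is bounding $\lvert\hat\mu_i-\mu_i\rvert$ with the correct $\tau^{-2}$ scaling. Write $C_{ij}=\mu_i\mu_j+\delta_{ij}$ with $\lvert\delta_{ij}\rvert\leq\epsilon$, valid since $\lVert\bm{C}-\bm{C}(t)\rVert_\infty\leq\epsilon$ and $C_{ij}(t)=\mu_i\mu_j$. A direct computation gives
\[
\hat\mu_i^2 - \mu_i^2 = \frac{\mu_i(\mu_j\delta_{ik}+\mu_k\delta_{ij}-\mu_i\delta_{jk}) + \delta_{ij}\delta_{ik}}{\mu_j\mu_k + \delta_{jk}}.
\]
Here is the crucial point: bounding $\lvert\hat\mu_i^2-\mu_i^2\rvert$ crudely and then writing $\lvert\hat\mu_i-\mu_i\rvert=\lvert\hat\mu_i^2-\mu_i^2\rvert/(\hat\mu_i+\mu_i)\leq\lvert\hat\mu_i^2-\mu_i^2\rvert/(2\tau)$ would spend a spurious extra factor of $\tau$ and yield a useless $\tau^{-3}$ bound. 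Instead I factor one power of $\mu_i$ out of the numerator before dividing, so it cancels against $\hat\mu_i+\mu_i\geq\mu_i$:
\[
\lvert\hat\mu_i-\mu_i\rvert = \frac{\lvert\hat\mu_i^2-\mu_i^2\rvert}{\hat\mu_i+\mu_i} \leq \frac{\lvert\hat\mu_i^2-\mu_i^2\rvert}{\mu_i} = \frac{\bigl\lvert \mu_j\delta_{ik}+\mu_k\delta_{ij}-\mu_i\delta_{jk}+\delta_{ij}\delta_{ik}/\mu_i\bigr\rvert}{\lvert\mu_j\mu_k+\delta_{jk}\rvert}.
\]
Using $\mu_l\in[2\tau,1]$ bounds the numerator by $3\epsilon+\epsilon^2/(2\tau)$ and the denominator below by $4\tau^2-\epsilon$; this is precisely where the clean $\tau^{-2}$ scaling emerges.

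Finally I would assemble the bound by a trivial case split that also disposes of all degeneracies. In the regime $\epsilon\leq\tau^2$ we have $C_{jk}\geq 3\tau^2>0$ and likewise $C_{ij},C_{ik}>0$, so no clipping occurs and the displayed identity applies; since $\epsilon^2/(2\tau)\leq\epsilon/4$ the numerator is at most $(13/4)\epsilon$, giving $\lvert\hat\mu_i-\mu_i\rvert\leq\tfrac{13}{12}\,\epsilon/\tau^2$ and hence $\lvert\hat p_i-p_i\rvert=\lvert\hat\mu_i-\mu_i\rvert/2$ comfortably below $(5/2)\epsilon/\tau^2$. In the complementary regime $\epsilon>\tau^2$ the target $(5/2)\epsilon/\tau^2>5/2>1\geq\lvert\hat p_i-p_i\rvert$ holds automatically by the clipping of $\hat p_i$ to $[0,1]$, which also absorbs any case where $C_{jk}\leq 0$. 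As the argument is uniform in $i$, taking the maximum over $i$ yields $\lVert\bm{p}(t)-\hat{\bm{p}}\rVert_\infty\leq(5/2)\epsilon/\tau^2$, as claimed.
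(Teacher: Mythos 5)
Your proposal is correct, and it supplies something the paper itself does not: the paper offers no proof of this proposition, importing it wholesale as Lemma~9 of \cite{bonald2017minimax}, stating only the key identity $C_{ij}(t)=(2p_i(t)-1)(2p_j(t)-1)$ as intuition and implementing the resulting procedure in lines~8--12 of Algorithm~\ref{alg:finalalgorithm}. Your estimator $\hat{p}_i=\bigl(1+\sqrt{C_{ij}C_{ik}/C_{jk}}\bigr)/2$ is exactly that triplet method, and your error analysis holds up: the algebraic identity for $\hat{\mu}_i^2-\mu_i^2$ checks out, factoring one power of $\mu_i$ out of the numerator before dividing by $\hat{\mu}_i+\mu_i\geq\mu_i$ is precisely the move that secures the $\tau^{-2}$ rather than $\tau^{-3}$ rate, and the case split at $\epsilon=\tau^2$ (using $\tau\leq 1/2$ so that $\epsilon^2/(2\tau)\leq\epsilon/4$, lower-bounding the denominator by $3\tau^2$, and letting clipping absorb the regime $\epsilon>\tau^2$ together with all sign degeneracies) legitimately yields $|\hat{p}_i-p_i|\leq(13/24)\epsilon/\tau^2$, comfortably inside the stated $(5/2)\epsilon/\tau^2$; your choice of $j,k$ maximizing $|C_{jk}|$ only strengthens the denominator bound, and it matches the argmax selection in line~10 of the algorithm.

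One remark for precision rather than a correction: your step $\Exp[\ell_i(X)\mid y(X)]=(2p_i(t)-1)\,y(X)$ silently upgrades the marginal accuracy $p_i(t)$ of Equation~\eqref{eq:p-def} to a class-conditional one (the one-coin model). Under \Cref{assu:errors-independents} as literally stated, one only gets $C_{ij}(t)=\Exp\bigl[(2q_i^{y}-1)(2q_j^{y}-1)\bigr]$ with class-dependent accuracies $q_i^{+},q_i^{-}$, and this factorizes as $(2p_i(t)-1)(2p_j(t)-1)$ only when the accuracies are class-symmetric. You flag this yourself, and the paper commits the identical step in its own computation of $C_{ij}(k)$ in the proof of \Cref{apx-prop:drift-covariance-accuracy}, so this is a convention shared with the source rather than a gap in your argument relative to it.
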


The intuition behind the result of \Cref{prop:covariance-to-expertise} is that for a time $t$, for $i \neq j$, the entry $C_{ij}(t)$ is proportional to how much the weak labelers $\ell_i$ and $\ell_j$ agree, and by using \Cref{assu:errors-independents} on the conditional independence of the error of the weak labelers, it can be written as a function of $\bm{p}(t)$ since $
C_{ij}(t) = (2p_i(t)-1)(2p_j(t)-1)
$.
The proposition demonstrates that it is feasible to retrieve $\bm{p}(t)$  by using an estimate of the values $C_{ij}(t)$ for $i \neq j$. Note that both vectors $\bm{p}(t)$ and $\bm{1} - \bm{p}(t)$ would satisfy the constraints given by $C_{ij}(t)$ for $i \neq j$. \Cref{assu:bias} is used to differentiate between those two symmetrical scenarios where two weak labelers are more inclined to agree when they are both likely to be correct or both likely to be incorrect.
The algorithm for the non-drift case presented by \cite{bonald2017minimax}, and the algorithm presented here for the drift case are oblivious to the value of $\tau$.


\section{Algorithm}
As explained in the previous section, our method revolves around the estimation of the correlation matrix $\bm{C}(t)$ at the current time $t$ in order to use \Cref{prop:covariance-to-expertise} and obtain an estimate of $\bm{p}(t)$. 
All proofs are deferred to the supplementary material.
We define $\hat{\bm{C}}^{[r]}(t) \in [-1,1]^{n \times n}$ as $
\hat{\bm{C}}^{[r]}(t) \doteq \frac{1}{r} \sum_{k=t-r+1}^t ( \ell_1(X_k), \ldots, \ell_n(X_k))^T( \ell_1(X_k), \ldots, \ell_n(X_k)) $
The matrix $\hat{\bm{C}}^{[r]}(t) \in [-1,1]^{n \times n}$ is the empirical correlation matrix computed using the latest $r$ samples $X_{t-r+1},\ldots,X_{t}$. This matrix provides the following guarantee on the estimation of $\bm{C}(t)$.
\begin{lemma}
\label{lemma:error-decomposition}
Let $t \geq 1$, let $\delta \in (0,1)$, and let \Cref{assu:errors-independents} hold. The following inequality holds with probability at least $1-\delta$:
\begin{align}
\label{eq:error-decomposition}
\lVert \bm{C}(t) - \hat{\bm{C}}^{[r]}(t) \rVert_{\infty} \leq  \sqrt{\frac{2 \ln(n(n-1)/\delta)}{r}} \nonumber  \\ + 12\sum_{k=t-r+1}^{t-1} \lVert \bm{p}(k) - \bm{p}(k+1)\rVert_{\infty} 
\end{align}
\end{lemma}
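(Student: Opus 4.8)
The plan is to write the estimation error as the sum of a deterministic drift (bias) term and a random statistical (variance) term, and to bound each separately. I would introduce the window-averaged correlation $\bar{C}_{ij} \doteq \frac{1}{r}\sum_{k=t-r+1}^{t} C_{ij}(k)$ and decompose, for each pair $(i,j)$, $C_{ij}(t) - \hat{C}^{[r]}_{ij}(t) = \big(C_{ij}(t) - \bar{C}_{ij}\big) + \big(\bar{C}_{ij} - \hat{C}^{[r]}_{ij}(t)\big)$. The first parenthesis is deterministic and measures how far the current correlation lies from its window average (drift); the second is a centered empirical average (variance). A useful simplification is that the diagonal entries contribute nothing: since $\ell_i(X_k)^2 = 1$ deterministically, we have $\hat{C}^{[r]}_{ii}(t) = 1 = C_{ii}(t)$, so it suffices to control the $n(n-1)/2$ off-diagonal pairs.

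For the variance term, note that $\hat{C}^{[r]}_{ij}(t) = \frac{1}{r}\sum_{k=t-r+1}^t \ell_i(X_k)\ell_j(X_k)$ is an average of bounded variables $\ell_i(X_k)\ell_j(X_k) \in \{-1,1\}$ with expectation $C_{ij}(k)$, mutually independent by \Cref{assu:sample-independents}. The essential point is that, since the samples are independent but \emph{not} identically distributed, this average concentrates around the window mean $\bar{C}_{ij}$ rather than around $C_{ij}(t)$ — which is precisely why the bias term is forced to appear. I would apply Hoeffding's inequality for independent (non-identical) variables in $[-1,1]$, giving $\Pr(|\bar{C}_{ij} - \hat{C}^{[r]}_{ij}(t)| \ge s) \le 2\exp(-rs^2/2)$, and then take a union bound over the $n(n-1)/2$ off-diagonal pairs with per-pair failure probability $2\delta/(n(n-1))$. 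Solving for $s$ reproduces exactly the first term $\sqrt{2\ln(n(n-1)/\delta)/r}$, and the union bound caps the total failure probability at $\delta$.

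For the drift term, I would exploit the product structure $C_{ij}(t) = (2p_i(t)-1)(2p_j(t)-1)$ for $i \neq j$ (a consequence of \Cref{assu:errors-independents}, as noted after \Cref{prop:covariance-to-expertise}). Writing $a_i(t) = 2p_i(t)-1 \in [-1,1]$ and adding and subtracting a cross term yields the Lipschitz estimate $|C_{ij}(t) - C_{ij}(k)| \le |a_i(t)|\,|a_j(t)-a_j(k)| + |a_j(k)|\,|a_i(t)-a_i(k)| \le 4\lVert \bm{p}(t) - \bm{p}(k)\rVert_\infty$. A telescoping argument along consecutive steps gives $\lVert \bm{p}(t) - \bm{p}(k)\rVert_\infty \le \sum_{m=k}^{t-1}\lVert \bm{p}(m) - \bm{p}(m+1)\rVert_\infty \le \sum_{m=t-r+1}^{t-1}\lVert \bm{p}(m) - \bm{p}(m+1)\rVert_\infty$, uniformly in $k$ over the window. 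Averaging over $k$ then produces $|C_{ij}(t) - \bar{C}_{ij}| \le 4\sum_{m=t-r+1}^{t-1}\lVert \bm{p}(m) - \bm{p}(m+1)\rVert_\infty$, comfortably within the stated factor $12$. Combining the two bounds via the triangle inequality and maximizing over $(i,j)$ (with the diagonal exact) yields the claim with probability at least $1-\delta$.

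The step I expect to require the most care is not any single calculation but the conceptual separation itself: recognizing that concentration can only control the gap to the window average $\bar{C}_{ij}$, so the residual $C_{ij}(t) - \bar{C}_{ij}$ must be charged to the drift and bounded through the accuracy-path sum. Everything else — the Hoeffding and union-bound bookkeeping that must match the constants in the first term, and the elementary Lipschitz and telescoping estimates for the second — is routine once the decomposition is fixed.
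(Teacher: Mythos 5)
Your proposal is correct and takes essentially the same route as the paper: the same decomposition of $\bm{C}(t)-\hat{\bm{C}}^{[r]}(t)$ into a window-average statistical term and a drift term, a Hoeffding/McDiarmid bound with a union bound over the $n(n-1)/2$ off-diagonal pairs for the former (recovering the identical constant $\sqrt{2\ln(n(n-1)/\delta)/r}$), and a Lipschitz relation between the correlation matrix and the accuracies for the latter. The only deviation is a constant-level refinement: by telescoping the accuracies directly through the product form $C_{ij}=(2p_i-1)(2p_j-1)$ you get the sharper per-window constant $4$, whereas the paper telescopes the correlation matrices and converts each step to accuracies with constant $12$ — both land comfortably within the stated bound.
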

\Cref{lemma:error-decomposition} shows that the error of estimating $\bm{C}(t)$ by using the previous $r$ samples can be upper bounded with the sum of two error terms: a \emph{statistical error} and a \emph{drift error}. The statistical error is related to the \emph{sample complexity} of the estimation: it is due to the variance of the estimator  $\hat{C}^{[r]}(t)$, and it decays with rate $O(1/\sqrt{r})$. The drift error is unknown and it quantifies the error introduced due to the distribution shift, and it is measured as the sum of the maximum variation of the accuracy of the weak labelers at each step. The drift error is non-decreasing with respect to $r$. There is a trade-off: we want to choose $r$ to minimize the sum of the statistical error and the drift error.

We remark that adapting to drifting data adds an additional constraint to the sample complexity of the solution. When the samples are drawn from a fixed distribution, a larger sample size reduces the statistical error of the estimates. When the samples are drawn from a drifting distribution, a larger sample size still reduces the statistical error. However, a larger sample also increases the drift error since it uses older samples. Thus, even if the algorithm receives a very long input sequence, the drift poses an upper bound on the sample size that can be used efficiently at any given step.

Our main contribution is an algorithm that \emph{without} any assumption on the drift can provide a close-to-optimal solution of the above trade-off \eqref{eq:error-decomposition}. 
This is a challenging problem, as it is not possible to estimate the drift error, since we only have a single sample from each distribution. Nonetheless, our algorithm 
guarantees an estimation error of the matrix $\bm{C}(t)$ that is essentially up to constant as tight as the value of $r$ that minimizes the right-hand side of \eqref{eq:error-decomposition}. This yields a guarantee on the estimation of  $\bm{p}(t)$ by using \Cref{prop:covariance-to-expertise}.  

The pseudocode of our method is reported in Algorithm~\ref{alg:finalalgorithm}. Our algorithm has the following parameters (a more detailed discussion is deferred to \Cref{subsec:hyper-parameters}):
\begin{itemize}[noitemsep,topsep=0pt,parsep=0pt,partopsep=0pt]
\item $\delta \in (0,1)$ is the failure probability of the algorithm for the estimation at a given time $t$.
\item $m \in \mathbb{N}$ is the maximum number of window sizes evaluated by our algorithm. 
\item The parameter $\beta > 0$ affects the threshold used in our algorithm, and it controls the sensitivity of our algorithm to changes in drift. We empirically show that any value $\beta \in (0,0.1)$ provides a good estimation (\Cref{subsec:hyper-parameters}). 
\end{itemize}

The next theorem gives our algorithm's error guarantee.
\begin{theorem}[Main Result]
\label{thm:main}
Let Assumptions~\ref{assu:sample-independents}, \ref{assu:errors-independents} and \ref{assu:bias} hold. Let $\delta \in (0,1)$, $\beta >0$, and $m \in \mathbb{N}$.   Assume the number of weak labelers is $n \geq 3$. Fix a time $t \geq 1$. There exist an algorithm that outputs $\hat{\bm{p}} = (\hat{p}_1,\ldots,\hat{p}_n)$ such that with probability at least $1-\delta$:
\begin{align*}
   \left\lVert \bm{p}(t) - \hat{\bm{p}} \right\rVert_{\infty} = O\Bigg(  \frac{\beta + \beta^{-1}}{\tau^2}  \min_{1 \leq r \leq \min(t,2^m)} \Bigg(  \sqrt{\frac{\ln(nm/\delta)}{r}} 
    \\ +  \sum_{k=t-r+1}^{t-1} \lVert \bm{p}(k) - \bm{p}(k+1) \rVert_{\infty}\Bigg)  \Bigg) \enspace ,
\end{align*}
where $\tau$ is defined as in \Cref{assu:bias}.
\end{theorem}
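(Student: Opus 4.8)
My plan is to reduce the theorem to a high‑probability bound on the estimation error of the correlation matrix and then invoke \Cref{prop:covariance-to-expertise}. Concretely, I restrict attention to a dyadic grid of window sizes $R = \{2^0, 2^1, \ldots, 2^{\min(m,\lfloor \log_2 t\rfloor)}\}$, so $|R| \le m+1$. Applying \Cref{lemma:error-decomposition} with failure probability $\delta/|R|$ to each $r \in R$ and taking a union bound yields an event $\mathcal{E}$ of probability at least $1-\delta$ on which, for every $r \in R$, $\lVert \bm{C}(t) - \hat{\bm{C}}^{[r]}(t)\rVert_{\infty} \le \Phi(r) + B(r)$, where $\Phi(r) = \sqrt{2\ln(|R|n(n-1)/\delta)/r}$ is a statistical term (decreasing in $r$) and $B(r) = 12\sum_{k=t-r+1}^{t-1}\lVert \bm{p}(k)-\bm{p}(k+1)\rVert_{\infty}$ is the drift term (nondecreasing in $r$, with $B(1)=0$). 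Since $\ln(|R|n(n-1)/\delta) = O(\ln(nm/\delta))$, the logarithmic factor of the statement appears throughout. The core task is then to select, from the computable estimates $\hat{\bm{C}}^{[r]}(t)$ alone, a window $\hat{r}$ whose error $\Phi(\hat r)+B(\hat r)$ is within an $O(\beta+\beta^{-1})$ factor of $\min_{r\in R}(\Phi(r)+B(r))$; a routine comparison between the grid minimum and the continuous minimum (a dyadic window sandwiches any target within a factor two in $\Phi$ while not increasing $B$) then matches the $\min_{1\le r \le \min(t,2^m)}$ in the statement.

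I would use a Lepski‑type rule: select
\[ \hat{r} = \max\Big\{ r \in R : \lVert \hat{\bm{C}}^{[r]}(t) - \hat{\bm{C}}^{[r']}(t)\rVert_{\infty} \le c(1+\beta)\big(\Phi(r)+\Phi(r')\big)\ \text{for all } r' \in R,\ r' \le r\Big\}, \]
for a suitable absolute constant $c$, so that $\beta$ directly controls the sensitivity of the test. The rule is well defined since the smallest window vacuously qualifies. As the reference for the analysis I introduce $\tilde{r} = \max\{ r\in R : B(r) \le \beta\,\Phi(r)\}$, which exists because $B(1)=0$. The first step is to show $\hat r \ge \tilde r$ on $\mathcal{E}$: for any $r' \le \tilde r$, the triangle inequality and the concentration bound give $\lVert \hat{\bm{C}}^{[\tilde r]}(t) - \hat{\bm{C}}^{[r']}(t)\rVert_{\infty} \le (\Phi(\tilde r)+B(\tilde r)) + (\Phi(r')+B(r'))$, and using $\Phi(\tilde r)\le\Phi(r')$ together with $B(r')\le B(\tilde r)\le\beta\Phi(\tilde r)\le\beta\Phi(r')$, this is at most $2(1+\beta)\Phi(r')$, which is below the threshold. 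Hence $\tilde r$ is feasible and $\hat r \ge \tilde r$.

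For the error bound, since $\hat r \ge \tilde r$ the selection rule gives $\lVert \hat{\bm{C}}^{[\hat r]}(t) - \hat{\bm{C}}^{[\tilde r]}(t)\rVert_{\infty} \le c(1+\beta)(\Phi(\hat r)+\Phi(\tilde r)) \le 2c(1+\beta)\Phi(\tilde r)$, so that
\[ \lVert \hat{\bm{C}}^{[\hat r]}(t) - \bm{C}(t)\rVert_{\infty} \le 2c(1+\beta)\Phi(\tilde r) + \Phi(\tilde r) + B(\tilde r) = O\big((1+\beta)\Phi(\tilde r)\big). \]
It remains to relate $\Phi(\tilde r)$ to $\mathrm{opt} := \min_{r\in R}(\Phi(r)+B(r))$. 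By maximality of $\tilde r$, every larger dyadic window obeys $B(r) > \beta\Phi(r)$; hence for $r \le \tilde r$ the objective is at least $\Phi(\tilde r)$, while for $r \ge 2\tilde r$ it is at least $B(r) > \beta\Phi(\tilde r)/\sqrt{2}$. Thus $\mathrm{opt} \ge \min(1,\beta/\sqrt{2})\,\Phi(\tilde r)$, i.e. $\Phi(\tilde r) \le (1+\sqrt{2}/\beta)\,\mathrm{opt}$, giving $\lVert \hat{\bm{C}}^{[\hat r]}(t) - \bm{C}(t)\rVert_{\infty} = O((1+\beta)(1+\beta^{-1})\,\mathrm{opt}) = O((\beta+\beta^{-1})\,\mathrm{opt})$; the regimes $\beta$ large and $\beta$ small produce exactly the two terms $\beta$ and $\beta^{-1}$. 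Applying \Cref{prop:covariance-to-expertise} with $\epsilon$ equal to this quantity, and using that $\mathrm{opt}$ equals the minimum in the statement up to constants, yields the claimed bound with the $\tau^{-2}$ factor.

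The main obstacle is precisely the adaptive selection of the two middle paragraphs: the concentration step is \Cref{lemma:error-decomposition} with a union bound, and the conversion to accuracies is \Cref{prop:covariance-to-expertise}, both available. The delicate point is that the drift term $B(r)$ is never observable — there is a single sample per distribution — so the window must be chosen only from the disagreements $\lVert \hat{\bm{C}}^{[r]}(t)-\hat{\bm{C}}^{[r']}(t)\rVert_{\infty}$. Making this rigorous hinges on calibrating the reference window $\tilde r$ so that its bias is controlled by its variance at scale $\beta$, and then simultaneously tracking both failure modes — selecting too small a window, which inflates the variance by a factor $\beta^{-1}$, and selecting too large a window, which inflates the bias by a factor $\beta$ — to obtain the two‑sided $\beta+\beta^{-1}$ guarantee.
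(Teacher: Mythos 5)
Your proposal is correct, but it takes a genuinely different route from the paper's own proof. The paper's Algorithm~\ref{alg:finalalgorithm} performs a \emph{forward sequential} test: starting from the smallest window it compares only \emph{consecutive} empirical matrices $\hat{\bm{C}}^{[r_k]}(t)$ and $\hat{\bm{C}}^{[r_{k+1}]}(t)$ against a threshold $\mathcal{T}(k)$ and stops at the first violation; its analysis then needs both directions, \Cref{prop:threshold-less} (passing the test means enlarging the window cannot worsen the bound $\mathcal{B}$) and \Cref{prop:threshold-more} (a failed test yields a \emph{lower} bound on the drift, used in case $(b)$ of \Cref{lemma:main-result} to charge the optimum when $r^*$ exceeds the selected window). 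A technical payoff of comparing consecutive windows is \Cref{prop-apx:statistical-error-difference}: since $\hat{\bm{C}}^{[r_k]}(t)$ and $\hat{\bm{C}}^{[r_{k+1}]}(t)$ share samples, McDiarmid applied directly to their difference gives deviation $\sqrt{(1-r_k/r_{k+1})/r_k}$, tighter than the $\Phi(r_k)+\Phi(r_{k+1})$ you get by the triangle inequality through $\bm{C}(t)$. Your Goldenshluger--Lepski-style rule instead selects the \emph{largest} window consistent with \emph{all} smaller ones, with thresholds derived solely from the per-window concentration of \Cref{lemma:error-decomposition}, and compares against the oracle window $\tilde r$ calibrated by $B(\tilde r)\le\beta\,\Phi(\tilde r)$; monotonicity of $B$ and $\Phi$ then does the rest. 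Notably, you never need an empirical drift lower bound (no analogue of \Cref{prop:threshold-more}): maximality of $\tilde r$ is a property of the true drift rather than of a test outcome, which is arguably cleaner. The costs are $O(m^2)$ comparisons per step instead of the paper's $O(m)$, looser constants from not exploiting sample overlap, and an algorithm different from Algorithm~\ref{alg:finalalgorithm} --- acceptable here because \Cref{thm:main} is stated existentially. Your bookkeeping is sound: feasibility of $\tilde r$ with $c=2$, hence $\hat r \ge \tilde r$; the resulting $O\bigl((1+\beta)\Phi(\tilde r)\bigr)$ estimate; the relation $\Phi(\tilde r)\le (1+\sqrt{2}/\beta)\,\mathrm{opt}$ via the dyadic neighbor $2\tilde r$; the degenerate cases ($B(1)=0$, and $\tilde r$ equal to the largest grid point) covered; and the factor-$\sqrt{2}$ passage from the dyadic grid to the continuous minimum, all absorbed into the $O\bigl((\beta+\beta^{-1})/\tau^2\bigr)$ guarantee after applying \Cref{prop:covariance-to-expertise}.
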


A statement of the theorem with the exact constants is provided in \Cref{sec:deferred-proof}. The algorithm evaluates window sizes $\mathcal{R} = \{ r_1 = 2^0, \ldots, r_m = 2^{m-1} \}$. Our method increases the window size ending at time $t$ as long as it does not include significant drift in distribution. As a reference for making this decision we observe that 
 if the samples are identically distributed, then the estimated correlation matrices $\hat{\bm{C}}^{[r_{k+1}]}$ and $\hat{\bm{C}}^{[r_{k}]}$ should be similar up to the statistical error that is proportional to $O(1/\sqrt{r_k})$.
The strategy of our algorithm is based on this intuition. Starting with $k=1$, we iteratively compare the empirical covariance matrix computed respectively with $r_k$ and $r_{k+1}$ samples. If there is minimal drift, the empirical quantity $\lVert \hat{\bm{C}}^{[r_{k+1}]} - \hat{\bm{C}}^{[r_{k}]}_{i,j} \rVert|_{\infty}$ should be comparable to the statistical error due to using $r_k$ samples. If that is the case, we increase the value of $k$. If this empirical quantity is larger, then a significant drift must have occurred. In this case, we can stop and show that using $r_k$ samples is provably close to optimal.
This strategy is implemented in lines 2--7. The \emph{threshold} used as a terminating condition for the iteration of the algorithm is the right-hand side of line 5. The lines 8--12 implement the method that maps a correlation matrix to the accuracies of the weak supervision sources and attains the guarantees of \Cref{prop:covariance-to-expertise} \citep{bonald2017minimax}.

\begin{algorithm}[tb]
\caption{Non-Stationary Accuracy Estimation}\label{alg:finalalgorithm}

\begin{algorithmic}[1]
\STATE \textbf{Input:} $(X_i)_{i=1}^t, ( \ell_i)_{i=1}^n, \mathcal{R} =\{ r_1, \ldots, r_m \}$, $\beta$, $\delta$.
\vspace{2pt}
\STATE $A_{n,m,\delta} \gets \sqrt{ 2 \ln[(2m-1)\cdot n(n-1)/\delta]} $
\STATE $k \gets 1$
\WHILE{$(k \leq m-1)$ and $(r_{k+1} \leq t)$}
\STATE \textbf{if } $\lVert \hat{\bm{C}}^{[r_{k+1}]}(t)  - \hat{\bm{C}}^{[r_k]}(t) \rVert_{\infty} \leq   A_{n,m,\delta} \left[ \frac{2\beta}{\sqrt{r_k}} + \sqrt{\frac{1-\frac{r_k}{r_{k+1}}}{r_k}}  \right]$
    \textbf{ then } $k \gets k+1$
\STATE \textbf{else break}
\ENDWHILE

\STATE $\hat{\bm{C}} \leftarrow \hat{\bm{C}}^{[r_k]}(t)$
\FORALL{$h \in \{1, \ldots, n\}$}
\STATE $(i,j) \leftarrow \mathrm{argmax}_{ i\neq j, j \neq h, i \neq h} \left| \hat{C}_{ij}\right|$
\STATE \textbf{if} $\hat{C}_{ij} = 0$ \textbf{ then } $\hat{p}_h \leftarrow 1/2$
\STATE \textbf{else }
 $ \hat{p}_h \leftarrow \left(1 + \sqrt{\left| \frac{\hat{C}_{ih} \hat{C}_{hj}}{\hat{C}_{ij}}\right|}\right)/2$
\ENDFOR
\STATE \textbf{return} $\hat{p} = (\hat{p}_1,\ldots,\hat{p}_n)$
\end{algorithmic}
\end{algorithm}

\paragraph{Computation and Memory.} We optimize the memory and computational efficiency of the algorithm in an online setting
by storing only the most recent $r_m = 2^{m-1}$ data points at each step where $m$ is the maximum number of window sizes considered by our algorithm. 
The value of $m$ provides a trade-off between the quality of estimation and the use of memory and computational resources. A larger maximum window size can yield a better estimation when the data is stationary. However, it is sufficient to use a small value of $m$: for any $\epsilon >0$, if we set $m=O(\log(1/\epsilon))$, then we incur an additional additive estimation error $\epsilon$ compared to an algorithm that has access to all the past samples.
Also, the empirical covariance matrices can be maintained efficiently: for any window size $r \in \mathcal{R}$, the 
matrix $\hat{C}^{[r]}(t)$ differs from $\hat{C}^{[r]}(t-1)$ by only two samples, thus it can be updated in time $O(n^2)$.

\textbf{Failure Probability.} The theorem guarantees that the estimation is correct (i.e., it satisfies the upper bound of the statement) at any fixed time step $t$ with probability at least $1-\delta$, where $\delta > 0$ can be set arbitrarily small.
 Thus, over a horizon $T \geq 0$, the algorithm is expected to fail in at most $T\delta$ steps. It is important to note that the algorithm 
 is guaranteed to recover from failures. In particular, since the maximum window size is $r_m$,  the event of giving a wrong estimation at time $t+r_m$ is independent of the outcomes of all events before time $t+1$. 

\section{Empirical Evaluation}

 \begin{figure*}[ht]
\includegraphics[width=.49\textwidth]{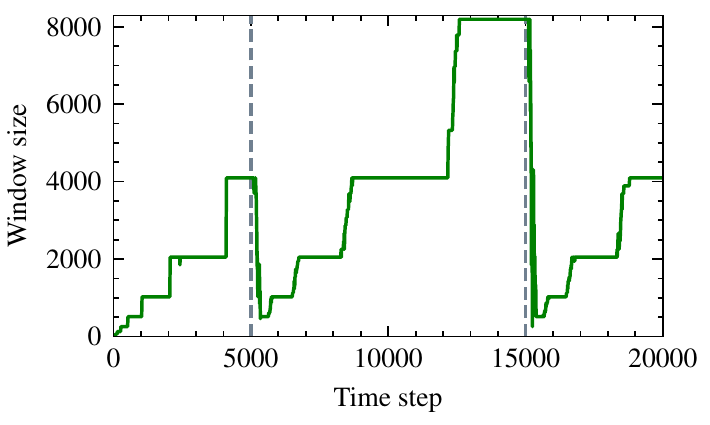}
\hfill
\includegraphics[width=.49\textwidth]{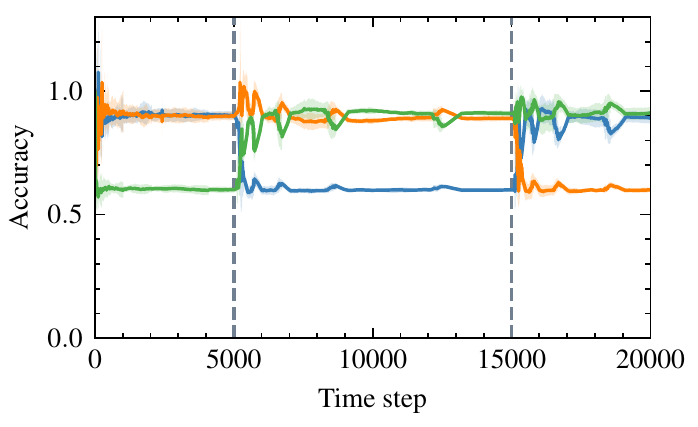}
\caption{We report the window size chosen by the algorithm (left) and the estimated accuracies of each weak labeler over time (right). 
The vertical lines represent when a change in distribution occurs.} 
\label{fig:synthetic}
\end{figure*}


We demonstrate the functionality and the advantage of our algorithm over fixed-window-size strategies in several experimental settings, including synthetic data, image recognition, and video classification. Due to space constraints, we present only part of the experimental results in this section. Additional experiments and details are reported in the supplementary material.
\paragraph{Setup.}
At each time step, we receive an unlabeled example which must be labeled based on the available weak labelers.
We use Algorithm~\ref{alg:finalalgorithm} to estimate the accuracies of the weak labelers.
We then make a prediction for the current time step's example by weighting the vote of each labeler proportionally to its estimated accuracy using the weighting $\bm{w}^*$ described in Equation~\eqref{eq:w^*}.
For all experiments, we run our algorithm with  $m=20$, $\mathcal{R}=\{2^0,2^1,\dots, 2^{19}\}$, and $\beta = \delta = 0.1$ (see Appendix~\ref{subsec:hyper-parameters}).
We compare with the following baselines algorithms:  (1) a majority vote of the weak labelers at this step; and (2) non-adaptive fixed-window-size algorithms, one for each size in $\mathcal{R}$.
The latter algorithms are the same as Algorithm~\ref{alg:finalalgorithm}, except that we use a fixed size window to estimate $\hat{\bm{C}}$ (line 7).
Since the triplet method for estimating accuracies (lines 8-12) is not constrained to return a probability between 0 and 1, we clip the estimated accuracies to the interval $[0.1,0.9]$.
The fixed-window-size algorithms are analogous to previous work on crowdsourcing with drift that also determines a priori how much information to use from the past, depending on an assumption that constraints the magnitude of the drift. In practice, it is not possible to run those algorithms from the previous work, since we cannot estimate the drift as we have access to only a single sample from each distribution.
The code for the experiments is available online \footnote{\url{https://github.com/BatsResearch/mazzetto-aistats2025-code}}.

\subsection{Synthetic Data}
\label{subsection:synthetic}

We first show how our algorithm adapts to changing input distributions with an artificial experiment on synthetic data that satisfies all of our assumptions. The algorithm receives input from three weak labelers $(n=3)$, and the input stream has $4 \cdot T$ data points with $T=5000$. The data is partitioned into three contiguous blocks of size $T$, $2T$, and $T$. The accuracies of the weak labelers do not change within the same block, but do change between blocks. In particular, for each block, two weak labelers have a high accuracy equal to $0.9$, and the other one has a low accuracy equal to $0.6$. The weak labeler with low accuracy is different in each block. We remark that our algorithm is oblivious to this partitioning of the data.

In Figure~\ref{fig:synthetic}, we plot the window size used by the adaptive algorithm and its estimates of the accuracies of each weak labeler in each time $t$, $1 \leq t \leq 4T$. The reported results are an average over $10$ independent generations of this synthetic data. The main observation is that our algorithm correctly identifies a change in distribution, and reduces the window size whenever it transitions to the next block. This allows for a very good estimation of the weak labeler accuracies, as the algorithm uses data mostly from the current block.

\begin{figure}[ht]
\includegraphics[width=\columnwidth]{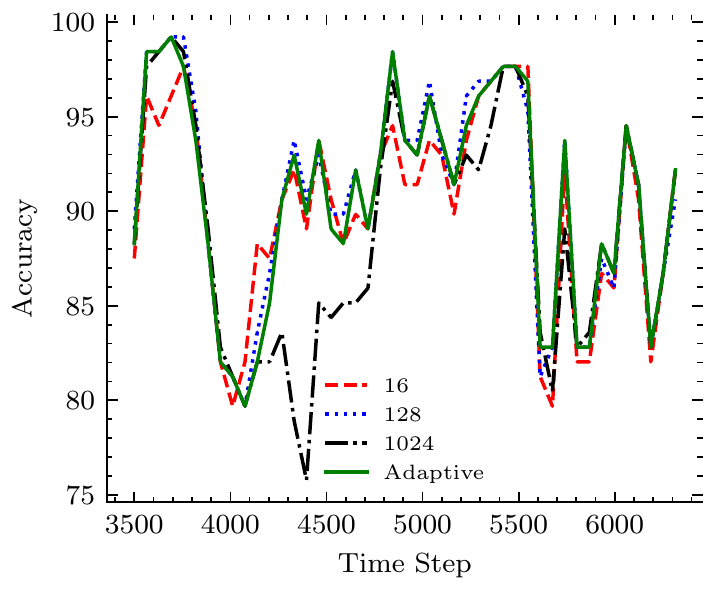}
\caption{ 
We report the accuracy of different fixed-window-size strategies and our adaptive algorithm for the Tennis Rally dataset over time (single run, \emph{permute}). For each time step $t$, the reported accuracy is an average over the next $128$ time steps. 
The plot shows that no fixed-window-size strategy is consistently good, while the adaptive strategy consistently matches the best strategy at any given time.}
\label{fig:accuracy_over_time}
\end{figure}
\begin{figure*}[ht]
\includegraphics[width=.49\textwidth]{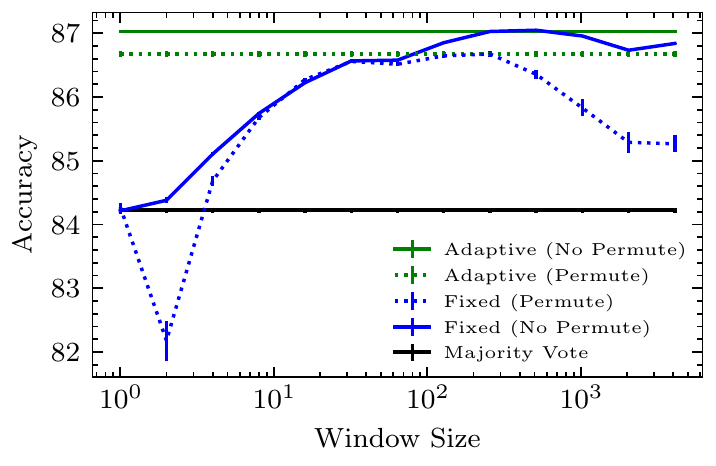}
\hfill
\includegraphics[width=.49\textwidth]{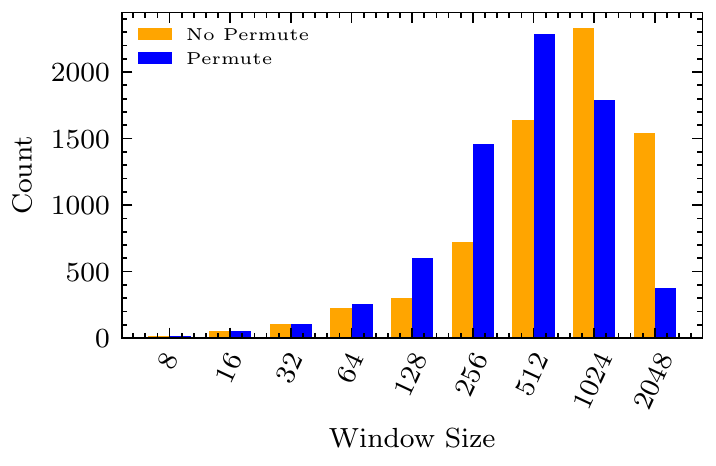}
\caption{In the left plot, we report the average accuracy of the dynamically selected window sizes (our algorithm) and different fixed-window-size strategies for the Tennis Rally dataset. In the right plot, we report the histogram of the window sizes chosen by our algorithm. The reported results are for both experimental setups \emph{permute} and \emph{no permute}. The average and standard deviations are over $30$ random runs, where the randomness of each run is due to the abstensions, and the shuffling of the weak labelers for the \emph{permute} setup. }
\label{figure:accuracy-window-size}
\end{figure*}
Clearly, there is a delay before our algorithm can correctly identify the distribution change since it needs to collect enough data from the new block to assess that a significant change happened. As a result, the estimation of the weak labelers' accuracy is worse for the data right after a block change. The variation in the accuracy estimates in the middle of a block is due to the window size selection strategy of the algorithm: whenever the algorithm increases the window size, the larger window includes in the following few steps a small number of samples from the previous block, resulting in a small additional error in the estimation.
Previous approaches, which predetermine the amount of past information to use based on a parameter $\Delta$ that measures the magnitude of the drift, cannot address the drift scenario presented in this synthetic experiment. In this case, the duration of each stationary period varies, altering the number of samples we wish to extract from each period.

\subsection{Video Analysis}
\label{sub:video}
We evaluate our algorithm on three video analysis tasks: \textbf{Basketball} \citep{sala:neurips19}, \textbf{Commercial}, and \textbf{Tennis Rally} \citep{fu2020fast}. For each of those datasets, we are given a sequence of frames of a video, and a set of weak supervision sources \citep{zhang:neurips21}.
We report here the results for the Tennis Rally dataset, where the goal is to identify tennis rallies during a broadcast match.  Additional details for this dataset, and the results for the other video analysis tasks are presented in the supplementary material.

The Tennis Rally dataset has $6$ weak supervision sources that can abstain on some of the input frames. Since our method requires the weak labelers to provide an output at each step, we map each abstention to a random label $\pm 1$. For each of the video analysis tasks, we use two experimental setups. In the first experiment (\emph{no permute}), we simply use the original dataset. In the second experiment  (\emph{permute}), we introduce an additional source of drift for the weak labelers. In particular, at each time step, with probability $10^{-3}$ we randomly shuffle the names of the weak labelers.

The first observation is that the performance of a given fixed-window-size strategy can change over time, as displayed in Figure~\ref{fig:accuracy_over_time}. In particular, the best choice of window size also changes over time.  Our algorithm obtains an overall good performance by consistently adapting its window size based on the input sequence. In the left plot of Figure~\ref{figure:accuracy-window-size}, we compare the average accuracy of our algorithm with the majority vote and the fixed-window-size strategies for both experimental setups (\emph{permute} and \emph{no permute}). 
Our algorithm achieves an accuracy that is competitive with respect to the optimal fixed-window-size strategy. 
We remark that the accuracy values in these figures are computed using labeled data that is \emph{not} available to the algorithm. Thus, an algorithm cannot evaluate the accuracy of the fixed-window strategies to choose the best one among them. Additionally, no fixed-window-size strategy can perform optimally in both drift settings as its accuracy is inherently linked to the data's drift patterns. On the other hand, our algorithm can automatically adjust to the drift and do at least as well as the best fixed-window-size strategy.


To further illustrate the adaptivity of our algorithm, we report in the right plot of Figure~\ref{figure:accuracy-window-size} a histogram of the window sizes chosen by our algorithm during its execution. This plot demonstrates that our algorithm varies its chosen window size throughout its execution. In particular, our algorithm indeed selects smaller window sizes in the \emph{permute} setup, since it adapts to the additional drift due to the random shuffling of the identities of the weak labelers. We can also observe that the accuracies of the fixed-window-size strategies follow the phenomenon outlined in our theory: small window sizes and larger window sizes are less competitive than a proper selection of the window size depending on the drift. Using a large fixed-window-size strategy still provides a good solution for the \emph{no permute} setup. This is most likely because there are a few weak labelers that are consistently accurate throughout the whole input sequence. On the other hand, for the \emph{permute} setup, large fixed-window-size strategies exhibit a lower accuracy as weak labelers can have different performances throughout the window due to the random shuffling.  Nevertheless, our algorithm can still adaptively recognize the identities of the accurate weak supervision sources.
We also observe that the fixed-window strategy with size $1$ is equivalent to the majority vote. This is most likely the reason why this strategy can be more competitive than other small fixed-window-size strategies that learn noisy weights based on little data.

\vspace{-5pt}
\section{Conclusion}
This paper presents the first method with rigorous guarantees for learning from multiple noisy sources of labels in non-stationary settings \emph{without} any prior assumptions about the nature of the changes over time. 

\textbf{Limitations and Future Work.}
The method presented in this paper does not extend to multi-class classification. One can follow the heuristic proposed by \cite{fu2020fast}, and execute multiple one-versus-all classifiers. However, this heuristic does not provide any formal analysis on the obtained result, and the outcome of different one-versus-all classifiers may not be consistent. Thus, a provable multi-class classification under drift is an interesting open problem.

\subsection*{Acknowledgements}
We sincerely thank the anonymous reviewers for their valuable feedback and constructive suggestions, which have helped improve the quality of this work. Alessio Mazzetto was supported by a Kanellakis Fellowship at Brown University. We gratefully acknowledge support from Cisco. This material is based on research sponsored by Defense Advanced Research Projects Agency (DARPA) and Air Force Research Laboratory (AFRL) under agreement number FA8750-19-2-1006 and by the National Science Foundation (NSF) under award IIS-1813444. The U.S. Government is authorized to reproduce and distribute reprints for Governmental purposes notwithstanding any copyright notation thereon. The views and conclusions contained herein are those of the authors and should not be interpreted as necessarily representing the official policies or endorsements, either expressed or implied, of Defense Advanced Research Projects Agency (DARPA) and Air Force Research Laboratory (AFRL) or the U.S. Government. Disclosure: Stephen Bach is an advisor to Snorkel AI, a company that provides software and services for data-centric artificial intelligence.

\bibliography{bibliography}


\newpage
\onecolumn
\newpage
\appendix
\section{Additional Experiments}

We provide further details about our experimental setup and also show additional results on other tasks. All experiments were conducted on a M2 Max Macbook Pro with 32GB RAM.
This section is organized as follows:
\begin{enumerate}
\item In \Cref{app:video-classification}, we report additional details for our experiments on the video classification tasks (\Cref{sub:video}). In particular, we report the experimental results for the Basketball and Commercial datasets, and also provide additional details on the Tennis dataset.
\item In \Cref{subsec:awa2_exp}, we provide experimental results for an image classification task with artificial drift that uses the Animals with Attributes 2 (AwA2) dataset.
\item In \Cref{subsec:hyper-parameters}, we discuss the hyper-parameters of our algorithm.
\end{enumerate}

\subsection{Video Classification}
\label{app:video-classification}
In this subsection, we provide additional details and results for the experiments on the video classification tasks (\Cref{sub:video}).

In this set of experiments, we use three datasets: \textbf{Commercial}, \textbf{Tennis Rally}, and \textbf{Basketball}.

\textbf{Commercial} \citep{fu2019rekall}: the goal is to identify commercial segments from a TV news broadcast.

\textbf{Basketball} \citep{fu2020fast}: the goal is to identify basketball videos in a subset of ActivityNet \citep{caba2015activitynet}.

\textbf{Tennis Rally} \citep{fu2020fast}: the goal is to identify tennis rallies during a tennis match.

We use the version of those datasets provided by the weak supervision benchmark platform Wrench \citep{zhang:neurips21}.  For each of those datasets, we only use the training data, since it contains the largest number of data points. We use the weak supervision sources provided by Wrench for each of those tasks. Table~\ref{table:video_overview} provides additional information on the number of data points and weak supervision sources for each task.


\begin{table}[h!]
\centering
\caption{Number of weak labelers and data points for the three datasets.}
\begin{tabular}{lcc}
\toprule
Dataset    & Number of Weak Labelers & Number of Data Points \\  
\midrule
Basketball & 4                       & 6959                  \\
Commercial & 4                       & 64130                 \\
Tennis     & 6                       & 17970                 \\    
\bottomrule
\vspace{1em}
\end{tabular}
\label{table:video_overview}
\end{table}

The sequence of data points described the sequence of frames of the video. The weak supervision sources provide a vote for each frame, and the goal is to combine their votes to provide the correct classification of the frame. In the original dataset, the weak supervision sources are allowed to abstain on some of the data points. To fit those datasets to our theoretical framework, we map each abstension to a random vote in $ \{-1,1\}$.

As described in \Cref{sub:video}, in the ``no permute'' case we use the dataset as is. In the ``permute'' case, we randomly shuffle the names of the weak labelers with probability $10^{-3}$ at each step.

For each of those datasets, we run an experimental evaluation across $30$ runs. In each run, there is randomness introduced in the mapping of the abstentions to random vote, and also in the random shuffling of the name of the weak labelers for the ``permute'' case.

\subsubsection{Additional Results}
In Figure~\ref{figure:accuracy-window-size-commercial} and Figure~\ref{figure:accuracy-window-size-basketball}, we report experimental results for respectively Commercial and Basketball. Those results corroborate the findings obtained with the Tennis Rally dataset (\Cref{sub:video}). 
In particular, we show again that smaller window sizes have higher accuracy in the ``permute'' case due to the additional drift.  Accordingly, the right plot of Figure~\ref{figure:accuracy-window-size-commercial} clearly shows that the adaptive algorithm favors smaller window size in the ``permute'' case for the Commercial dataset.

We highlight that although the Basketball dataset contains a lot of noise due to having the highest percentage of abstension and the smallest number of data points, the shape of the plot of the left plot of \Cref{figure:accuracy-window-size-basketball} still resembles what we would expect from our theoretical framework: smaller and larger window sizes are less competitive (accuracy-wise) than a proper selection of the window size depending on the drift.


\begin{figure*}[ht]
\includegraphics[width=.49\textwidth]{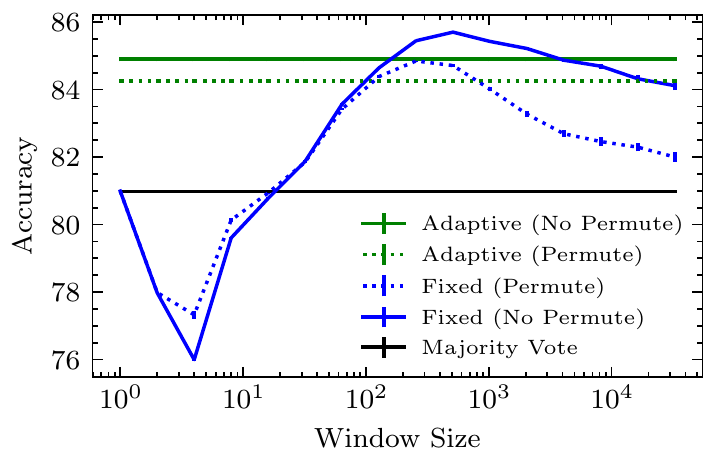}
\hfill
\includegraphics[width=.49\textwidth]{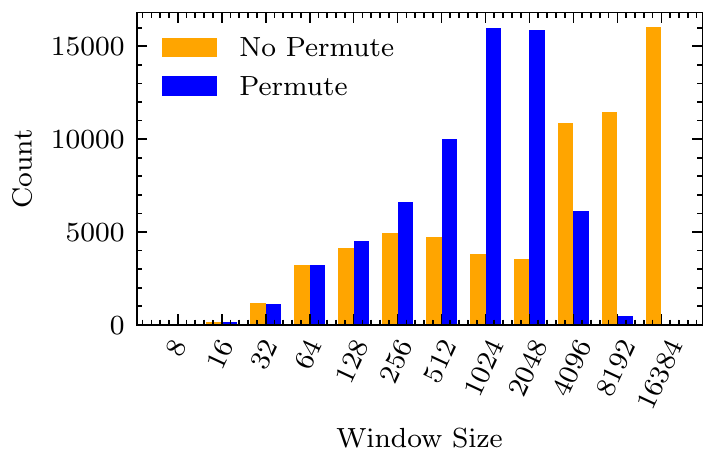}
\caption{(\textbf{Commercial}). In the left plot, we report the average accuracy of the dynamically selected window sizes (our algorithm) and different fixed-window-size strategies for the Commercial Dataset. In the right plot, we report the histogram of the window sizes chosen by our algorithm.}
\label{figure:accuracy-window-size-commercial}
\end{figure*}
\vfill
\begin{figure*}[h!]
\includegraphics[width=.49\textwidth]{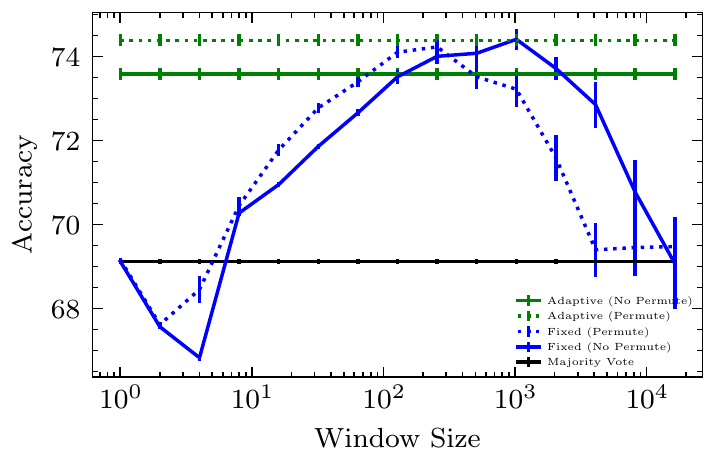}
\hfill
\includegraphics[width=.49\textwidth]{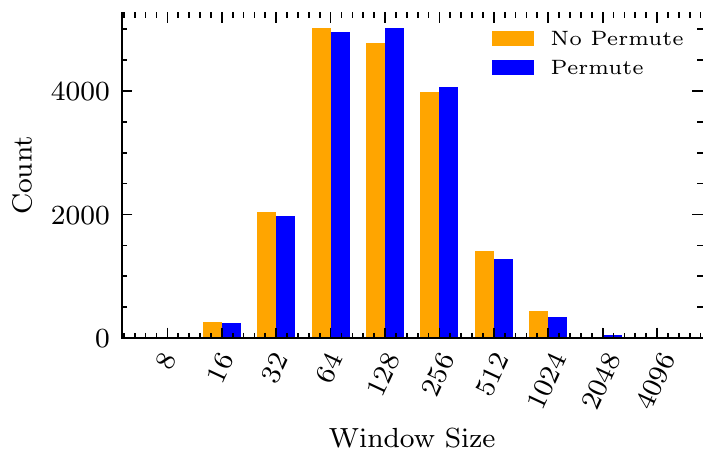}
\caption{(\textbf{Basketball)}. We report the same results of \Cref{figure:accuracy-window-size-commercial} but for the Basketball dataset.}
\label{figure:accuracy-window-size-basketball}
\end{figure*}

For completeness, in Figures \ref{figure:tennis_F1_vs_window_size} and \ref{figure:commercial+basketball_F1_vs_window_size}, we also report the F1 Score (the harmonic mean of precision and recall) obtained by our adaptive algorithm and the fixed-window-size strategies across all three tasks. We observe that our observations regarding the accuracy numbers also applies to the F1 score.

\begin{figure*}[h!]
\centering
\includegraphics[width=.49\textwidth]{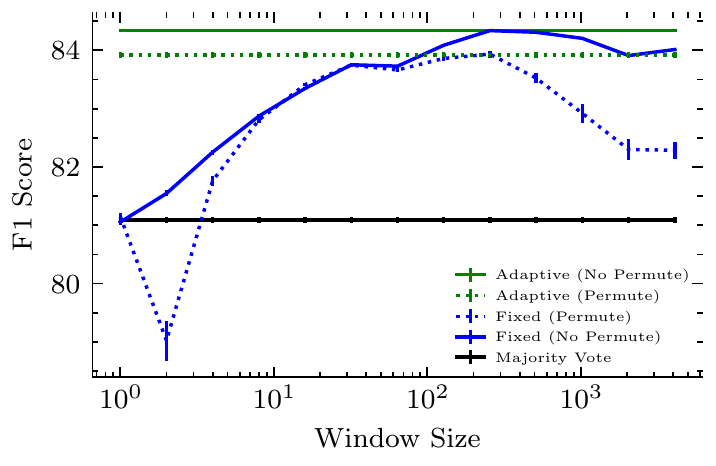}
\caption{F1 Score on the Tennis dataset.}
\label{figure:tennis_F1_vs_window_size}
\end{figure*}

\begin{figure*}[h!]
\includegraphics[width=.49\textwidth]{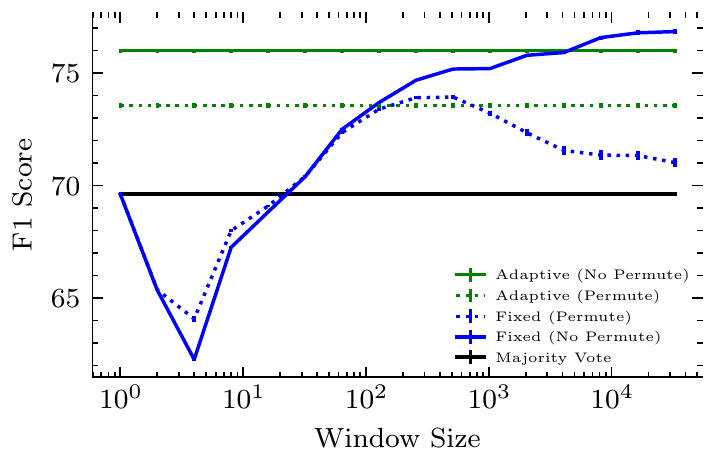}
\hfill
\includegraphics[width=.49\textwidth]{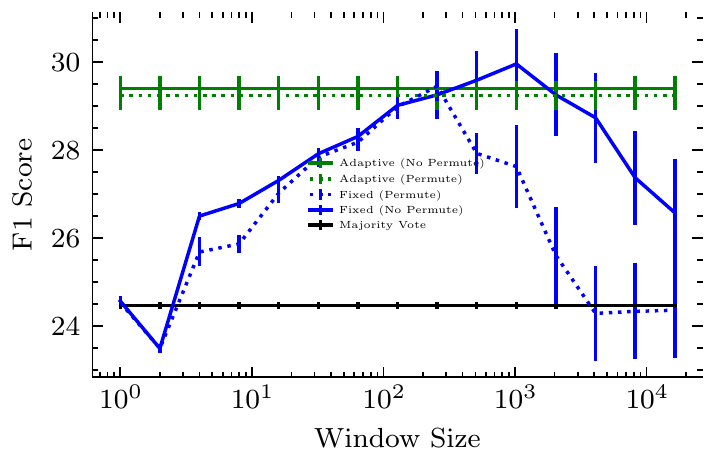}
\caption{F1 Score on the Commercial dataset (left) and on the Basketball dataset (right)}
\label{figure:commercial+basketball_F1_vs_window_size}
\end{figure*}

We also plot the weights computed by our algorithm for the weak supervision sources over a single run for both the settings ``permute'' and ``no permute'' (Figure~\ref{figure:tennis_learner_weights} and~\ref{figure:commercial_learner_weights}). We remind that the weights of the algorithm are computed according to \eqref{eq:w^*} given the adaptive algorithm's estimated accuracies.

In the no permute case, there are no sudden changes in the weights of the learners. This also suggests that the original dataset does not exhibit a significant drift on which weak labelers are accurate. However, in the ``permute'' case there are sharp changes in the weights of the learner, which correspond to when a shuffle of the names of the weak labelers occurs in the dataset. This shows that our algorithm can adapt to the drift introduced over the weak labelers.


\begin{figure*}[h!tbp]
\includegraphics[width=.49\textwidth]{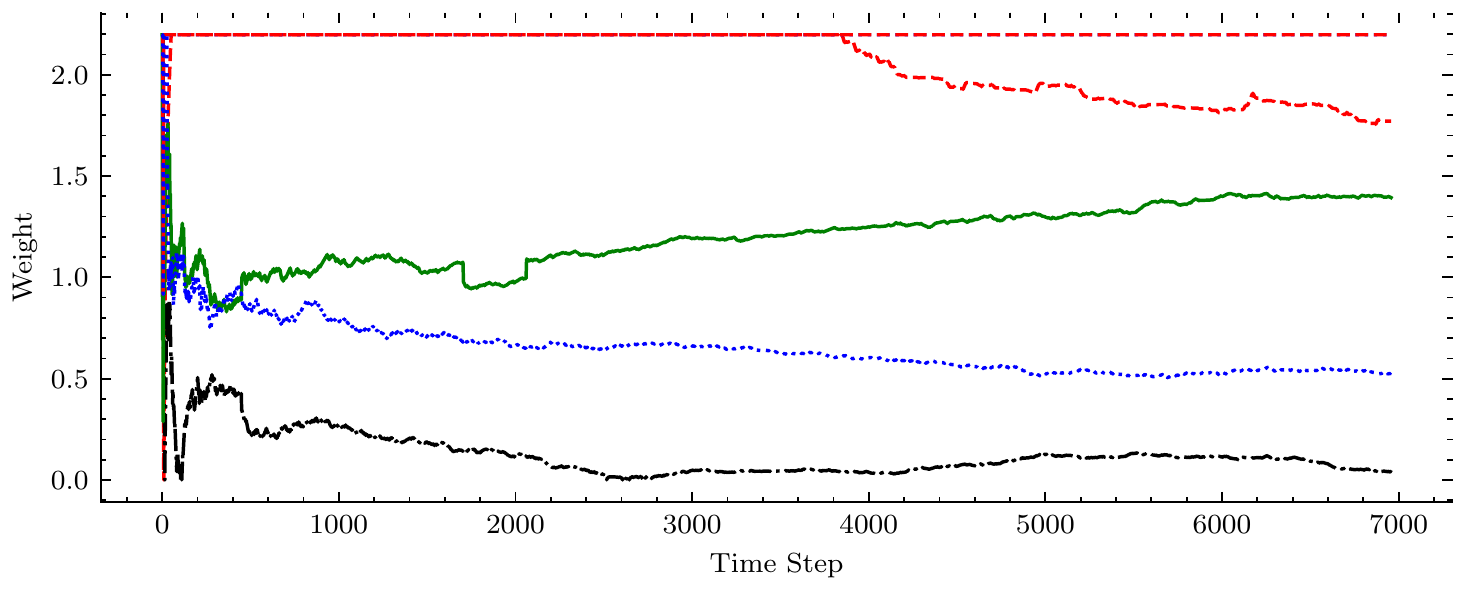}
\hfill
\includegraphics[width=.49\textwidth]{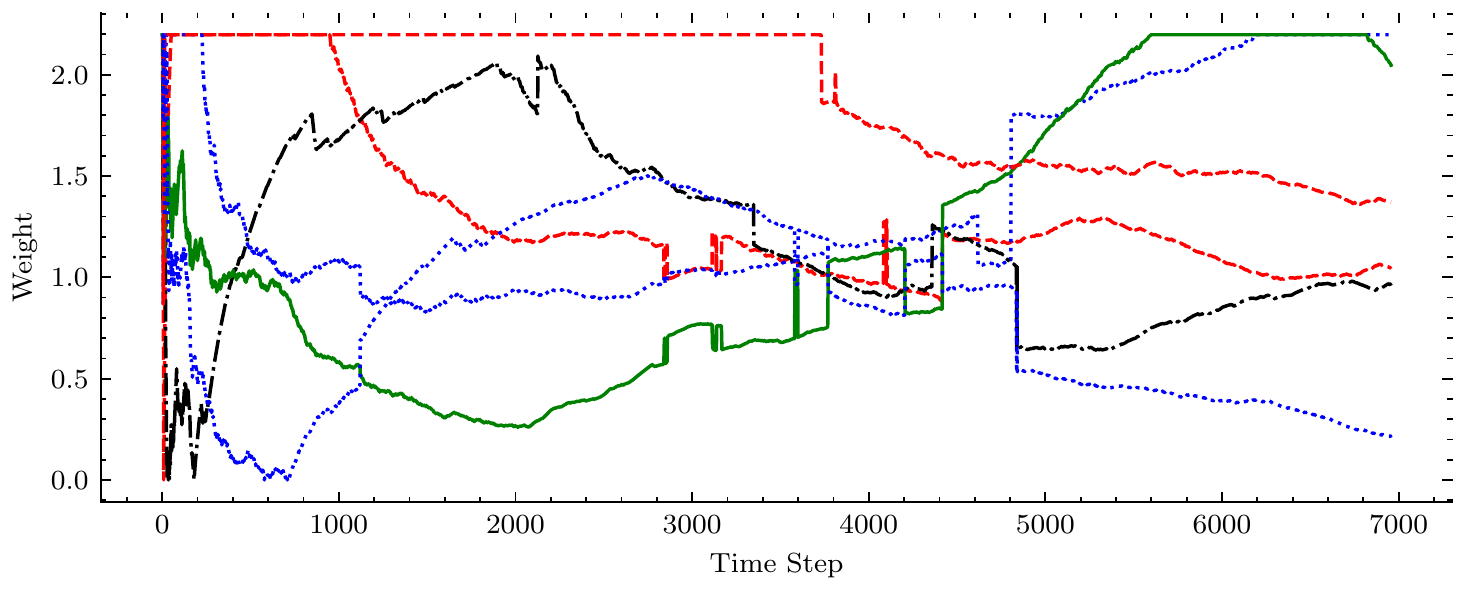}
\caption{\textbf{Tennis Rally.} The weights given to each weak labeler over time by a single run of our adaptive algorithm. The left is the ''no permute'' case and the right is the ''permute'' case.}
\label{figure:tennis_learner_weights}
\end{figure*}

\begin{figure*}[h!tbp]
\includegraphics[width=.49\textwidth]{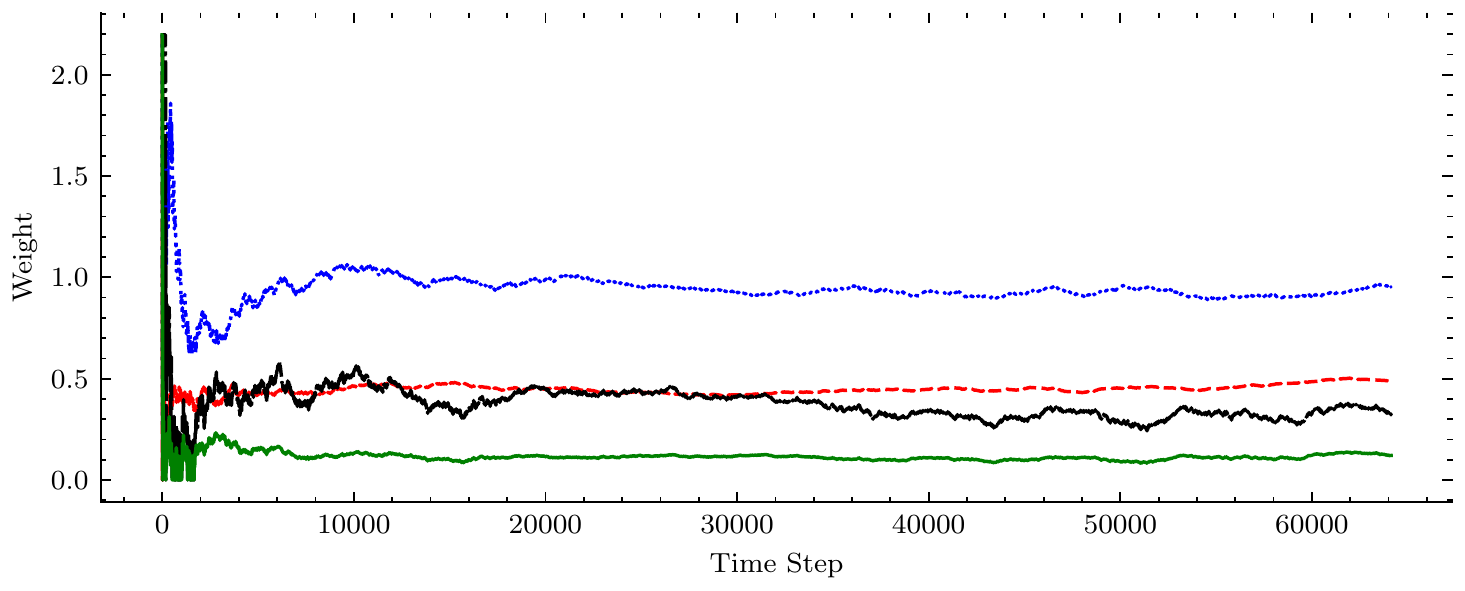}
\hfill
\includegraphics[width=.49\textwidth]{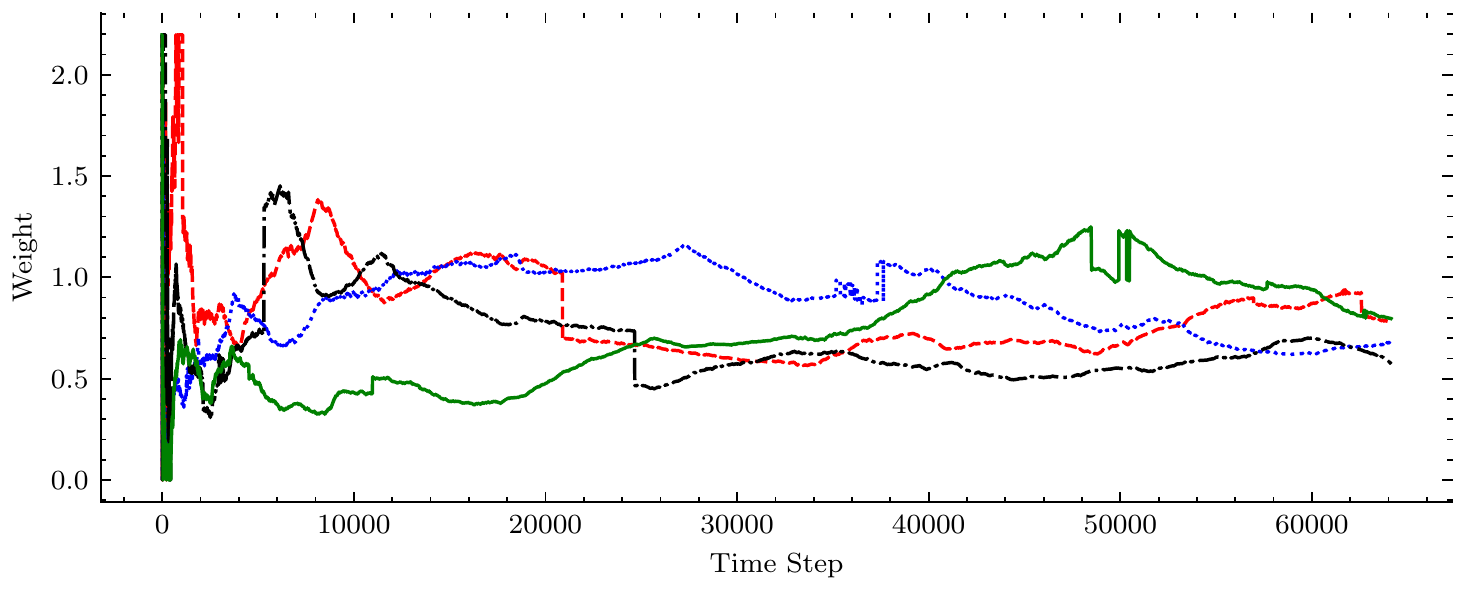}
\caption{\textbf{Commercial.} The weights given to each weak labeler over time by a single run of our adaptive algorithm. The left is the ''no permute'' case and the right is the ''permute'' case.}
\label{figure:commercial_learner_weights}
\end{figure*}

\newpage

\subsection{Image Classification}
\label{subsec:awa2_exp}

In this subsection, we report the experimental results for an image classification task with artificial drift built from the \emph{Animals with Attributes2} (AwA2) dataset \citep{xian2018zero}.

The {Animals with Attributes2}  dataset consists of images of animals from $50$ disjoint classes, that are split into $40$ seen classes, used for training, and $10$ unseen classes, used for testing.
The dataset also provides the relations among $85$ attributes (e.g., ``patches'') and classes through a binary class-attribute matrix, where each entry indicates whether animals from a certain class exhibit an attribute or not.
Following previous work~\citep{mazzetto:aistats21,mazzetto:icml21}, we obtain weak supervision sources by fine-tuning ResNet-18 models~\citep{he2016deep} on the seen classes to detect each of the attributes.

We use this dataset to construct a binary classification task with artificial drift. We define two target classes over the unseen test classes. The first target class contains images from the classes ``horse'' and ``sheep''; the second target class contains images from classes ``giraffe'' and ``bobcat.''
We use the class-attribute matrix to identify attributes that are helpful to distinguish between those two target classes.
An attribute is helpful if 1) it appears only in one of the target classes and 2) it consistently appears or does not appear in both classes of each target class.
Using this criteria, we choose the attribute detectors for ``black'', ``white'', ``orange'', ``yellow'', ``spots'', and ``domestic'' attributes as weak supervision sources.

To create a dataset with drift, we sample $5T$ images with repetition from the selected classes with $T=4000$.
We partition the data into five contiguous blocks of size $T$.
In the first block, we sample from ``sheep'' and ``bobcat'' classes with a probability of $0.1$ and from ``horse'' and ``giraffe'' classes with a probability of $0.9$. To create drift, we alternate the probability of sampling from each of the subclasses between $0.1$ and $0.9$ for consecutive blocks.



In Figure~\ref{fig:awa2_acc}, we visualize the accuracy of the adaptively selected window sizes and multiple fixed window sizes over time.
As expected, the accuracy of fixed window sizes changes over time.
For example, small window sizes achieve better accuracy shortly after a distribution shift occurs by limiting the number of out-of-distribution samples,
and large window sizes achieve better accuracy toward the end of each block by using more samples from the same distribution.
On the other hand, our algorithm successfully detects the drift and selects the best window size for each time step accordingly.
As a result, our algorithm maintains a close-to-optimal performance for most of the time steps.
These results emphasize that the optimal window size itself can change over time.

We report the window sizes selected by our algorithm at each time step in Figure~\ref{fig:awa2_ws_per_ts}.
Consistent with previous results on synthetic data, our algorithm successfully detects the drift and selects small window sizes to limit out-of-distribution samples.
At the same time, for stationary periods, our algorithm selects large window sizes to include more samples from the same distribution.

\begin{figure*}
  \hspace*{-0.5in}
  \centering
  \includegraphics[width=0.9\textwidth]{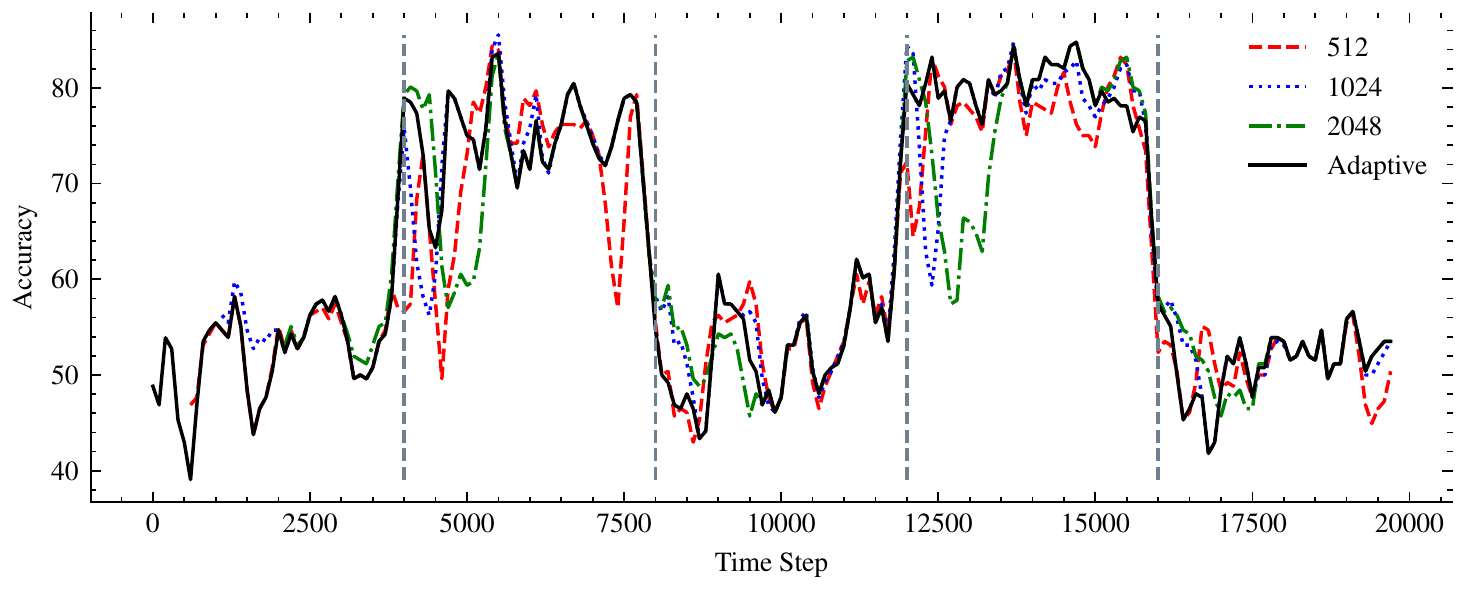}
  \caption{Accuracy of different window sizes for each time step for AwA2 dataset. The vertical lines mark a distribution shift. The accuracies are obtained from one random execution. Accuracies for window size $r$ are only reported for time steps $t \geq r$. For each time step $t$, the reported accuracy is an average over the next $256$ time steps, $X_t, \ldots, X_{t + 256 - 1}$.}
\label{fig:awa2_acc}
\end{figure*}

\begin{figure}
\centering
\includegraphics[width=0.5\textwidth]{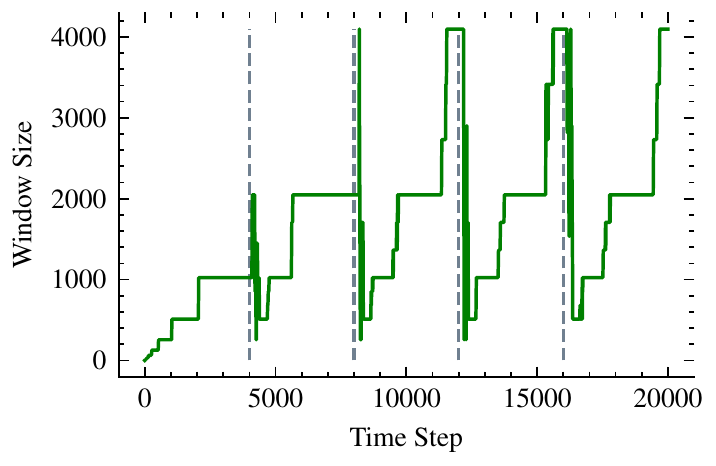} 
\caption{Adaptively selected window sizes for each time step for AwA2 dataset. Vertical lines mark a distribution shift. Each point is an average of three random executions.}
\label{fig:awa2_ws_per_ts}
\end{figure}

In Figure~\ref{fig:awa2_acc_vs_ws} and Table~\ref{table:awa2_acc_all}, we plot the average accuracy of the dynamically selected window sizes and multiple fixed window sizes for the AwA2 dataset.
Using a proper window size is crucial for achieving good performance.
Using too large or too small window sizes decreases the accuracy by up to $7$ percentage points.
On the other hand, our algorithm adapts to each time step and selects a close-to-optimal window size without any prior knowledge about the drift.
As a result, adaptive window sizes achieve better or comparable accuracy to any fixed strategy. 
Although some fixed strategies have a competitive accuracy on average, their accuracy changes significantly for different time steps.

\begin{figure}[h!]
\centering
\centering
\includegraphics[width=0.6\textwidth]{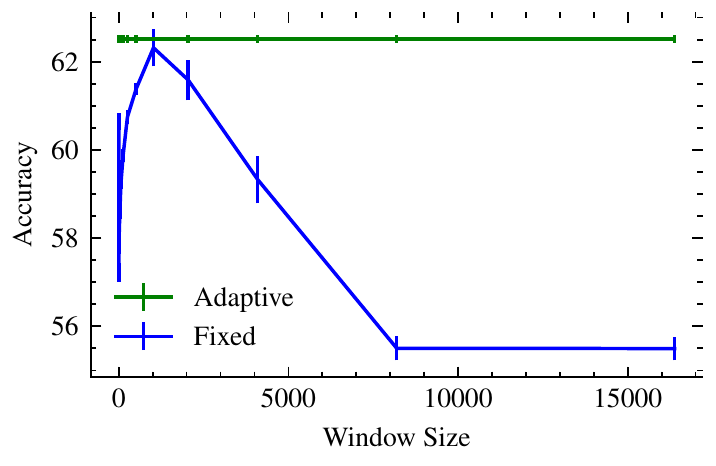} 
\caption{Accuracy of the dynamically selected window sizes as well as different fixed-window-size strategies for the AwA2 dataset. For each window size, the average accuracy and standard error of the mean are reported over three random executions.}
\label{fig:awa2_acc_vs_ws}
\end{figure}

\begin{table}[h!]
\centering
\caption{Accuracy of the dynamically selected window sizes as well as different fixed-window-size strategies for the AwA2 dataset. For each window size, the average accuracy and standard error of the mean are reported over three random executions.}
\begin{tabular}{lc} 
\toprule

Window Size & Accuracy \\
\midrule
Adaptive   & $62.51 \pm 0.09$ \\
All Past        & $55.49 \pm 0.27$ \\
Majority Vote          & $60.64 \pm 0.20$ \\
$2$          & $57.22 \pm 0.21$ \\
$4$          & $58.72 \pm 0.19$ \\
$8$          & $57.75 \pm 0.12$ \\
$16$         & $58.16 \pm 0.07$ \\
$32$         & $58.56 \pm 0.13$ \\
$64$         & $59.23 \pm 0.10$ \\
$128$        & $59.87 \pm 0.15$ \\
$256$        & $60.75 \pm 0.17$ \\
$512$        & $61.38 \pm 0.13$ \\
$1024$       & $62.32 \pm 0.42$ \\
$2048$       & $61.59 \pm 0.46$ \\
$4096$       & $59.33 \pm 0.54$ \\
$8192$       & $55.50 \pm 0.27$ \\
$16384$      & $55.49 \pm 0.27$ \\

\bottomrule
\vspace{1em}
\end{tabular}
\label{table:awa2_acc_all}
\end{table}

\newpage
\subsection{Hyper-parameters}
\label{subsec:hyper-parameters}
In this section, we discuss the hyper-parameters of the adaptive algorithm. The discussion of this section refers to the more general statement of our main result (\Cref{thm:main-2}). Our algorithm has the following hyper-parameters:
\begin{itemize}[noitemsep,topsep=0pt,parsep=0pt,partopsep=0pt]
\item The value $\delta \in (0,1)$ represents the failure probability of the algorithm. 
\item The sequence $\mathcal{R} = \{ r_1, \ldots, r_k \}$ represents the possible window sizes that the algorithm considers. In order to obtain better guarantees in \Cref{thm:main-2}, we look for a sequence $\mathcal{R}$ such that: $(i)$ the minimum ratio between consecutive elements $\gamma_m$ is large, as this avoids comparing window sizes that are very similar with one another and for which it is very hard to detect if drift occurred; $(ii)$  the maximum ratio between consecutive elements $\gamma_M$ is small, as this prevents a situation in which $\mathcal{R}$ is sparse, and there is no value in $\mathcal{R}$ that is close to the optimal window size. A natural choice for $\mathcal{R}$ is to use  a sequence of powers where $r_i = \gamma^i$ for some $\gamma > 1$, then $\gamma_m = \gamma_M = 1/\gamma$.
With our analysis, the best guarantees of the algorithm are achieved by using a sequence of powers of $1/(\sqrt{2}-1)^2$ as $\mathcal{R}$.

\item The value of $\beta$ affects the threshold used in our algorithm. Intuitively, the value of $\beta$ is proportional to how much drift the algorithm must observe before stopping, and it affects the sensitivity of our algorithm to detect drift. The optimal value of $\beta$ that minimizes the upper bound of our algorithm is $\beta=\sqrt{2}-1$.
\end{itemize}

In our experiments, we let $\delta = 0.1$ be an arbitrarily small failure probability. We let $\mathcal{R}=\{2^0,2^1,\dots, 2^{19}\}$. We use powers of $2$ rather than powers of $1/(\sqrt{2}-1)^2$ to define $\mathcal{R}$ (see discussion above) for ease as all the resulting window sizes are integers. 

We run an additional experiment
to see the effect of the value $\beta$ on the results of our adaptive algorithm.
We report the accuracy and the F1 score of our adaptive algorithm for the Basketball dataset (Figure~\ref{figure:basketball_acc+F1_vs_beta}) and for the Tennis Rally dataset (Figure~\ref{figure:tennis_acc+F1_vs_beta}). We observe that small values of $\beta$ provide similar results in both datasets, and the value $\beta = \sqrt{2}-1$ is also a good choice. Interestingly, we can notice that for the ``permute'' setup, higher values of $\beta$ lead to a decrease in performance for both datasets. This is most likely due to the fact that for higher values of $\beta$, the algorithm needs to observe a larger magnitude of drift in order to choose a smaller window size. For the ``permute'', it is necessary to react to the introduced drift to obtain better performance, hence smaller values of $\beta$ are more competitive. In our experiments, we choose $\beta = 0.1$ as an arbitrarily small value. 

\begin{figure*}[h!]
\includegraphics[width=.49\textwidth]{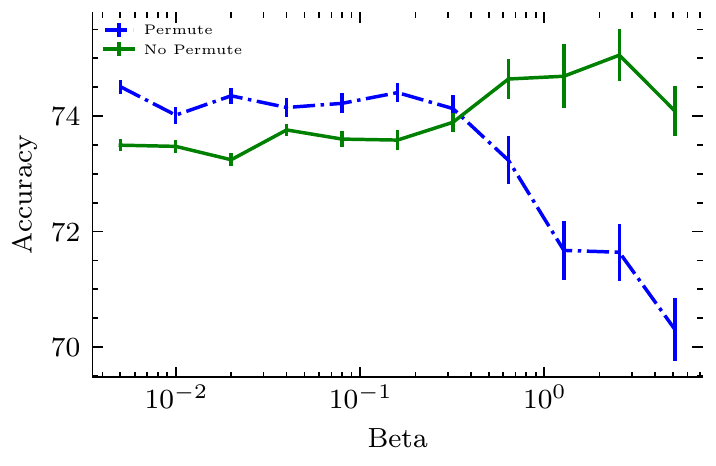}
\hfill
\includegraphics[width=.49\textwidth]{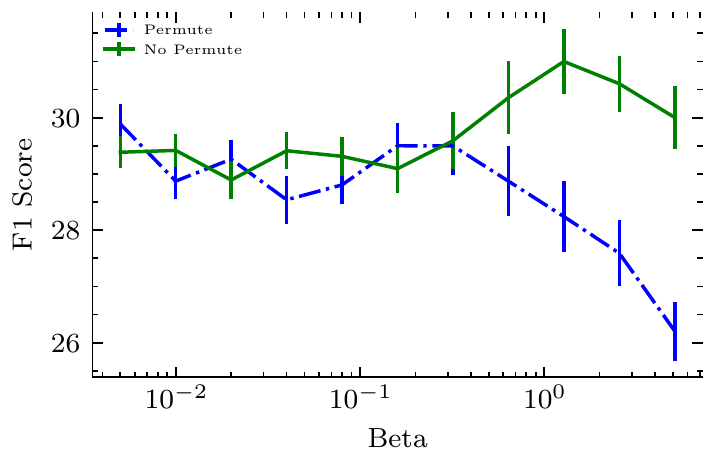}
\caption{Accuracy (left) and F1 score (right) of our adaptive algorithm for the Basketball dataset varying the value of $\beta$. The reported results are an average over $30$ runs.}
\label{figure:basketball_acc+F1_vs_beta}
\end{figure*}

\begin{figure*}[h!]
\includegraphics[width=.49\textwidth]{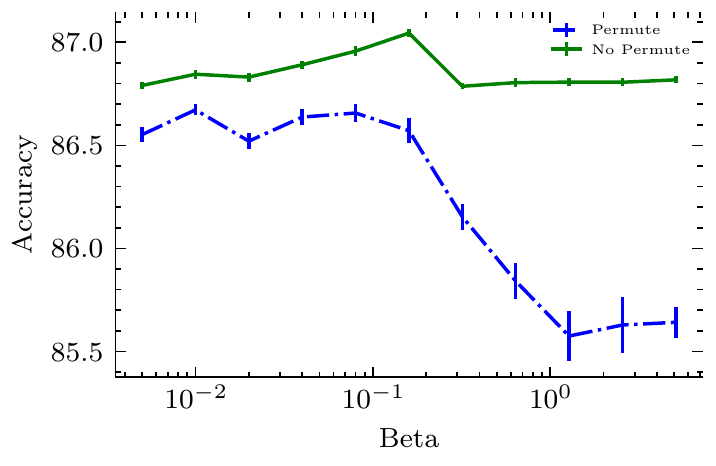}
\hfill
\includegraphics[width=.49\textwidth]{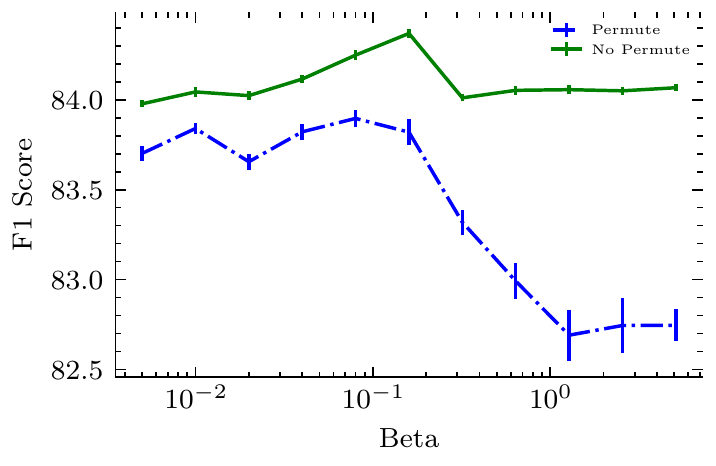}
\caption{Accuracy (left) and F1 score (right) of our adaptive algorithm for the Tennis Rally dataset varying the value of $\beta$. The reported results are an average over $30$ runs.}
\label{figure:tennis_acc+F1_vs_beta}
\end{figure*}

\newpage
\section{Deferred Proofs}
\label{sec:deferred-proof}
In this section, we prove our paper's main theorem (\Cref{thm:main}). We establish the following broader result, from which \Cref{thm:main} follows directly as a corollary. Compared to \Cref{thm:main}, the following theorem explicits the constants in the upper bound, and it allows to consider an arbitrary choice of window sizes $\mathcal{R} = \{ r_1,\ldots,r_m\}$ such that $r_1 < \ldots < r_m$.

\begin{theorem}
\label{thm:main-2}
Let Assumptions~\ref{assu:sample-independents}, \ref{assu:errors-independents} and \ref{assu:bias} hold. Let $\delta \in (0,1)$ and $\beta >0$. Let $\mathcal{R} = \{ r_1, \ldots, r_m \}$.  Assume $n \geq 3$. If we run Algorithm~\ref{alg:finalalgorithm} at time $t \geq r_1$, then with probability at least $1-\delta$ it provides an estimate $\hat{\bm{p}} = (\hat{p}_1,\ldots,\hat{p}_n)$ such that
\begin{align*}
   \left\lVert \bm{p}(t) - \hat{\bm{p}} \right\rVert_{\infty} \leq \frac{5\Phi_{\mathcal{R},\beta}}{2\tau^2}  \min_{r \in [r_1,\min(t,r_m)]} \Bigg(  \frac{A_{\delta,n,m}}{\sqrt{r}} +12\sum_{k=t-r+1}^{r-1} \lVert \bm{p}(k) - \bm{p}(k+1) \rVert_{\infty}\Bigg) 
\end{align*}
where $A_{\delta,n,m} \doteq \sqrt{ 2 \ln[(2m-1)\cdot n(n-1)/\delta]}$, and $
    \Phi_{\mathcal{R},\beta} = 1+ \max\left\{ \frac{2\beta + 2}{\gamma_m(1-\gamma_M)},   \frac{2\beta + 2}{\beta(1-\gamma_M)} \right\}$, with $\gamma_M = \max \sqrt{r_k/r_{k+1}}$ and $\gamma_m = \min\sqrt{r_k/r_{k+1}}$.  
\end{theorem}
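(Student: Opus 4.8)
The plan is to reduce the statement to a single bound on the estimated correlation matrix and then invoke \Cref{prop:covariance-to-expertise}. Writing $\hat k$ for the index selected by the while-loop and $\hat{\bm C}=\hat{\bm C}^{[r_{\hat k}]}(t)$, it suffices to show that with probability at least $1-\delta$,
\[
\lVert \hat{\bm C}-\bm C(t)\rVert_\infty \le \Phi_{\mathcal R,\beta}\min_{r\in[r_1,\min(t,r_m)]}\Big(\tfrac{A_{\delta,n,m}}{\sqrt r}+12\sum_{k=t-r+1}^{t-1}\lVert\bm p(k)-\bm p(k+1)\rVert_\infty\Big),
\]
since multiplying by $5/(2\tau^2)$ and applying \Cref{prop:covariance-to-expertise} gives the claim. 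I abbreviate $\sigma(r)=A_{\delta,n,m}/\sqrt r$, $V(r)=12\sum_{k=t-r+1}^{t-1}\lVert\bm p(k)-\bm p(k+1)\rVert_\infty$, $B(r)=\sigma(r)+V(r)$, and $\bar{\bm C}^{[r]}(t)=\frac1r\sum_{s=t-r+1}^t\bm C(s)$.

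First I would set up the good event. Since $\ell_i(X)^2=1$, only the $n(n-1)/2$ distinct off-diagonal entries carry statistical error. A direct computation shows that $\hat C^{[r_{k+1}]}_{ij}(t)-\hat C^{[r_k]}_{ij}(t)$ is a signed average $\sum_s c_s\,\ell_i(X_s)\ell_j(X_s)$ of independent $\{-1,1\}$ terms with $\sum_s c_s^2=(1-r_k/r_{k+1})/r_k$, which is exactly the quantity under the square root in the threshold on line~5. I therefore apply two-sided Hoeffding together with a union bound over the $n(n-1)/2$ entries and the $2m-1$ statistical quantities (the $m$ window estimators and the $m-1$ consecutive differences); this count is precisely what makes $A_{\delta,n,m}=\sqrt{2\ln[(2m-1)n(n-1)/\delta]}$ the correct scale. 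On the resulting event $\mathcal E$, simultaneously for all $k$ the window estimators obey $\lVert\hat{\bm C}^{[r_k]}(t)-\bar{\bm C}^{[r_k]}(t)\rVert_\infty\le\sigma(r_k)$ and the consecutive differences obey $\lVert(\hat{\bm C}^{[r_{k+1}]}(t)-\hat{\bm C}^{[r_k]}(t))-(\bar{\bm C}^{[r_{k+1}]}(t)-\bar{\bm C}^{[r_k]}(t))\rVert_\infty\le A_{\delta,n,m}\sqrt{(1-r_k/r_{k+1})/r_k}$. Combined with the deterministic drift bound $\lVert\bar{\bm C}^{[r]}(t)-\bm C(t)\rVert_\infty\le V(r)$ established inside \Cref{lemma:error-decomposition}, the first of these yields $\lVert\hat{\bm C}^{[r_k]}(t)-\bm C(t)\rVert_\infty\le B(r_k)$ for every $k$.

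The core is a two-sided (Lepski-type) analysis of the stopping index. For the upper direction, because the loop incremented $k$ from any $j\le\hat k$ up to $\hat k$, every test at indices $j,\dots,\hat k-1$ passed, so telescoping gives $\lVert\hat{\bm C}^{[r_{\hat k}]}(t)-\hat{\bm C}^{[r_j]}(t)\rVert_\infty\le\sum_{i=j}^{\hat k-1}\mathrm{thr}_i$ with $\mathrm{thr}_i\le(2\beta+1)\sigma(r_i)$; since $\sigma(r_i)\le\gamma_M^{\,i-j}\sigma(r_j)$ the geometric sum is at most $(2\beta+1)\sigma(r_j)/(1-\gamma_M)$, hence for all $j\le\hat k$,
\[
\lVert\hat{\bm C}^{[r_{\hat k}]}(t)-\bm C(t)\rVert_\infty\le\Big(1+\tfrac{2\beta+1}{1-\gamma_M}\Big)\sigma(r_j)+V(r_j).
\]
For the lower direction I control termination through the drift driving a test, $D_k:=\lVert\bar{\bm C}^{[r_{k+1}]}(t)-\bar{\bm C}^{[r_k]}(t)\rVert_\infty\le 2V(r_{k+1})$: by the difference concentration a test passes whenever $V(r_{k+1})\le\beta\sigma(r_k)$, while conversely, when the loop breaks at $\hat k$, one gets $D_{\hat k}>2\beta\sigma(r_{\hat k})$ and therefore $V(r_{\hat k+1})>\beta\sigma(r_{\hat k})$.

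Finally I would combine the two directions. Letting $k^\ast$ be the grid-optimal index, the case $k^\ast\le\hat k$ plugs $j=k^\ast$ into the displayed bound to get an error of order $B(r_{k^\ast})$, whereas the case $k^\ast>\hat k$ uses the break condition $\sigma(r_{\hat k})<V(r_{\hat k+1})/\beta\le V(r_{k^\ast})/\beta$ to bound the single-window error $B(r_{\hat k})$ by order $(1+1/\beta)B(r_{k^\ast})$. Converting the grid minimum to the continuous minimum over $[r_1,\min(t,r_m)]$ costs a factor $1/\gamma_m$, since $\sigma(r_k)\le\sigma(r)/\gamma_m$ for $r\in[r_k,r_{k+1}]$; collecting the two regimes and loosening to a common form reproduces the two branches of $\Phi_{\mathcal R,\beta}$. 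The edge cases where the loop exits because $k=m$ or $r_{k+1}>t$ rather than by a failed test are handled by noting that then $r_{\hat k}$ is the largest admissible window, so the $k^\ast\le\hat k$ branch applies directly. I expect the main obstacle to be exactly this last combination: forcing the data-driven stopping rule to be provably near the oracle window — neither overshooting nor undershooting — and funneling both regimes into a single clean constant, given that the drift $V$ is never observed and can only be controlled indirectly through the $\beta$-slack in the threshold and the geometric spacing of $\mathcal R$.
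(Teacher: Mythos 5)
Your proposal is correct and follows essentially the same route as the paper's proof: the same good event (concentration for the $m$ window estimators and the $m-1$ consecutive differences, with a union bound over $(2m-1)\cdot n(n-1)/2$ events yielding $A_{\delta,n,m}$, cf.\ \Cref{coro:required-event}), the same statistical-plus-drift decomposition with the factor-$12$ conversion from correlation drift to accuracy drift, the same two-case analysis at the stopping index (tests passed versus a failed test forcing the drift lower bound $V(r_{\hat k+1}) > \beta\sigma(r_{\hat k})$, exactly \Cref{prop:threshold-more}), the same $1/\gamma_m$ grid-to-continuum conversion, and the final invocation of \Cref{prop:covariance-to-expertise}. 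The only internal variation is that you telescope the empirical matrices along passed tests and sum a geometric series (a standard Lepski argument), whereas the paper instead proves monotonicity of the surrogate bound, $\mathcal{B}(r_{k+1}) \leq \mathcal{B}(r_k)$ (\Cref{prop:threshold-less}); both arguments are valid, and your constants are in fact slightly tighter before being loosened to the common form $\Phi_{\mathcal{R},\beta}$.
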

A more detailed discussion on the hyper-parameters $\beta$ and $\mathcal{R}$ is provided in \Cref{subsec:hyper-parameters}.

We outline the proof:
\begin{enumerate}
    \item We show that the estimation of the correlation matrix at time $t$ by using the previous $r$ samples can be decomposed into the sum of two error components: \emph{a statistical error} and a \emph{drift error} (\Cref{apx:error-decomposition}).
    \item In order to bound the statistical error, we use standard concentration inequalities to show that the estimation of the correlation matrix obtained by using the $r$ previous samples is close to its expected value with error $O(1/\sqrt{r})$ with high-probability (\Cref{apx:ub-statistical}).
    \item In order to upper bound the drift error, we show an inequality that relates the drift in correlation matrices over time with the drift of the accuracies of the weak labelers (\Cref{apx:ub-drift}).
    \item We use the previous results to show the trade-off between statistical error and drift error depicted in \Cref{lemma:error-decomposition} (\Cref{apx:lemma-proof}).
    \item We prove \Cref{thm:main-2}: we show how to dynamically select the window size in order to optimize the above trade-off (\Cref{apx:dynamic-window}).
\end{enumerate}

\subsection{Error Decomposition}
\label{apx:error-decomposition}
We define the   average correlation matrix over the previous $r$ samples as
\begin{align*}
\bm{C}^{[r]}(t) = \frac{1}{r} \sum_{k=t-r+1}^t \bm{C}(k) \enspace .
\end{align*}
We show the following error decomposition in the upper bound of the error of estimating the matrix $\bm{C}(t)$ by using the empirical matrix $\hat{\bm{C}}^{[r]}(t)$ induced by the previous $r$ samples.
\begin{prop}
\label{prop-apx:error-decomposition}
For any $1 \leq r \leq t$, we have that
\begin{align}
\label{eq:apx-decomposition}
    \left\lVert \bm{C}(t) - \hat{\bm{C}}^{[r]}(t)\right\rVert_{\infty} \leq \underbrace{ \left\lVert \bm{C}^{[r]}(t) - \hat{\bm{C}}^{[r]}(t)\right\rVert_{\infty}}_{\text{statistical error}} + \underbrace{\sum_{i=t-r+1}^{t-1} \left\lVert \bm{C}(i) - \bm{C}(i+1)\right\rVert_{\infty}}_{\text{drift error}} \enspace .
\end{align}
\end{prop}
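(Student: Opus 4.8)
The plan is to prove \Cref{prop-apx:error-decomposition} by inserting the deterministic average correlation matrix $\bm{C}^{[r]}(t)$ as an intermediate quantity and applying the triangle inequality for $\lVert \cdot \rVert_{\infty}$. Concretely, I would first write
\begin{align*}
\left\lVert \bm{C}(t) - \hat{\bm{C}}^{[r]}(t)\right\rVert_{\infty} \leq \left\lVert \bm{C}(t) - \bm{C}^{[r]}(t)\right\rVert_{\infty} + \left\lVert \bm{C}^{[r]}(t) - \hat{\bm{C}}^{[r]}(t)\right\rVert_{\infty} \enspace .
\end{align*}
The second term is exactly the statistical error appearing on the right-hand side of \eqref{eq:apx-decomposition} (it measures the deviation of the empirical matrix from the mean of the true matrices over the window, and will be controlled by concentration in \Cref{apx:ub-statistical}). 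It therefore remains only to bound the first, purely deterministic, term $\lVert \bm{C}(t) - \bm{C}^{[r]}(t)\rVert_{\infty}$ by the drift error $\sum_{i=t-r+1}^{t-1} \lVert \bm{C}(i) - \bm{C}(i+1)\rVert_{\infty}$.

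For this I would use the definition $\bm{C}^{[r]}(t) = \frac{1}{r}\sum_{k=t-r+1}^{t} \bm{C}(k)$ to rewrite the difference as an average of increments from the current matrix:
\begin{align*}
\bm{C}(t) - \bm{C}^{[r]}(t) = \frac{1}{r} \sum_{k=t-r+1}^{t} \bigl( \bm{C}(t) - \bm{C}(k) \bigr) \enspace .
\end{align*}
Applying the triangle inequality to pull $\lVert \cdot \rVert_{\infty}$ inside the sum, and then telescoping each term as $\bm{C}(t) - \bm{C}(k) = \sum_{i=k}^{t-1} \bigl( \bm{C}(i+1) - \bm{C}(i) \bigr)$, I obtain for each fixed $k$ in the window that $\lVert \bm{C}(t) - \bm{C}(k)\rVert_{\infty} \leq \sum_{i=k}^{t-1} \lVert \bm{C}(i+1) - \bm{C}(i)\rVert_{\infty}$.

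The only point requiring a moment of care is the final bookkeeping: since every index $k$ in the window satisfies $k \geq t-r+1$, each telescoping range $\{k,\ldots,t-1\}$ is contained in $\{t-r+1,\ldots,t-1\}$, so each per-$k$ sum is at most the full drift sum $\sum_{i=t-r+1}^{t-1} \lVert \bm{C}(i+1) - \bm{C}(i)\rVert_{\infty}$ (all summands being nonnegative). Averaging this uniform bound over the $r$ values of $k$ makes the factor $1/r$ cancel against the $r$ identical upper bounds, yielding $\lVert \bm{C}(t) - \bm{C}^{[r]}(t)\rVert_{\infty} \leq \sum_{i=t-r+1}^{t-1} \lVert \bm{C}(i+1) - \bm{C}(i)\rVert_{\infty}$, which equals the stated drift error since $\lVert \bm{C}(i+1) - \bm{C}(i)\rVert_{\infty} = \lVert \bm{C}(i) - \bm{C}(i+1)\rVert_{\infty}$. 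Combining with the initial triangle inequality gives \eqref{eq:apx-decomposition}. There is no genuine obstacle here; the argument is entirely a triangle-inequality-plus-telescoping computation, and the step most prone to an off-by-one slip is ensuring that the telescoping index ranges are correctly nested inside the window.
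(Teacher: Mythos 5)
Your proof is correct and matches the paper's argument essentially step for step: the same triangle inequality through $\bm{C}^{[r]}(t)$, followed by averaging over the window and telescoping each $\lVert \bm{C}(t) - \bm{C}(k)\rVert_{\infty}$ into the drift sum (the paper phrases your uniform per-$k$ bound as a supremum over the window, which is the same estimate). Your index bookkeeping is also correct, so nothing further is needed.
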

\begin{proof}
We use the triangle inequality, and obtain that:
\begin{align}
\label{eq-apx:tmp123}
    \left\lVert \bm{C}(t) - \hat{\bm{C}}^{[r]}(t)\right\rVert_{\infty} &= \left\lVert \bm{C}(t) - \bm{C}^{[r]}(t)+ \bm{C}^{[r]}(t) - \hat{\bm{C}}^{[r]}(t)\right\rVert_{\infty} \nonumber \\
    & \leq \left\lVert \bm{C}^{[r]}(t) - \hat{\bm{C}}^{[r]}(t)\right\rVert_{\infty}+ \left\lVert \bm{C}(t) - \bm{C}^{[r]}(t)\right\rVert_{\infty}\enspace .
\end{align}
Again, by using the triangle inequality, we obtain the following chain of inequalities
\begin{align}
\label{eq:useful-drift-inequality}
    \left\lVert \bm{C}(t) - \bm{C}^{[r]}(t)\right\rVert_{\infty} \leq \frac{1}{r}\sum_{i=t-r+1}^t \left\lVert \bm{C}(t) - \bm{C}(i)\right\rVert_{\infty} &\leq \sup_{t-r+1 \leq i \leq t} \lVert \bm{C}(t) - \bm{C}(i)\rVert_{\infty} \nonumber \\
    &\leq \sum_{i=t-r+1}^{t-1} \lVert  \bm{C}(i+1) - \bm{C}(i) \rVert_{\infty} \enspace .
\end{align}
By plugging the above inequality into \eqref{eq-apx:tmp123}, we obtain the statement.

\end{proof}

Observe that by definition, we have the following relation $\Exp \hat{\bm{C}}^{[r]}(t) = \bm{C}^{[r]}(t)$. The statistical error term describes how much the empirical estimation deviate to its expectation, i.e., it is equal to
\begin{align*}
    \left\lVert \hat{\bm{C}}^{[r]}(t) - \Exp \hat{\bm{C}}^{[r]}(t)\right\rVert_{\infty} \enspace .
\end{align*}
This error is related to the variance of $\hat{\bm{C}}^{[r]}$, and we will use a concentration inequality to provide an upper bound to this term (\Cref{apx:ub-statistical}).

The drift error term describes the estimation error due to a change in the accuracy of the weak labelers, and indeed it is equal to $0$ if no change occurs. In \Cref{apx:ub-drift}, we will show how to analytically relate this error to the drift of the weak labelers' accuracies.

\subsection{Upper Bound to the Statistical Error}
\label{apx:ub-statistical}
In this subsection, our main goal is to provide an upper bound to the statistical error term
\begin{align}
\label{eq:apx-tmp}
        \left\lVert \hat{\bm{C}}^{[r]}(t) - \Exp \hat{\bm{C}}^{[r]}(t)\right\rVert_{\infty} \enspace  = \left\lVert \bm{C}^{[r]}(t) - \hat{\bm{C}}^{[r]}(t)\right\rVert_{\infty}
\end{align}
by using a concentration inequality. The following result immediately follows by using McDiarmid's inequality.
\begin{prop}
\label{prop-apx:statistical-error}
Consider a pair of indexes $(i,j) \in \{1,\ldots,n\}^2$. Let $\delta > 0$. With probability at least $1-\delta$, it holds
\begin{align*}
    \left| {C}_{ij}^{[r]}(t) - \hat{{C}}_{ij}^{[r]}(t)\right| \leq \sqrt{\frac{2\ln(2/\delta)}{r}} \enspace .
\end{align*}
\begin{proof}  
Let $f(X_{t-r+1}, \ldots, X_t) = \hat{C}^{[r]}_{ij}(t)$. By definition of $\bm{C}(\cdot)$, it is easy to verify that
\begin{align*}
    \Exp f(X_{t-r+1}, \ldots, X_t) = \frac{1}{r}\sum_{k=t-r+1}^t C_{ij}(k) = C_{ij}^{[r]}(t) \enspace .
\end{align*}
Since each change of a single variable can change the value of $f$ by at most $2/r$, we can use McDiarmid's inequality, and obtain that with probability at least $1-\delta$, it holds that 
\begin{align}
\label{eq-apx:union-bound-statistical}
    \left| f- \Exp f \right|\leq \left| C^{[r]}_{ij}(t) - \hat{C}_{ij}^{[r]}(t)\right| \leq \sqrt{\frac{2\ln(2/\delta)}{r}} \enspace .
\end{align}
\end{proof}
\end{prop}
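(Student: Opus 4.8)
The plan is to apply the bounded-differences (McDiarmid) inequality directly to the random variable $\hat{C}_{ij}^{[r]}(t)$, viewed as a deterministic function of the $r$ independent samples $X_{t-r+1},\ldots,X_t$. First I would define
\[
f(x_{t-r+1},\ldots,x_t) \doteq \frac{1}{r}\sum_{k=t-r+1}^{t} \ell_i(x_k)\,\ell_j(x_k),
\]
so that $f(X_{t-r+1},\ldots,X_t) = \hat{C}_{ij}^{[r]}(t)$. By \Cref{assu:sample-independents} the arguments $X_{t-r+1},\ldots,X_t$ are mutually independent, which is exactly the hypothesis that McDiarmid's inequality requires. Taking expectations term by term and using the definition $C_{ij}(k) = \Exp_{X\sim D_k}[\ell_i(X)\ell_j(X)]$ gives $\Exp f = \frac{1}{r}\sum_{k=t-r+1}^t C_{ij}(k) = C_{ij}^{[r]}(t)$, so the quantity to be controlled is precisely $|f - \Exp f|$.

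The next step is to establish the bounded-differences constants. Since each labeler outputs a value in $\mathcal{Y} = \{-1,1\}$, every product $\ell_i(x_k)\ell_j(x_k)$ lies in $\{-1,1\}$, so the contribution $\frac{1}{r}\ell_i(x_k)\ell_j(x_k)$ of a single sample ranges over $\{-1/r,\,1/r\}$. Hence, fixing all coordinates but the $k$-th and replacing $x_k$ by any $x_k'$ changes the value of $f$ by at most $2/r$. Thus $f$ satisfies the bounded-differences property with constant $c_k = 2/r$ uniformly across all $r$ coordinates, and $\sum_{k} c_k^2 = r\cdot (2/r)^2 = 4/r$.

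Finally I would invoke McDiarmid's inequality, which yields $\Pr\bigl(|f-\Exp f|\geq \epsilon\bigr) \leq 2\exp\bigl(-2\epsilon^2 / \sum_k c_k^2\bigr) = 2\exp\bigl(-r\epsilon^2/2\bigr)$. Setting the right-hand side equal to $\delta$ and solving for $\epsilon$ gives $\epsilon = \sqrt{2\ln(2/\delta)/r}$, which is exactly the claimed deviation; rephrasing in the contrapositive form ``with probability at least $1-\delta$'' then completes the argument.

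There is essentially no conceptual obstacle: once independence of the coordinates and the per-coordinate sensitivity $2/r$ are in place, the result is a textbook application of McDiarmid. The only points requiring a moment of care are (i) confirming that the needed independence comes from \Cref{assu:sample-independents}, and that the conditional-independence \Cref{assu:errors-independents} plays no role in this purely statistical per-entry bound; and (ii) matching the constants, namely that the two-sided version of the inequality supplies the factor of $2$ inside the logarithm while the sensitivity $2/r$ (rather than $1/r$) is responsible for the factor of $2$ in the numerator of the final bound.
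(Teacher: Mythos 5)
Your proposal is correct and follows essentially the same route as the paper's proof: both apply McDiarmid's inequality to $f(X_{t-r+1},\ldots,X_t)=\hat{C}_{ij}^{[r]}(t)$ with per-coordinate sensitivity $2/r$, identify $\Exp f = C_{ij}^{[r]}(t)$, and solve $2\exp(-r\epsilon^2/2)=\delta$ to obtain $\epsilon=\sqrt{2\ln(2/\delta)/r}$. Your write-up is in fact slightly more explicit than the paper's, spelling out that the independence needed by McDiarmid comes from \Cref{assu:sample-independents} and that \Cref{assu:errors-independents} plays no role here, both of which are accurate.
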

An upper bound to the statistical error term \eqref{eq:apx-tmp} immediately follows by using the above proposition and taking an union bound over all possible indexes $i,j$. Since the matrices are symmetric, and the diagonal is always equal to $1$, it is sufficient to take an union bound over only $n(n-1)/2$ choices of those indexes. Thus, with probability at least $1-\delta$, it holds that
\begin{align}
\label{eq-apx:union-bound}
    \left\lVert \bm{C}^{[r]}(t) - \hat{\bm{C}}^{[r]}(t)\right\rVert_{\infty} \leq \sqrt{\frac{2\ln(n(n-1)/\delta)}{r}} \enspace .
\end{align}

For our algorithm, we will also need to show an upper bound to the error of estimating the difference between two correlation matrices with different window sizes. Since this result follows with a similar argument of \Cref{prop-apx:statistical-error}, we report it here.

\begin{prop}
\label{prop-apx:statistical-error-difference}
Consider a pair of indexes $(i,j) \in \{1,\ldots,n\}^2$. Let $\delta > 0$, and let $r,r'$ be two integers such that $1 \leq r < r' \leq t$. With probability at least $1-\delta$, it holds
\begin{align*}
    \left| \hat{C}_{ij}^{[r]}(t) - \hat{{C}}_{ij}^{[r']}(t) - C_{ij}^{[r]} + C_{ij}^{[r']}\right| \leq \sqrt{ \frac{2\ln(2/\delta)\left(1-r/r'\right)}{r}} \enspace .
\end{align*}
\end{prop}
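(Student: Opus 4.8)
The plan is to view the quantity of interest as a function of the $r'$ independent inputs $X_{t-r'+1},\dots,X_t$ and apply McDiarmid's inequality, exactly as in the proof of \Cref{prop-apx:statistical-error}. The only new feature is that the differencing structure produces two different bounded-difference coefficients, so the bookkeeping is slightly more involved.

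First, fix the pair $(i,j)$ and abbreviate $Z_k = \ell_i(X_k)\ell_j(X_k) \in \{-1,+1\}$. Since $r < r'$, the short window $\{t-r+1,\dots,t\}$ is contained in the long window $\{t-r'+1,\dots,t\}$, so I would split the long window into its recent part (the $r$ indices shared with the short window) and its older part (the remaining $r'-r$ indices). Writing $g(X_{t-r'+1},\dots,X_t) = \hat{C}_{ij}^{[r]}(t) - \hat{C}_{ij}^{[r']}(t)$ and collecting terms, each $Z_k$ in the recent part carries coefficient $\frac{1}{r}-\frac{1}{r'}$, while each $Z_k$ in the older part carries coefficient $-\frac{1}{r'}$. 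By linearity of expectation, $\Exp g = C_{ij}^{[r]}(t) - C_{ij}^{[r']}(t)$, which matches the quantity being centered in the statement.

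Next, I would read off the bounded-difference coefficients. Perturbing a single $X_k$ changes $Z_k$ by at most $2$, so the bounded difference is $2\bigl(\frac{1}{r}-\frac{1}{r'}\bigr)$ for each of the $r$ recent indices and $\frac{2}{r'}$ for each of the $r'-r$ older indices. The crucial computation is the sum of squares of these coefficients. Using $\frac{1}{r}-\frac{1}{r'}=\frac{r'-r}{rr'}$, the recent part contributes $4r\cdot\frac{(r'-r)^2}{r^2r'^2}$ and the older part contributes $4(r'-r)\cdot\frac{1}{r'^2}$, and I expect these to combine to the clean expression $\frac{4}{r}\bigl(1-\frac{r}{r'}\bigr)$. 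This cancellation is the one place where care is needed, and it is the main (minor) obstacle in the argument.

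Finally, applying McDiarmid's inequality with this sum of squared coefficients gives $\Pr(|g - \Exp g| \ge \epsilon) \le 2\exp\bigl(-\epsilon^2 r / [2(1-r/r')]\bigr)$, and solving for the $\epsilon$ that makes the right-hand side equal to $\delta$ yields exactly $\epsilon = \sqrt{2\ln(2/\delta)(1-r/r')/r}$, which is the claimed bound. Apart from the coefficient algebra in the sum of squares, every step is a direct reprise of the single-window argument in \Cref{prop-apx:statistical-error}.
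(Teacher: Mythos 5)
Your proposal is correct and follows essentially the same route as the paper's proof: the paper also writes $\hat{C}_{ij}^{[r]}(t) - \hat{C}_{ij}^{[r']}(t)$ as a function of the $r'$ independent samples, reads off the two bounded-difference coefficients $2\bigl(\tfrac{1}{r}-\tfrac{1}{r'}\bigr)$ and $\tfrac{2}{r'}$, and verifies the same cancellation $r\bigl(\tfrac{1}{r}-\tfrac{1}{r'}\bigr)^2 + \tfrac{r'-r}{r'^2} = \tfrac{1}{r}\bigl(1-\tfrac{r}{r'}\bigr)$ before applying McDiarmid's inequality. Your coefficient algebra and the resulting deviation bound match the paper exactly.
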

\begin{proof}
Let $f(X_{t-r'+1}, \ldots, X_t) = \hat{C}_{ij}^{[r]}(t) - \hat{C}_{ij}^{[r']}(t)$, and observe that \begin{align*}
    |f - \Exp f| =\left| C_{ij}^{[r]}(t) - C_{ij}^{[r']}(t) - \hat{C}_{ij}^{[r]}(t) + \hat{C}_{ij}^{[r']}(t)\right| \enspace .
\end{align*}
The function $f$ is equivalent to
\begin{align*}
    f(X_{t-r'+1}, \ldots, X_t) &= \hat{C}_{ij}^{[r]}(t) - \hat{C}_{ij}^{[r']}(t) \\
    &= \sum_{u=t-r+1}^{t} \left( \frac{1}{r} - \frac{1}{r'} \right) \ell_i(X_u)\ell_j(X_u) - \sum_{u=t-r'+1}^{t-r} \frac{1}{r'}\ell_i(X_u)\ell_j(X_u) \enspace .
\end{align*}
Thus, if we change the variable  $X_u$ with $t-r+1 \leq u \leq t$, the value of $f$ can change by at most $2\left( \frac{1}{r} - \frac{1}{r'} \right)$, and if we change the variable $X_u$ with $t-r'+1 \leq u \leq t-r$, the value of $f$ can change by at most $2/r'$. We can use McDiarmid's inequality, and obtain that with probability at least $1-\delta$, it holds that:
\begin{align}
      &\left| C_{ij}^{[r]}(t) - C_{ij}^{[r']}(t) - \hat{C}_{ij}^{[r]}(t) + \hat{C}_{ij}^{[r']}(t) \right| \nonumber \\
      &\leq \sqrt{\frac{\ln(2/\delta)}{2}} \sqrt{\sum_{u=t-r+1}^{t}4\left( \frac{1}{r} - \frac{1}{r'}\right)^2 + \sum_{u=t-r'+1}^{t-r} \frac{4}{r'^2}} \nonumber \\
      &= \sqrt{2\ln(2/\delta)} \sqrt{r\left( \frac{1}{r} - \frac{1}{r'}\right)^2 + (r'-r) \frac{1}{r'^2}} \nonumber \\ &=\sqrt{2\ln(2/\delta)}\sqrt{\frac{(r'-r)^2 + r(r'-r)}{rr'^2}} \nonumber \\
      &= \sqrt{2\ln(2/\delta)}\sqrt{\frac{r'(r'-r)}{r r'^2}} \nonumber \\
      &= \sqrt{  \frac{ 2\ln(2/\delta)(1-r/r')}{r}} \nonumber
\end{align}
\end{proof}

We end this subsection by providing a result that we quote during the explanation of the algorithm. While this result is not necessary to prove \Cref{thm:main}, it has a similar flavour than the previous proposition, and we report its proof here.
\begin{prop}
\label{prop-apx:intuition}
Let $1 \leq r \leq r' \leq t$. If $D_1 = \ldots = D_t$, then for any pair of indexes $i,j \in \{1,\ldots,\}^n$, it holds
\begin{align*}
    \Exp| \hat{C}_{ij}^{[r]}(t) - \hat{C}_{ij}^{[r']}(t)| \leq \sqrt{\frac{1}{r} - \frac{1}{r'}} \enspace .
\end{align*}
\end{prop}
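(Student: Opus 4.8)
The plan is to control the first absolute moment by the second moment via Jensen's inequality, and then to reduce that second moment to an elementary variance computation that independence makes tractable. Fix the pair $(i,j)$ and set $Z_u \doteq \ell_i(X_u)\ell_j(X_u)$ for $t-r'+1 \le u \le t$. Since $\ell_i,\ell_j$ take values in $\{-1,1\}$, each $Z_u \in \{-1,1\}$, and because $D_1 = \cdots = D_t$ the variables $Z_u$ are i.i.d.\ with common mean $\mu = C_{ij}$ and variance $\sigma^2 = 1-\mu^2 \le 1$. Writing $W = \hat{C}_{ij}^{[r]}(t) - \hat{C}_{ij}^{[r']}(t)$, both empirical averages are unbiased for $\mu$, so $\Exp W = 0$ and hence $\Exp[W^2] = \mathrm{Var}(W)$. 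Jensen's inequality then gives $\Exp|W| \le \sqrt{\mathrm{Var}(W)}$, so it suffices to establish $\mathrm{Var}(W) \le 1/r - 1/r'$.

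To compute the variance, I would split the window of size $r'$ into the most recent $r$ indices and the older $r'-r$ indices: let $S = \sum_{u=t-r+1}^{t} Z_u$ and $T = \sum_{u=t-r'+1}^{t-r} Z_u$, which are \emph{independent} sums of i.i.d.\ terms with $\mathrm{Var}(S) = r\sigma^2$ and $\mathrm{Var}(T) = (r'-r)\sigma^2$. Since $\hat{C}_{ij}^{[r]}(t) = S/r$ and $\hat{C}_{ij}^{[r']}(t) = (S+T)/r'$, we have $W = (1/r - 1/r')\,S - T/r'$, and independence yields $\mathrm{Var}(W) = (1/r - 1/r')^2\, r\sigma^2 + (r'-r)\,\sigma^2/r'^2$. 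A short algebraic simplification collapses this to $\sigma^2(1/r - 1/r')$, after which $\sigma^2 \le 1$ delivers the claimed bound.

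Honestly, there is no real obstacle here: the argument is routine once one decides to pass through the second moment rather than bound $\Exp|W|$ directly. The only point worth flagging is \emph{why} the variance collapses so cleanly. Conceptually, the larger-window average $\hat{C}_{ij}^{[r']}(t)$ and the increment $W$ are uncorrelated (a Pythagorean/martingale-type orthogonality for nested sample means), so $\mathrm{Var}(\hat{C}_{ij}^{[r]}(t)) = \mathrm{Var}(\hat{C}_{ij}^{[r']}(t)) + \mathrm{Var}(W)$, which immediately gives $\mathrm{Var}(W) = \sigma^2/r - \sigma^2/r'$ without any hand computation; one may present either this observation or the direct variance algebra, whichever is cleaner in context.
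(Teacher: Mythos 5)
Your proposal is correct and follows essentially the same route as the paper's proof: Jensen's inequality to pass to $\sqrt{\mathrm{Var}(W)}$, a split of the window into the independent blocks $\{t-r+1,\ldots,t\}$ and $\{t-r'+1,\ldots,t-r\}$, and the same variance algebra collapsing to $\sigma^2(1/r - 1/r')$. The only cosmetic differences are that the paper bounds $\mathbb{V}(Z_1)\le 1$ via Popoviciu's inequality where you use the exact Bernoulli-type identity $\sigma^2 = 1-\mu^2 \le 1$, and your closing orthogonality remark (nested sample means being uncorrelated) is a pleasant shortcut the paper does not state but is equivalent to its direct computation.
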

\begin{proof}
For $i=j$ the statement is trivially true as the difference is $0$. Let $i \neq j$. Consider the random variables $Z_k = \ell_i(X_{t-k+1})\ell_j(X_{t-k+1})$ for $1 \leq k \leq r'$. By assumption, the random variables are independent, and $Z_k \in [-1,1]$. By using the definition of $\hat{\bm{C}}^{[r]}$, we have:
\begin{align*}
    \Exp| \hat{C}_{ij}^{[r]}(t) - \hat{C}_{ij}^{[r']}(t)| &= \Exp \left| \frac{1}{r}\sum_{k=1}^r Z_k - \frac{1}{r'}\sum_{k=1}^{r'} Z_k \right| \\
    &\leq \sqrt{\mathbb{V}\left(\frac{1}{r}\sum_{k=1}^r Z_k - \frac{1}{r'}\sum_{k=1}^{r'} Z_k\right)} \enspace ,
\end{align*}
where in the last step we used Jensen's inequality. Now, we have that:
\begin{align*}
    \mathbb{V}\left(\frac{1}{r}\sum_{k=1}^r Z_k - \frac{1}{r'}\sum_{k=1}^{r'} Z_k\right) &=   \mathbb{V}\left(\left(\frac{1}{r} - \frac{1}{r'}\right)\sum_{k=1}^r Z_k - \frac{1}{r'}\sum_{k=r+1}^{r'} Z_k\right)\\
    &= \left[\left( \frac{1}{r} - \frac{1}{r'} \right)^2r + \frac{1}{r'^2}(r'-r) \right] \mathbb{V}(Z_1) \\
    &= \frac{r' - r}{rr'}\mathbb{V}(Z_1) \enspace .
\end{align*}
Since $Z_1 \in [-1,1]$, by Popoviciu's inequality we have that $\mathbb{V}(Z_1) \leq 1$. Hence, we can conclude that 
\begin{align*}
    \Exp| \hat{C}_{ij}^{[r]}(t) - \hat{C}_{ij}^{[r']}(t)| \leq \sqrt{\frac{1}{r} - \frac{1}{r'}} \enspace .
\end{align*}
\end{proof}

\subsection{Upper Bound to the Drift Error}
\label{apx:ub-drift}
In this subsection, we show how to provide an upper bound to the drift error term
\begin{align*}
\sum_{i=t-r+1}^{t-1} \left\lVert \bm{C}(i) - \bm{C}(i+1)\right\rVert_{\infty}
\end{align*}
as a function of the variation in the weak labelers' accuracies $\bm{p}(t-r+1), \ldots, \bm{p}(t)$. Intuitively, the correlation matrix does not change if the weak labelers' accuracies are the same, and a bounded drift in the those accuracies  also implies a small variation in the correlation matrix. This is formalized in the following proposition.
\begin{prop}
\label{apx-prop:drift-covariance-accuracy}
For any $1 \leq k \leq t-1$, the following inequality holds
\begin{align*}
\left\lVert \bm{C}(k) - \bm{C}(k+1)\right\rVert_{\infty} \leq 12 \lVert \bm{p}(k) - \bm{p}(k+1)\rVert_{\infty}
\end{align*}
\end{prop}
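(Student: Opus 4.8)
The plan is to reduce the matrix-norm inequality to a pointwise estimate on each entry of $\bm{C}(k) - \bm{C}(k+1)$ and then exploit the explicit form of the correlation matrix in terms of the accuracies. Since $\lVert \cdot \rVert_{\infty}$ denotes the entrywise maximum, it suffices to bound $|C_{ij}(k) - C_{ij}(k+1)|$ uniformly over all pairs $(i,j)$ and take the maximum. I would split the argument into the diagonal and the off-diagonal cases, since the two entail different expressions for $C_{ij}$.

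For the diagonal entries, I would observe that because $\ell_i(X) \in \{-1,1\}$ we have $\ell_i(X)^2 = 1$ almost surely, so $C_{ii}(t) = \Exp[\ell_i(X)^2] = 1$ for every $t$. Hence $C_{ii}(k) - C_{ii}(k+1) = 0$ and the diagonal contributes nothing; only the off-diagonal terms can drive the norm.

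For the off-diagonal entries ($i \neq j$), I would invoke the identity already recorded in the preliminary section, $C_{ij}(t) = (2p_i(t)-1)(2p_j(t)-1)$, which follows from \Cref{assu:errors-independents} by conditioning on the true label. Writing $\mu_i(t) = 2p_i(t)-1 \in [-1,1]$, the quantity to control is $|\mu_i(k)\mu_j(k) - \mu_i(k+1)\mu_j(k+1)|$. The standard add-and-subtract identity
\begin{align*}
\mu_i(k)\mu_j(k) - \mu_i(k+1)\mu_j(k+1) = \mu_j(k)\bigl(\mu_i(k) - \mu_i(k+1)\bigr) + \mu_i(k+1)\bigl(\mu_j(k) - \mu_j(k+1)\bigr),
\end{align*}
combined with $|\mu_i| \leq 1$ and $|\mu_i(k) - \mu_i(k+1)| = 2\,|p_i(k) - p_i(k+1)| \leq 2\lVert \bm{p}(k) - \bm{p}(k+1)\rVert_{\infty}$, would give a bound of $4\lVert \bm{p}(k) - \bm{p}(k+1)\rVert_{\infty}$ on each off-diagonal entry, which is comfortably below the claimed factor of $12$.

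Combining the two cases and taking the maximum over $(i,j)$ yields the statement. There is no genuine obstacle here: the proposition is essentially a Lipschitz estimate for the bilinear map $(p_i,p_j) \mapsto (2p_i-1)(2p_j-1)$ on $[0,1]^2$, and the only points requiring mild care are (i) checking that the diagonal terms cancel exactly and (ii) tracking the factor $2$ introduced by the reparametrization $p \mapsto 2p-1$. The loose constant $12$ in the statement simply absorbs this bookkeeping, since a sharp analysis already delivers $4$.
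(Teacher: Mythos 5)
Your proof is correct, and it rests on the same foundation as the paper's: both reduce the norm bound to an entrywise estimate and exploit the conditional-independence identity $C_{ij}(t) = (2p_i(t)-1)(2p_j(t)-1)$ (the paper re-derives it in expanded form, $C_{ij}(k) = 4p_i(k)p_j(k) - 2p_i(k) - 2p_j(k) + 1$, by conditioning on whether $\ell_i,\ell_j$ are correct). Where you diverge is in the bookkeeping, and your version is sharper. The paper applies the triangle inequality to the expanded polynomial, bounding $4\lvert p_ip_j(k) - p_ip_j(k+1)\rvert + 2\lvert p_i(k)-p_i(k+1)\rvert + 2\lvert p_j(k)-p_j(k+1)\rvert$ and then controlling the product term by a separate add-and-subtract argument that costs $2\rho_k$, for a total of $12\rho_k$. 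You instead keep the correlation in factored form, telescope directly on $\mu_i\mu_j$ with $\mu_i = 2p_i - 1 \in [-1,1]$, and pay only the factor $2$ from the reparametrization in each of the two terms, yielding the constant $4$. Your explicit treatment of the diagonal ($C_{ii}\equiv 1$, so those entries cancel) is also slightly more careful than the paper, which restricts to $i \neq j$ without comment. The practical upshot is that the constant $12$ in the proposition, and hence in \Cref{lemma:error-decomposition} and the drift term of the main theorem, could be tightened to $4$ by your argument; since these constants are eventually absorbed into the $O(\cdot)$ of \Cref{thm:main}, nothing downstream changes, but your route shows the stated constant is loose by a factor of $3$.
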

\begin{proof}
Consider coordinates $i,j$ such that $i \neq j$.
By definition of $C_{i,j}(k)$, we have that
\begin{align*}
    C_{i,j}(k) = \Exp_{X \sim D_k} \left[ \ell_i(X) \cdot \ell_j(X) \right] \enspace .
\end{align*}
We have that $\ell_i(X) \cdot \ell_j(X)$ is equal to $1$ if and only if $\ell_i$ and $\ell_j$ are both either correct or incorrect, and is equal to $-1$ otherwise. By using the definition of $p_i(k)$ and \Cref{assu:errors-independents}, we have that 
\begin{align*}
C_{i,j}(k)= \Exp_{X \sim D_k} \left[ \ell_i(X) \cdot \ell_j(X) \right] &=  p_i(k) p_j(k) + (1-p_i(k))(1-p_j(k)) \nonumber \\ & \hspace{15pt}- p_i(k)(1-p_j(k)) - p_j(k)(1-p_i(k)) \nonumber \\
&= 4p_i(k)p_j(k) - 2p_i(k) - 2p_j(k) + 1 \nonumber \enspace .
\end{align*}
Hence, we have that
\begin{align}
\label{eq:tmp2}
    | C_{i,j}(k) - C_{i,j}(k+1)| &= \big| 4p_i(k)p_j(k) - 4p_i(k+1)p_j(k+1) \nonumber \\ &\hspace{25pt}+ 2p_i(k+1) + 2p_j(k+1)- 2p_i(k) - 2p_j(k)\big| \nonumber \\
    &\leq 4\big| p_i(k)p_j(k) - p_i(k+1)p_j(k+1) \big| \nonumber \\
     &\hspace{25pt}+2\big| p_i(k+1) - p_i(k) \big| + 2\big| p_j(k+1) - p_j(k) \big| \enspace .
\end{align}
where the first inequality follows from the triangle inequality. For ease of notation, let $\rho_k = \lVert \bm{p}(k+1) - \bm{p}(k)\rVert_{\infty}$.  We have that
\begin{align*}
p_i(k)p_j(k) & = p_i(k)(p_j(k) + p_j(k+1) - p_j(k+1))\\
&\leq p_i(k)p_j(k+1) + p_i(k)|p_j(k) - p_j(k+1)| \\
&\leq p_i(k)p_j(k+1) + \rho_k  \\
&= (p_i(k)-p_i(k+1)+p_i(k+1))p_j(k+1) + \rho_k  \\
&\leq p_i(k+1)p_j(k+1) + 2 \rho_k \enspace .
\end{align*}
which implies that $p_i(k)p_j(k)-p_i(k+1)p_j(k+1) \leq 2\rho_k$. Similarly, we can show that $p_i(k+1)p_j(k+1)-p_i(k)p_j(k) \leq 2\rho_k$, hence we have that $|p_i(k+1)p_j(k+1)-p_i(k)p_j(k)| \leq 2\rho_k$. 
By using this inequality in \eqref{eq:tmp2}, we obtain $| C_{i,j}(k) - C_{i,j}(k+1)|\leq 12 \rho_k$. The statement follows by substituting the definition of $\rho_k$.
\end{proof}

\subsection{Proof of \Cref{lemma:error-decomposition}}
\label{apx:lemma-proof}
\begin{proof}
With \Cref{prop-apx:error-decomposition}, we have that:
\begin{align*}
\left\lVert \bm{C}(t) - \hat{\bm{C}}^{[r]}(t)\right\rVert_{\infty} \leq  \left\lVert \bm{C}^{[r]}(t) - \hat{\bm{C}}^{[r]}(t)\right\rVert_{\infty} + \sum_{i=t-r+1}^{t-1} \left\lVert \bm{C}(i) - \bm{C}(i+1)\right\rVert_{\infty} \enspace .
\end{align*}
In order to conclude the result, we upper bound each term of the right-hand side of the above inequality individually. We use \Cref{prop-apx:statistical-error} and take an union bound over $n(n-1)/2$ coordinates (see also \eqref{eq-apx:union-bound}), and we obtain that with probability at least $1-\delta$, it holds
\begin{align*}
    \left\lVert \bm{C}^{[r]}(t) - \hat{\bm{C}}^{[r]}(t)\right\rVert_{\infty} \leq \sqrt{\frac{2\ln(n(n-1)/\delta)}{r}}
\end{align*}
\Cref{apx-prop:drift-covariance-accuracy} yields the following upper bound:
\begin{align*}
\sum_{i=t-r+1}^{t-1} \left\lVert \bm{C}(i) - \bm{C}(i+1)\right\rVert_{\infty} \leq 12\sum_{i=t-r+1}^{t-1} \left\lVert \bm{p}(i) - \bm{p}(i+1)\right\rVert_{\infty} \enspace
\end{align*}

\end{proof}

\subsection{Dynamic Selection of the Window Size (\Cref{thm:main-2})}
\label{apx:dynamic-window}
In this subsection, we show how to adaptively choose the number of past samples that minimizes a trade-off between the statistical error and the drift error.  
As a prerequisite, our algorithm requires that the used empirical quantities provide a good approximation of their estimated expectations. If this is not the case, we simply assume that our algorithm fails, and this happens with probability $\leq \delta$. The next corollary formalizes this required guarantee on the estimation.
We remind the definition of the value $A_{\delta,n,m}$ in the statement of \Cref{thm:main-2}:
\begin{align*}
A_{\delta,n,m} = \sqrt{2 \ln[(2m-1)\cdot n(n-1)/\delta]}
\end{align*}
\begin{coro}
Let $\delta > 0$
\label{coro:required-event}
Let $\mathcal{R} = \{r_1,\ldots,r_m\}$. 
With probability at least $1-\delta$, it holds:
\begin{align*}
    &\lVert \bm{C}(t) - \hat{\bm{C}}^{[r_k]}(t)  \rVert_{\infty} \leq \frac{A_{\delta,n,m}}{\sqrt{r_k}} + \left\lVert \bm{C}(t) - \bm{C}^{[r]}(t)\right\rVert_{\infty}   &\forall k \leq m \\
    &\lVert \bm{C}^{[r_k]}(t) - \bm{C}^{[r_{k+1}]}(t) - \hat{\bm{C}}^{[r_k]}(t) + \hat{\bm{C}}^{[r_{k+1}]}(t)  \rVert_{\infty} \leq A_{\delta,n,m}\sqrt{ \frac{1-r_k/r_{k+1}}{r_k}}   &\forall k \leq m-1
\end{align*}
\end{coro}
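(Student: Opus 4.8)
The plan is to obtain the corollary as a union bound over the two families of concentration events established in \Cref{prop-apx:statistical-error} and \Cref{prop-apx:statistical-error-difference}, combined with the triangle-inequality decomposition already recorded inside the proof of \Cref{prop-apx:error-decomposition} (specifically the intermediate bound \eqref{eq-apx:tmp123}, before the drift term was expanded into a telescoping sum). No new probabilistic argument is required; the work is purely in choosing the per-event failure probability so that the union-bound constant lands precisely inside the logarithm defining $A_{\delta,n,m}$.

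First I would count the scalar events that must be controlled. For the first displayed inequality I need, for each of the $m$ window sizes $r_k$, a bound on the statistical error $\lVert \bm{C}^{[r_k]}(t) - \hat{\bm{C}}^{[r_k]}(t)\rVert_\infty$; for the second inequality I need, for each of the $m-1$ consecutive pairs $(r_k,r_{k+1})$, a bound on the corresponding difference norm. Each such matrix-norm quantity is a maximum over entries $(i,j)$, but the diagonal entries are deterministic---since $\ell_i(X)^2 = 1$ forces $\hat{C}_{ii}^{[r]}(t) = C_{ii}(t) = 1$---so they contribute zero error, and by symmetry it suffices to control the $n(n-1)/2$ strictly upper-triangular entries. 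This yields $m\cdot\tfrac{n(n-1)}{2} + (m-1)\cdot\tfrac{n(n-1)}{2} = (2m-1)\cdot\tfrac{n(n-1)}{2}$ scalar events in total.

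Next I would invoke each entrywise concentration bound with the per-event failure probability $\delta' \doteq \tfrac{2\delta}{(2m-1)n(n-1)}$, so that a union bound over all $(2m-1)n(n-1)/2$ events fails with probability at most $\delta$. The algebraic point is that $2/\delta' = (2m-1)n(n-1)/\delta$, hence $\sqrt{2\ln(2/\delta')} = A_{\delta,n,m}$. Substituting $\delta'$ into \Cref{prop-apx:statistical-error} gives $\lvert C_{ij}^{[r_k]}(t) - \hat{C}_{ij}^{[r_k]}(t)\rvert \le A_{\delta,n,m}/\sqrt{r_k}$ for every off-diagonal entry and every $k \le m$, and substituting it into \Cref{prop-apx:statistical-error-difference} with $r = r_k$, $r' = r_{k+1}$ produces the second displayed inequality with the factor $\sqrt{(1-r_k/r_{k+1})/r_k}$. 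Maximizing over $(i,j)$ in each case recovers the two $\lVert\cdot\rVert_\infty$ statements.

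For the first displayed bound I would then combine the statistical-error control with the intermediate triangle inequality from \eqref{eq-apx:tmp123}, namely $\lVert \bm{C}(t) - \hat{\bm{C}}^{[r_k]}(t)\rVert_\infty \le \lVert \bm{C}^{[r_k]}(t) - \hat{\bm{C}}^{[r_k]}(t)\rVert_\infty + \lVert \bm{C}(t) - \bm{C}^{[r_k]}(t)\rVert_\infty$: on the good event the first term is at most $A_{\delta,n,m}/\sqrt{r_k}$, and the second term is exactly the drift quantity appearing on the right-hand side of the statement. The main obstacle---such as it is---is the bookkeeping: one must count exactly $(2m-1)n(n-1)/2$ events (using determinism of the diagonal and symmetry to avoid over-counting) so that the logarithmic constant matches $A_{\delta,n,m}$ on the nose; every bit of concentration has already been done in the two preceding propositions, so nothing beyond this accounting remains.
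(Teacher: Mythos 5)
Your proposal is correct and follows essentially the same route as the paper's proof: a triangle inequality reducing the first bound to statistical error plus the drift term, followed by a union bound over the $(2m-1)\cdot n(n-1)/2$ entrywise events supplied by \Cref{prop-apx:statistical-error} and \Cref{prop-apx:statistical-error-difference}, with the per-event failure probability chosen so the logarithmic constant equals $A_{\delta,n,m}$. Your explicit bookkeeping (deterministic diagonal, symmetry, and $\delta' = 2\delta/\bigl((2m-1)n(n-1)\bigr)$) merely spells out what the paper compresses into ``the statement immediately follows by an inspection of the value $A_{\delta,n,m}$.''
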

\begin{proof}
By using the triangle inequality, we have that
\begin{align*}
    \lVert \bm{C}(t) - \hat{\bm{C}}^{[r_k]}(t)  \rVert_{\infty} \leq \left\lVert \bm{C}^{[r_k]}(t) - \hat{\bm{C}}^{[r_k]}(t)\right\rVert_{\infty} + \left\lVert \bm{C}(t) - \bm{C}^{[r]}(t)\right\rVert_{\infty}
\end{align*}
We use \Cref{prop-apx:statistical-error} and \Cref{prop-apx:statistical-error-difference} to upper bound respectively $ \lVert \bm{C}(t) - \hat{\bm{C}}^{[r_k]}(t)  \rVert_{\infty}$ and  $\lVert \bm{C}^{[r_k]}(t) - \bm{C}^{[r_{k+1}]}(t) - \hat{\bm{C}}^{[r_k]}(t) + \hat{\bm{C}}^{[r_{k+1}]}(t)  \rVert_{\infty}$. Those propositions provide a guarantee for a single choice of window sizes and coordinates: we take an union bound over $n(n-1)/2$ choice of coordinates and $m+(m-1)$ different choice of window sizes, hence we take an union bound over $(2m-1)n(n-1)/2$ events. The statement immediately follows by an inspection of the value $A_{\delta,n,m}$.
\end{proof}

Throughout this subsection, we assume that the event of \Cref{coro:required-event} holds, otherwise our algorithm fails (with probability $\leq \delta$). Let $\beta$, $\gamma_m$ and $\gamma_M$ be defined as in \Cref{thm:main-2}.
We define the following function:
\begin{align*}
    \mathcal{B}(r) = \frac{A_{\delta,n,m}}{\sqrt{r}}\frac{2\beta+2}{1-\gamma_M} + \left\lVert \bm{C}(t) - \bm{C}^{[r]}(t) \right\rVert_{\infty}
\end{align*}
The value $\mathcal{B}(r)$ is the upper bound that our algorithm guarantees to $\lVert C(t) - \hat{C}^{[r]}(t) \rVert_{\infty}$ using any value $r \in \mathcal{R}$. In fact, we have that
\begin{align*}
    \mathcal{B}(r) &= \frac{A_{\delta,n,m}}{\sqrt{r}}\frac{2\beta+2}{1-\gamma_M} + \left\lVert \bm{C}(t) - \bm{C}^{[r]}(t) \right\rVert_{\infty} \\
    &\geq  \frac{A_{\delta,n,m}}{\sqrt{r}} + \left\lVert \bm{C}(t) - \bm{C}^{[r]}(t) \right\rVert_{\infty} \\
    &\geq  \lVert \bm{C}(t) - \hat{\bm{C}}^{[r]}(t) \rVert_{\infty} \hspace{150pt} \forall r \in \mathcal{R} \enspace ,
\end{align*}
where the last inequality follows from \Cref{coro:required-event}.  For any value $k \leq m-1$, also let
\begin{align*}
    \mathcal{T}(k) \doteq 2 \beta A_{\delta,n,m} \sqrt{\frac{1}{r_k}} + A_{\delta,n,m} \sqrt{\frac{1- r_{k}/r_{k+1}}{r_k}} \enspace ,
\end{align*}
and observe that this is the quantity used as a threshold in Line~5 of the algorithm at iteration $k$.

The proof of \Cref{thm:main-2} revolves around the following two Propositions~\ref{prop:threshold-less} and~\ref{prop:threshold-more}.
\begin{compactenum}
    \item We guarantee that if $\lVert \hat{\bm{C}}^{[r_{k+1}]}(t)  - \hat{\bm{C}}^{[r_k]}(t) \rVert_{\infty}$ is smaller than the threshold $\mathcal{T}(k)$, then a negligeable drift occured, and the upper bound $\mathcal{B}(r_{k+1})$ is smaller than $\mathcal{B}(r_k)$ (\Cref{prop:threshold-less}) In this case, we can keep iterating.
    \item On the other hand, if $\lVert \hat{\bm{C}}^{[r_{k+1}]}(t)  - \hat{\bm{C}}^{[r_k]}(t) \rVert_{\infty}$ is greater than the threshold $\mathcal{T}(k)$, a sizeable drift occurred, and we can provide a lower bound on the drift error (\Cref{prop:threshold-more}). In this case, we can stop iterating and return the current window size $r_k$.
\end{compactenum}

We prove those two propositions.
\begin{prop}
\label{prop:threshold-less}
Let the event of \Cref{coro:required-event} hold. Then, for any $1 \leq k \leq m-1$
\begin{align*}
    \lVert \hat{\bm{C}}^{[r_k]} - \hat{\bm{C}}^{[r_{k+1}]} \rVert_{\infty} \leq \mathcal{T}(k) \Longrightarrow \mathcal{B}(r_{k+1}) \leq \mathcal{B}(r_k)
\end{align*}
\end{prop}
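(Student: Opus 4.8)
The plan is to rewrite the target inequality $\mathcal{B}(r_{k+1}) \leq \mathcal{B}(r_k)$ purely in terms of the drift quantity $\Delta_j \doteq \lVert \bm{C}(t) - \bm{C}^{[r_j]}(t)\rVert_{\infty}$. Since the first summand of $\mathcal{B}$ is strictly decreasing in the window size while the second summand is exactly $\Delta_j$, and since $r_{k+1} > r_k$, the claim is equivalent to
\[
\Delta_{k+1} - \Delta_k \;\leq\; \frac{(2\beta+2)A_{\delta,n,m}}{1-\gamma_M}\left(\frac{1}{\sqrt{r_k}} - \frac{1}{\sqrt{r_{k+1}}}\right).
\]
So it suffices to upper bound the increase $\Delta_{k+1}-\Delta_k$ in the drift error incurred by enlarging the window from $r_k$ to $r_{k+1}$.

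First I would control $\Delta_{k+1} - \Delta_k$ by a single averaged-matrix difference: by the triangle inequality $\Delta_{k+1} \leq \Delta_k + \lVert \bm{C}^{[r_k]}(t) - \bm{C}^{[r_{k+1}]}(t)\rVert_{\infty}$, hence $\Delta_{k+1} - \Delta_k \leq \lVert \bm{C}^{[r_k]}(t) - \bm{C}^{[r_{k+1}]}(t)\rVert_{\infty}$. Next I would pass from the unobservable averaged matrices to the observable empirical ones, again by the triangle inequality, splitting off the estimation error of the \emph{difference} $\bm{C}^{[r_k]}-\bm{C}^{[r_{k+1}]} - (\hat{\bm{C}}^{[r_k]}-\hat{\bm{C}}^{[r_{k+1}]})$. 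Under the event of \Cref{coro:required-event} this term is at most $A_{\delta,n,m}\sqrt{(1-r_k/r_{k+1})/r_k}$, whereas the empirical difference $\lVert \hat{\bm{C}}^{[r_k]} - \hat{\bm{C}}^{[r_{k+1}]}\rVert_{\infty}$ is at most $\mathcal{T}(k)$ by hypothesis. Substituting the definition of $\mathcal{T}(k)$ and combining the two $\sqrt{(1-r_k/r_{k+1})/r_k}$ contributions gives
\[
\Delta_{k+1} - \Delta_k \;\leq\; \frac{A_{\delta,n,m}}{\sqrt{r_k}}\left(2\beta + 2\sqrt{1 - r_k/r_{k+1}}\right).
\]

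Finally I would close the gap by an elementary scalar comparison. Writing $x \doteq \sqrt{r_k/r_{k+1}} \in (0,1)$, the last bound equals $\tfrac{A_{\delta,n,m}}{\sqrt{r_k}}\bigl(2\beta + 2\sqrt{1-x^2}\bigr)$, while the right-hand side of the equivalent target equals $\tfrac{(2\beta+2)A_{\delta,n,m}}{1-\gamma_M}\cdot\tfrac{1-x}{\sqrt{r_k}}$ (using $\tfrac{1}{\sqrt{r_k}}-\tfrac{1}{\sqrt{r_{k+1}}}=\tfrac{1-x}{\sqrt{r_k}}$). Dividing through by $A_{\delta,n,m}/\sqrt{r_k}$, the proposition reduces to $2\beta + 2\sqrt{1-x^2} \leq (2\beta+2)\tfrac{1-x}{1-\gamma_M}$, which follows from two facts: $\sqrt{1-x^2}\leq 1$ yields $2\beta + 2\sqrt{1-x^2}\leq 2\beta+2$, and $x\leq\gamma_M$ yields $\tfrac{1-x}{1-\gamma_M}\geq 1$, so the right-hand side is at least $2\beta+2$.

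I do not anticipate a genuine obstacle; the work is bookkeeping so that the threshold $\mathcal{T}(k)$ interacts cleanly with the estimation-error bound, together with the observation that the inflation factor $(2\beta+2)/(1-\gamma_M)$ built into $\mathcal{B}$ is precisely what absorbs both the $\beta$-term and the $\sqrt{1-x^2}$ term. The only slightly delicate point is uniformity over consecutive pairs: the bound must hold for every $k$ with the \emph{same} constant, which is why the worst-case ratio $\gamma_M$ rather than the pair-specific ratio $x$ appears in the denominator of $\mathcal{B}$, and invoking $x\leq\gamma_M$ at the final step is exactly what accommodates this.
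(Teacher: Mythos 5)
Your proposal is correct and follows essentially the same route as the paper's proof: both bound the increase in drift error $\lVert \bm{C}(t)-\bm{C}^{[r_{k+1}]}(t)\rVert_{\infty}-\lVert \bm{C}(t)-\bm{C}^{[r_k]}(t)\rVert_{\infty}$ by $\lVert \bm{C}^{[r_k]}(t)-\bm{C}^{[r_{k+1}]}(t)\rVert_{\infty}$ via the triangle inequality, split that into the empirical difference (at most $\mathcal{T}(k)$ by hypothesis) plus the concentration term $A_{\delta,n,m}\sqrt{(1-r_k/r_{k+1})/r_k}$ from \Cref{coro:required-event}, and close with the same elementary estimates $\sqrt{1-x^2}\leq 1$ and $x=\sqrt{r_k/r_{k+1}}\leq\gamma_M$. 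Your final scalar inequality $2\beta+2\sqrt{1-x^2}\leq(2\beta+2)(1-x)/(1-\gamma_M)$ is exactly the paper's statement that its bracketed term is nonpositive, merely rearranged.
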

\begin{proof}
We have that 
\begin{align}
\label{eq:tmp123}
    \mathcal{B}(r_{k+1}) - \mathcal{B}(r_k) =  {A_{\delta,n,m}}\frac{2\beta+2}{1-\gamma_M}\left[ \sqrt{ \frac{1}{r_{k+1}}} -   \sqrt{ \frac{1}{r_{k}}}\right] + \lVert \bm{C}(t) - \bm{C}^{[r_{k+1}]}(t) \rVert_{\infty} - \lVert \bm{C}(t) - \bm{C}^{[r_{k}]}(t) \rVert_{\infty}
\end{align}
We can obtain the following upper 
\begin{align*}
    &\lVert \bm{C}(t) - \bm{C}^{[r_{k+1}]}(t) \rVert_{\infty} - \lVert \bm{C}(t) - \bm{C}^{[r_{k}]}(t) \rVert_{\infty} \\ \leq \hspace{5pt} &  \lVert \bm{C}^{[r_{k+1}]}(t) - \bm{C}^{[r_k]}(t) \rVert_{\infty} \\ 
    = \hspace{5pt} & \lVert \hat{\bm{C}}^{[r_k]}(t) - \hat{\bm{C}}^{[r_{k+1}]}(t) - \hat{\bm{C}}^{[r_k]}(t) + \hat{\bm{C}}^{[r_{k+1}]}(t) +  \bm{C}^{[r_k]}(t) - \bm{C}^{[r_{k+1}]}(t)\rVert_{\infty}\\ 
    \leq  \hspace{5pt} & \lVert \hat{\bm{C}}^{[r_k]}(t) - \hat{\bm{C}}^{[r_{k+1}]}(t) \rVert_{\infty} + \lVert -  \hat{\bm{C}}^{[r_k]}(t) + \hat{\bm{C}}^{[r_{k+1}]}(t)  +   \bm{C}^{[r_k]}(t) - \bm{C}^{[r_{k+1}]}(t)\rVert_{\infty} \\
    \leq \hspace{5pt} & \mathcal{T}(k) + A_{\delta,n,m}\sqrt{ \frac{1-r_k/r_{k+1}}{r_k}}
\end{align*}
where the first two inequalities are due to the triangle inequality, and the last inequality is due to the assumption of the proposition statement and \Cref{coro:required-event}. By plugging the above inequality in \eqref{eq:tmp123} and using the definition of $\mathcal{T}(k)$, we obtain 
\begin{align}
\label{eq:tmp124}
     \mathcal{B}(r_{k+1}) - \mathcal{B}(r_k) \leq \frac{A_{\delta,n,m}}{\sqrt{r_k}}\left[\frac{2\beta+2}{1-\gamma_M} \sqrt{\frac{r_{k}}{r_{k+1}}} - \frac{2\beta+2}{1-\gamma_M} + 2\beta +2\sqrt{1-r_k/r_{k+1}}  \right] 
\end{align}
We have that
\begin{align*}
    &\left[\frac{2\beta+2}{1-\gamma_M} \sqrt{\frac{r_{k}}{r_{k+1}}} - \frac{2\beta+2}{1-\gamma_M} + 2\beta +2\sqrt{1-r_k/r_{k+1}}  \right] \\ \leq \hspace{5pt}  &\left[ \frac{2\beta+2}{1-\gamma_M} \gamma_M - \frac{2\beta+2}{1-\gamma_M} + 2\beta +2 \right] \\
    =  \hspace{5pt} & 0
\end{align*}
By using this inequality in \eqref{eq:tmp124}, we finally obtain that $\mathcal{B}(r_{k+1}) - \mathcal{B}(r_k) \leq 0$.
\end{proof}
This proposition guarantees that every time the If of Line~5 of the algorithm is true, then $\mathcal{B}(r_{k+1})$ is an upper bound at least as good as $\mathcal{B}(r_k)$. Conversely, the next proposition shows that when the If of Line~5 is false, a drift must have occurred.
\begin{prop}
\label{prop:threshold-more}
Let the event of \Cref{coro:required-event} hold. Then, for any $1 \leq k \leq m-1$
\begin{align*}
    \left\lVert \hat{\bm{C}}^{[r_k]} - \hat{\bm{C}}^{[r_{k+1}]} \right\rVert_{\infty} >\mathcal{T}(k) \Longrightarrow \sum_{u=t-r_{k+1}+1}^{t-1} \lVert \bm{C}(u) - \bm{C}(u+1) \rVert_{\infty} > A_{\delta,n,m}\cdot \beta/\sqrt{r_k}
\end{align*}
\begin{proof}
We have that:
\begin{align*}
      \lVert \hat{\bm{C}}^{[r_k]} - \hat{\bm{C}}^{[r_{k+1}]} \rVert_{\infty} &\leq A_{\delta,n,m}\sqrt{\frac{1-r_{k}/r_{k+1}}{r_k}} + \lVert \bm{C}^{[r_k]}(t) - \bm{C}^{[r_{k+1}]}(t) \rVert_{\infty} \\
      &\leq A_{\delta,n,m}\sqrt{\frac{1-r_{k}/r_{k+1}}{r_k}} + \lVert C(t) - C^{[r_{k+1}]}(t) \rVert_{\infty} + \lVert C^{[r_k]}(t) - C(t) \rVert_{\infty}
\end{align*}
where the first inequality is due to \Cref{coro:required-event}, and the second inequality is due to the triangle inequality. By using  \eqref{eq:useful-drift-inequality}, we can show that
\begin{align*}
    \lVert \bm{C}(t) - \bm{C}^{[r_{k+1}]}(t) \rVert_{\infty} + \lVert \bm{C}^{[r_k]}(t) - \bm{C}(t) \rVert_{\infty} \leq 2\sum_{u=t-r_{k+1}+1}^{t-1}\lVert \bm{C}(u) - \bm{C}(u+1)  \rVert_{\infty} \enspace .
\end{align*}
Hence, by using the assumption of the proposition and the definition of $\mathcal{T}(k)$, we obtain the following inequality:
\begin{align*}
    2\sum_{u=t-r_{k+1}+1}^{t-1}\lVert \bm{C}(u) - \bm{C}(u+1)  \rVert_{\infty}  > 2 \beta A_{\delta,n,m}/\sqrt{r_k} \enspace ,
\end{align*}
and the statement immediately follows.
\end{proof}
\end{prop}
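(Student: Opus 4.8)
The plan is to establish the contrapositive-flavored chain that turns a large observed discrepancy $\lVert \hat{\bm{C}}^{[r_k]} - \hat{\bm{C}}^{[r_{k+1}]}\rVert_{\infty}$ into a guaranteed lower bound on the accumulated drift of the true correlation matrices over the window of size $r_{k+1}$. The overall strategy is to insert and cancel the true average matrices $\bm{C}^{[r_k]}(t)$ and $\bm{C}^{[r_{k+1}]}(t)$ so that the observed empirical gap is split into a purely statistical deviation (controlled by the second inequality of \Cref{coro:required-event}) and a true drift gap $\lVert \bm{C}^{[r_k]}(t) - \bm{C}^{[r_{k+1}]}(t)\rVert_{\infty}$, then bound the latter by the total drift.

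First I would apply the triangle inequality in the form
\begin{align*}
\lVert \hat{\bm{C}}^{[r_k]} - \hat{\bm{C}}^{[r_{k+1}]} \rVert_{\infty} &\leq \lVert \hat{\bm{C}}^{[r_k]} - \hat{\bm{C}}^{[r_{k+1}]} - \bm{C}^{[r_k]}(t) + \bm{C}^{[r_{k+1}]}(t) \rVert_{\infty} \\ &\quad + \lVert \bm{C}^{[r_k]}(t) - \bm{C}^{[r_{k+1}]}(t) \rVert_{\infty},
\end{align*}
and invoke the second bound of \Cref{coro:required-event} to replace the first term by $A_{\delta,n,m}\sqrt{(1-r_k/r_{k+1})/r_k}$. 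For the residual true-matrix gap, I would again split through $\bm{C}(t)$ via the triangle inequality to get $\lVert \bm{C}^{[r_k]}(t) - \bm{C}(t)\rVert_{\infty} + \lVert \bm{C}(t) - \bm{C}^{[r_{k+1}]}(t)\rVert_{\infty}$, and bound each of these by the telescoping drift sum using inequality~\eqref{eq:useful-drift-inequality}. Since both windows of size $r_k$ and $r_{k+1}$ are contained in the window reaching back to $t-r_{k+1}+1$, each term is at most $\sum_{u=t-r_{k+1}+1}^{t-1}\lVert \bm{C}(u)-\bm{C}(u+1)\rVert_{\infty}$, so their sum is at most twice this quantity, as asserted in the displayed intermediate inequality.

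Combining these pieces yields
\[
\mathcal{T}(k) < \lVert \hat{\bm{C}}^{[r_k]} - \hat{\bm{C}}^{[r_{k+1}]}\rVert_{\infty} \leq A_{\delta,n,m}\sqrt{\tfrac{1-r_k/r_{k+1}}{r_k}} + 2\sum_{u=t-r_{k+1}+1}^{t-1}\lVert \bm{C}(u) - \bm{C}(u+1)\rVert_{\infty}.
\]
Finally I would substitute the definition $\mathcal{T}(k) = 2\beta A_{\delta,n,m}/\sqrt{r_k} + A_{\delta,n,m}\sqrt{(1-r_k/r_{k+1})/r_k}$ and cancel the common statistical term on both sides, leaving $2\beta A_{\delta,n,m}/\sqrt{r_k} < 2\sum_{u}\lVert \bm{C}(u)-\bm{C}(u+1)\rVert_{\infty}$, which rearranges to the claim. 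The only subtle point — and the step I would double-check carefully — is the bookkeeping that both telescoping bounds genuinely fit inside the single sum over $[t-r_{k+1}+1,\,t-1]$; this is where \eqref{eq:useful-drift-inequality} and the nesting $r_k < r_{k+1}$ do the work, and everything else is routine triangle-inequality manipulation followed by exact cancellation.
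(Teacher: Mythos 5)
Your proposal is correct and follows essentially the same route as the paper's proof: the same split of the empirical gap into a statistical term (bounded via the second inequality of \Cref{coro:required-event}) plus the true gap $\lVert \bm{C}^{[r_k]}(t) - \bm{C}^{[r_{k+1}]}(t)\rVert_{\infty}$, the same triangle-inequality pass through $\bm{C}(t)$ with \eqref{eq:useful-drift-inequality} and the nesting $r_k < r_{k+1}$ producing the factor of $2$, and the same cancellation of the common statistical term against the definition of $\mathcal{T}(k)$. Nothing is missing.
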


In the following Lemma, we use Proposition~\ref{prop:threshold-less} and \ref{prop:threshold-more} to show that the matrix $\hat{\bm{C}}$ of Line~7 of the algorithm provides a good approximation of $\bm{C}(t)$. Theorem~\ref{thm:main-2} immediately follows from this result by using \Cref{prop:covariance-to-expertise}.

\begin{lemma}
\label{lemma:main-result}
Consider the setting of \Cref{thm:main-2}. Let $\hat{C}$ be the matrix defined at Line~7 of the algorithm. With probability at least $1-\delta$, it holds that
\begin{align*}
    \lVert \hat{\bm{C}} - \bm{C}(t)\rVert_{\infty} \leq \Phi_{\mathcal{R},\beta} \min_{r \in [r_1,\min(t,r_m)]} \left(  \frac{A_{\delta,n,m}}{\sqrt{r}} + \sum_{u=t-r+1}^{t-1} \lVert C(u) - C(u+1)\rVert_{\infty}\right)
\end{align*}
\end{lemma}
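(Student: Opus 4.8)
The plan is to compare the window size $\hat r = r_{\hat k}$ returned by the algorithm against the real-valued minimizer $r^\star \in [r_1,\min(t,r_m)]$ of the right-hand side, and to charge the quantity $\mathcal{B}(\hat r)$ to the optimum. Let me abbreviate $\mathcal{O} \doteq \frac{A_{\delta,n,m}}{\sqrt{r^\star}}+\sum_{u=t-r^\star+1}^{t-1}\lVert \bm C(u)-\bm C(u+1)\rVert_\infty$. Since $\hat{\bm C}=\hat{\bm C}^{[\hat r]}(t)$ by Line~7, the chain of inequalities establishing $\mathcal B(r)\ge\lVert \hat{\bm C}^{[r]}(t)-\bm C(t)\rVert_\infty$ (displayed just before the propositions) gives $\lVert\hat{\bm C}-\bm C(t)\rVert_\infty\le\mathcal B(\hat r)$, so it suffices to prove $\mathcal B(\hat r)\le\Phi_{\mathcal R,\beta}\,\mathcal{O}$. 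Throughout I condition on the event of \Cref{coro:required-event}, which holds with probability at least $1-\delta$ and under which both \Cref{prop:threshold-less} and \Cref{prop:threshold-more} are valid. I let $r_j$ be the largest element of $\mathcal R$ with $r_j\le r^\star$ (it exists since $r_1\le r^\star$) and split on whether $\hat r\ge r_j$ or $\hat r<r_j$.

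First I would record two monotonicity facts used in both cases. The drift sum $\sum_{u=t-r+1}^{t-1}\lVert\bm C(u)-\bm C(u+1)\rVert_\infty$ is non-decreasing in $r$, and combined with \eqref{eq:useful-drift-inequality} it bounds $\lVert \bm C(t)-\bm C^{[r]}(t)\rVert_\infty$ by the drift sum over the same window. For the statistical term, the geometric spacing of $\mathcal R$ gives $r_{j+1}>r^\star$ (by maximality of $r_j$), hence $r_j\ge \gamma_m^2 r_{j+1}>\gamma_m^2 r^\star$, so that $A_{\delta,n,m}/\sqrt{r_j}\le \gamma_m^{-1} A_{\delta,n,m}/\sqrt{r^\star}$; when $r_j=r_m=r^\star$ this holds trivially since $\gamma_m<1$.

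In the case $\hat r\ge r_j$ the loop never broke before reaching $r_j$, so \Cref{prop:threshold-less} applies at every increment and yields the chain $\mathcal B(\hat r)\le\cdots\le\mathcal B(r_j)$. Expanding $\mathcal B(r_j)$, bounding its statistical part by $\gamma_m^{-1}A_{\delta,n,m}/\sqrt{r^\star}$ and its drift part by the drift sum at $r^\star$, I obtain $\mathcal B(r_j)\le \frac{2\beta+2}{\gamma_m(1-\gamma_M)}\frac{A_{\delta,n,m}}{\sqrt{r^\star}}+\sum_{u=t-r^\star+1}^{t-1}\lVert \bm C(u)-\bm C(u+1)\rVert_\infty\le \Phi_{\mathcal R,\beta}\,\mathcal{O}$, where the last step uses $\Phi_{\mathcal R,\beta}\ge\max\{1,\frac{2\beta+2}{\gamma_m(1-\gamma_M)}\}$, so each of the two terms of $\mathcal{O}$ is scaled by at most $\Phi_{\mathcal R,\beta}$.

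The remaining case $\hat r<r_j$ is the heart of the argument, and the main obstacle. Here the loop broke at index $\hat k$ with $r_{\hat k+1}\le r_j\le r^\star\le t$ (so the loop was genuinely executing at step $\hat k$), whence \Cref{prop:threshold-more} certifies a sizeable drift, $\sum_{u=t-r_{\hat k+1}+1}^{t-1}\lVert\bm C(u)-\bm C(u+1)\rVert_\infty> \beta A_{\delta,n,m}/\sqrt{\hat r}$. Because this window is contained in the one of length $r^\star$, the same lower bound holds for the drift sum at $r^\star$, giving $A_{\delta,n,m}/\sqrt{\hat r}<\beta^{-1}\mathcal{O}$: the early stop is not a liability but a certificate that lets me charge the seemingly-large statistical error of the small window $\hat r$ to the drift term of $\mathcal{O}$. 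Plugging this, together with $\lVert\bm C(t)-\bm C^{[\hat r]}(t)\rVert_\infty\le\mathcal{O}$ (drift monotonicity, since $\hat r<r^\star$), into $\mathcal B(\hat r)=\frac{2\beta+2}{1-\gamma_M}\frac{A_{\delta,n,m}}{\sqrt{\hat r}}+\lVert\bm C(t)-\bm C^{[\hat r]}(t)\rVert_\infty$ yields $\mathcal B(\hat r)<\bigl(1+\frac{2\beta+2}{\beta(1-\gamma_M)}\bigr)\mathcal{O}\le\Phi_{\mathcal R,\beta}\,\mathcal{O}$. The conclusion then follows by feeding $\hat{\bm C}$ into \Cref{prop:covariance-to-expertise}; the delicate points are recognizing the stopping rule as a lower bound on $\mathcal{O}$ and checking that the constants collapse exactly onto the two branches of the maximum defining $\Phi_{\mathcal R,\beta}$.
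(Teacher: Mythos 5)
Your proposal is correct and takes essentially the same route as the paper's proof: your case split on $\hat r \ge r_j$ versus $\hat r < r_j$ is logically equivalent to the paper's split on whether $\hat k = m$ or $r^* < r_{\hat k + 1}$ (your $r_j$ is exactly the paper's $r_{\tilde k}$), and both branches use the same ingredients --- the chain $\mathcal{B}(r_{\hat k}) \le \mathcal{B}(r_{\tilde k})$ from \Cref{prop:threshold-less} with the geometric-spacing bound giving the constant $1 + \frac{2\beta+2}{\gamma_m(1-\gamma_M)}$, and the stopping rule as a drift certificate via \Cref{prop:threshold-more} giving $1 + \frac{2\beta+2}{\beta(1-\gamma_M)}$. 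The only cosmetic slip is calling $r^\star$ a ``real-valued'' minimizer (it is an integer, as the drift sum requires), which does not affect the argument.
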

\begin{proof}
Assume that the event of \Cref{coro:required-event} holds (otherwise we say that our algorithm fails, with probability $\leq \delta$). For ease of notation, let $\nu = (2\beta+2)/(1-\gamma_M)$. Let $\hat{k} \leq m$ be the value such that $\hat{\bm{C}} = \hat{\bm{C}}^{[r_{\hat{k}}]}(t)$. We remind that the algorithm guarantees an upper bound $\mathcal{B}(r_{\hat{k}})$ to the estimation error  $\lVert \hat{\bm{C}} - \bm{C}(t)\rVert_{\infty}$ Let $r^*$ be the integer that minimizes \begin{align*}
    r^* = \mathrm{argmin}_{r \in [r_1, \min(t,r_m)]} \left( \frac{A_{\delta,n,m}}{\sqrt{r}} + \sum_{u=t-r+1}^{t-1} \lVert C(u) - C(u+1)\rVert_{\infty}\right),
\end{align*} 
and let $\mathcal{B}^*$ be the minimum value of the above expression, i.e.
\begin{align*}
    \mathcal{B}^* = \frac{A_{\delta,n,m}}{\sqrt{r^*}} + \sum_{u=t-r^*+1}^{t-1} \lVert C(u) - C(u+1)\rVert_{\infty}
\end{align*}
In order to prove the lemma, it is sufficient to show that $\mathcal{B}(r_{\hat{k}})/\mathcal{B}^* \leq \Phi_{\mathcal{R},\beta}$.

We distinguish two cases: $(a)$ $\hat{k} = m$ or $r^* < r_{\hat{k}+1}$ and $(b)$ $r^* \geq r_{\hat{k}+1}$.
Consider case $(a)$. Let $\tilde{k}$ be the largest integer such that $r_{\tilde{k}} \leq r^*$. By construction, we can observe that $\tilde{k} \leq \hat{k}$. Since the algorithm did not interrupt in the iterations $1,\ldots,\tilde{k}, \ldots, \hat{k}$, we can use \Cref{prop:threshold-less}, to show that $\mathcal{B}(r_{\hat{k}}) \leq \mathcal{B}(r_{\tilde{k}})$. Hence, we have that:
\begin{align*}
    \frac{\mathcal{B}(r_{\hat{k}})}{\mathcal{B}^*} \leq  \frac{\mathcal{B}(r_{\tilde{k}})}{\mathcal{B}^*} &= \frac{\nu \cdot A_{\delta,n,m}/\sqrt{r_{\tilde{k}}} + \left\lVert \bm{C}(t) - \bm{C}^{[r_{\tilde{k}}]}(t)\right\rVert_{\infty}}{A_{\delta,n,m}/\sqrt{r^*} + \sum_{u=t-r^*+1}^{t-1} \lVert \bm{C}(u) - \bm{C}(u+1)\rVert_{\infty}} \\
    & \leq \frac{\nu \cdot A_{\delta,n,m}/\sqrt{r_{\tilde{k}}} + \sum_{u=t-r_{\tilde{k}}+1}^{t-1} \lVert \bm{C}(u) - \bm{C}(u+1)\rVert_{\infty}}{A_{\delta,n,m}/\sqrt{r^*} + \sum_{u=t-r^*+1}^{t-1} \lVert \bm{C}(u) - \bm{C}(u+1)\rVert_{\infty}} \\
    &\leq \nu \sqrt{\frac{r^*}{r_{\tilde{k}}}} + \frac{\sum_{u=t-r_{\tilde{k}}+1}^{t-1} \lVert \bm{C}(u) - \bm{C}(u+1)\rVert_{\infty}}{\sum_{u=t-r^*+1}^{t-1} \lVert \bm{C}(u) - \bm{C}(u+1)\rVert_{\infty}}\\
    &\leq   \nu/\gamma_m + 1
\end{align*}
where the second inequality is due to \eqref{eq:useful-drift-inequality}, and the last inequality is due to the definition of $r_{\tilde{k}}$. We can observe that $1+\nu/\gamma_m  = 1 + \frac{2\beta+2}{\gamma_m(1-\gamma_M)} \leq \Phi_{\mathcal{R},\beta}$ and this concludes the first part of the proof.

We consider case $(b)$. Since the algorithm stopped at iteration $\hat{k} < m$, the If condition of Line~4 is false during this iteration, and due to \Cref{prop:threshold-more}, we have that:
\begin{align}
\label{eq:tmpbella}
    \sum_{u=t-r^*+1}^{t-1} \lVert \bm{C}(u) - \bm{C}(u+1)\rVert_{\infty} \geq \sum_{u=t-r_{\hat{k}+1}+1}^{t-1} \lVert \bm{C}(u) - \bm{C}(u+1)\rVert_{\infty} \geq A_{\delta,n,m}\beta/\sqrt{r_{\hat{k}}} \enspace .
\end{align}
We obtain:
\begin{align*}
    \frac{\mathcal{B}(r_{\hat{k}})}{\mathcal{B}^*} &=\frac{\nu \cdot A_{\delta,n,m}/\sqrt{r_{\hat{k}}} + \left\lVert \bm{C}(t) - \bm{C}^{[r_{\hat{k}}]}(t) \right\rVert_{\infty}}{A_{\delta,n,m}/\sqrt{r^*} + \sum_{u=t-r^*+1}^{t-1} \lVert \bm{C}(u) - \bm{C}(u+1)\rVert_{\infty}} \\
    &\leq \frac{\nu \cdot A_{\delta,n,m}/\sqrt{r_{\hat{k}}} + \sum_{u=t-r_{\hat{k}}+1}^{t-1} \lVert \bm{C}(u) - \bm{C}(u+1)\rVert_{\infty}}{A_{\delta,n,m}/\sqrt{r^*} + \sum_{u=t-r^*+1}^{t-1} \lVert \bm{C}(u) - \bm{C}(u+1)\rVert_{\infty}} \\
    &\leq \frac{\nu \cdot A_{\delta,n,m}/\sqrt{r_{\hat{k}}}}{\sum_{u=t-r^*+1}^{t-1} \lVert \bm{C}(u) - \bm{C}(u+1)\rVert_{\infty}} + \frac{\sum_{u=t-r_{\hat{k}}+1}^{t-1} \lVert \bm{C}(u) - \bm{C}(u+1)\rVert_{\infty}}{\sum_{u=t-r^*+1}^{t-1} \lVert \bm{C}(u) - \bm{C}(u+1)\rVert_{\infty}} \\
    &\leq  \frac{\nu}{\beta}+1 \enspace .
\end{align*}
where in the last inequality we used \eqref{eq:tmpbella} and the fact that $r_{\hat{k}} \leq r^*$. We finally observe that $\nu/\beta+1= 1+\frac{2\beta+2}{(1-\gamma_M)\beta} \leq \Phi_{\mathcal{R},\beta}$, and this concludes the proof.
\end{proof}

We can finally prove \Cref{thm:main-2} as a simple corollary of \Cref{lemma:main-result}.

\begin{proof}[Proof of \Cref{thm:main-2}]
Let $\epsilon$ be the guarantee of \Cref{lemma:main-result}. If we let $\hat{C}$ be the matrix of Line~7 of the algorithm, we have that with probability at least $1-\delta$, it holds:
\begin{align*}
    \left\lVert \hat{\bm{C}} - \bm{C}(t)\right\rVert_{\infty} \leq \epsilon
\end{align*}
Lines 8-12 of the algorithm implement the procedure of \cite{bonald2017minimax} that attains the guarantee of \Cref{prop:covariance-to-expertise}. Hence, we have that:
\begin{align*}
\lVert \hat{\bm{p}} - \bm{p}(t)\rVert_{\infty} &\leq \frac{5\epsilon}{2\tau^2} \\
&\leq  \frac{5\Phi_{\mathcal{R},\beta}}{2\tau^2} \min_{r \in [r_1,\min(t,r_m)]} \left(  \frac{A_{\delta,n,m}}{\sqrt{r}} + \sum_{u=t-r+1}^{t-1} \lVert \bm{C}(u) - \bm{C}(u+1)\rVert_{\infty}\right) \enspace .
\end{align*}
The statement immediately follows by using \Cref{apx-prop:drift-covariance-accuracy}.
\end{proof}

\section{Relaxing the Conditional Independence Assumption}
\label{app:relaxation}
The analysis of our algorithm uses the conditional independence assumption for the error of the weak labelers (\Cref{assu:errors-independents}). In this section, we explore a possible relaxation of this assumption based on previous work that handles the general case where the weak labelers are arbitrarily correlated \cite{mazzetto:aistats21}. We propose a variant of our method that can be adopted in this setting. 

At the time step $t$, consider the random vector 
\begin{align*}
    \bm{a}(t) = \left( \mathbf{1}_{\ell_1(x) = y(x)}, \ldots, \mathbf{1}_{\ell_n(x) = y(x)} \right) \hspace{20pt}  \mbox{ where } x \sim D_t
\end{align*}
Indeed, we have that $\Exp \bm{a}(t) = \bm{p}(t)$. The conditional independence assumption allows us to factorize the distribution of $\bm{a}(t)$, i.e., for any $\bm{a} \in \{0,1\}^n$, we have that:
\begin{align}
\label{eq:factorization}
\Pr( \bm{a}(t) = \bm{a}) =  \prod_{i=1}^n \Pr( \bm{a}(t)_i = \bm{a}_i) \enspace .
\end{align}
The factorization \eqref{eq:factorization} is the property that is exploited by the methods that are based on this conditional independence assumption, and it is also used to define the optimal aggregation rule given by \eqref{eq:w^*}.

If we relax \Cref{assu:errors-independents}, Equation~\eqref{eq:factorization} no longer holds, and in general, the optimal aggregation rule depends on the dependencies between the weak labelers' errors. In particular, the previous work \cite{mazzetto:aistats21} shows that in the worst-case it is not possible to improve upon the most accurate weak labeler, without any further assumption on the errors' dependencies (a clear example is when $\ell_1 = \ldots = \ell_n$). 

The previous work \cite{mazzetto:aistats21} provides a method to determine a subset of weak labelers whose majority vote has provably bounded error, without introducing any assumption on the dependencies between the weak labelers' error. The idea is to leverage the information about the correlation to determine a subset of weak labelers who make errors in different parts of the distribution domain so that their majority vote can increase their aggregate performance. This method relies on the following quantities: $(1)$ the correlation between the weak labelers outputs $\bm{C}(t)$, which can be estimated only using unlabeled data; and $(2)$ the weak labelers' accuracies $\bm{p}(t)$. As we will see, we can use our adaptive method to maintain an estimate of the correlation matrix in a non-stationary setting.

With access to $\bm{p}(t)$ and $\bm{C}(t)$, the worst-case error of the majority vote for the weak labelers $\{ \ell_1,\ldots,\ell_n \}$ at time $t$ can be computed as a solution of the following linear program: 

\begin{align*}
    (*) \hspace{30pt} \max&\sum_{\bm{a} \in \{0,1\}^{n} : \lVert \bm{a}\rVert_1 < n/2} p_{\bm{a}} & \\
    &(a) \hspace{5pt} \sum_{\bm{a} \in \{0,1\}^n : a_i=1 }p_{ \bm{a}} = \bm{p}(t)_i  \hspace{10pt} & \mbox{ for }   i = 1,\ldots,n \\
    &(b) \hspace{5pt} \sum_{\bm{a} \in \{0,1\}^n : a_i = a_j}  p_{ \bm{a}}   = (\bm{C}(t)_{ij}+1)/2 \hspace{5pt} &\mbox{ for } i \neq j \\
    &(c) \hspace{5pt} \sum_{\bm{a}}p_{ \bm{a}} = 1 &\\
    &(d) \hspace{5pt}0 \leq p_{\bm{a}}\leq 1 \hspace{5pt} &\forall \bm{a}
\end{align*}
As described in \cite{mazzetto:aistats21}, the optimization problem (*) can be used as a subroutine to find a subset of weak labelers which has provably a small worst-case error for their majority vote. However, the optimization problem (*) requires access to the unknown quantities $\bm{p}(t)$ and $\bm{C}(t)$. 

In many practical applications, it is realistic to assume that the weak labelers provide a significantly stronger signal than noise, i.e. \Cref{assu:bias} holds for a value of $\tau \in (0,1/2)$ that is far from zero. Then it is possible to replace constraint $(a)$ with $\sum_{\bm{a} \in \{0,1\}^n : a_i=1 }p_{ \bm{a}} \geq \frac{1}{2} + \tau$. In particular, for each weak labeler $\ell_i$, we can use a different value $\tau_i$ to reflect our believe on how much this weak labeler is accurate \cite{arachie:aaai19,arachie:jmlr21}.

We leverage our adaptive method to keep track of the matrix $\bm{C}(t)$. Even in a non-stationary setting, where we only have access to a single sample from $P_t$, we can estimate $\bm{C}(t)$ from the samples $X_1,\ldots,X_t$ using \Cref{alg:finalalgorithm}. In fact, in our analysis we show that the matrix $\bm{\hat{C}}$ computed in Line~7 of our algorithm achieves a near-optimal trade-off between statistical error and drift error (\Cref{lemma:main-result}), providing an empirical estimate of $\bm{C}(t)$ in a drift setting.

Thus, we can obtain an algorithm for the case where the weak labelers' errors are not conditionally independent, and they can be arbitrarily correlated. At time $t$, we use \Cref{alg:finalalgorithm} to estimate $\bm{C}(t)$ with $\bm{\hat{C}}$. Then, we use the information on the correlation matrix to identify a subset of weak labelers whose majority vote is accurate. In particular, we use the algorithm described in \cite{mazzetto:aistats21} which is based on solving multiple instances of the optimization problem $(*)$.


\end{document}


%

%

\onecolumn
\aistatstitle{Instructions for Paper Submissions to AISTATS 2025: \\
Supplementary Materials}

\section{FORMATTING INSTRUCTIONS}

To prepare a supplementary pdf file, we ask the authors to use \texttt{aistats2025.sty} as a style file and to follow the same formatting instructions as in the main paper.
The only difference is that the supplementary material must be in a \emph{single-column} format.
You can use \texttt{supplement.tex} in our starter pack as a starting point, or append the supplementary content to the main paper and split the final PDF into two separate files.

Note that reviewers are under no obligation to examine your supplementary material.

\section{MISSING PROOFS}

The supplementary materials may contain detailed proofs of the results that are missing in the main paper.

\subsection{Proof of Lemma 3}

\textit{In this section, we present the detailed proof of Lemma 3 and then [ ... ]}

\section{ADDITIONAL EXPERIMENTS}

If you have additional experimental results, you may include them in the supplementary materials.

\subsection{The Effect of Regularization Parameter}

\textit{Our algorithm depends on the regularization parameter $\lambda$. Figure 1 below illustrates the effect of this parameter on the performance of our algorithm. As we can see, [ ... ]}

\vfill